\theoremstyle{plain}
\newtheorem{theorem}{Theorem}
\newtheorem{proposition}[theorem]{Proposition}
\theoremstyle{definition}
\theoremstyle{remark}
\DeclareMathOperator*{\argmax}{arg\,max}
\DeclareMathOperator*{\argmin}{arg\,min}
\newcommand{\xmark}{\ding{55}}
\DeclareMathOperator{\real}{\mathbb{R}}
\newcommand\nnodes{n}
\newcommand\nfeatures{d}
\newcommand\nclass{C}
\newcommand\graph{\mathcal{G}}
\newcommand\adj{\mathbf{A}}
\newcommand\features{\mathbf{X}}
\def\eqref#1{equation~\ref{#1}}
\def\floor#1{\lfloor #1 \rfloor}
\def\1{\bm{1}}
\def\vs{{\bm{s}}}
\def\vy{{\bm{y}}}
\def\evw{{w}}
\def\mA{{\bm{A}}}
\def\mC{{\bm{C}}}
\def\mD{{\bm{D}}}
\def\mH{{\bm{H}}}
\def\mI{{\bm{I}}}
\def\mL{{\bm{L}}}
\def\mP{{\bm{P}}}
\def\mS{{\bm{S}}}
\def\mT{{\bm{T}}}
\def\mV{{\bm{V}}}
\def\mX{{\bm{X}}}
\DeclareMathAlphabet{\mathsfit}{\encodingdefault}{\sfdefault}{m}{sl}
\SetMathAlphabet{\mathsfit}{bold}{\encodingdefault}{\sfdefault}{bx}{n}
\def\gA{{\mathcal{A}}}
\def\gB{{\mathcal{B}}}
\def\gG{{\mathcal{G}}}
\def\gI{{\mathcal{I}}}
\def\sN{{\mathbb{N}}}
\def\emP{{P}}
\def\emS{{S}}
\newcommand{\R}{\mathbb{R}}
\newcommand{\softmax}{\mathrm{softmax}}
\title{Adversarial Training for Graph Neural Networks: Pitfalls, Solutions, and New Directions}
\author{%
  Lukas Gosch$^{1*}$, Simon Geisler$^{1*}$, Daniel Sturm$^{1*}$, Bertrand Charpentier$^{1}$,\\\textbf{Daniel Zügner}$^{1,2}$\textbf{, Stephan Günnemann}$^1$ \\
  $^1$Department of Computer Science \& Munich Data Science Institute \\ Technical University of Munich \\
  $^2$Microsoft Research\\
  \texttt{\{l.gosch, s.geisler, da.sturm, s.guennemann\}@tum.de$\,$|$\,$dzuegner@microsoft.com}
}
\begin{document}

\maketitle

\def\thefootnote{*}\footnotetext{Equal contribution.}\def\thefootnote{\arabic{footnote}}
\begin{abstract}
Despite its success in the image domain, adversarial training did not (yet) stand out as an effective defense for Graph Neural Networks (GNNs) against graph structure perturbations. In the pursuit of fixing adversarial training  \emph{(1)} we show and overcome fundamental theoretical as well as practical limitations of the adopted graph learning setting in prior work; \emph{(2)} we reveal that flexible GNNs based on learnable graph diffusion are able to adjust to adversarial perturbations, while the learned message passing scheme is naturally interpretable; \emph{(3)} we introduce the first attack for structure perturbations that, while targeting multiple nodes at once, is capable of handling global (graph-level) as well as local (node-level) constraints.
Including these contributions, we demonstrate that adversarial training is 
a state-of-the-art defense against adversarial structure perturbations.\footnote{Project page: \url{https://www.cs.cit.tum.de/daml/adversarial-training/}}

\end{abstract}

\section{Introduction}

Adversarial training has weathered the test of time and stands out as one of the few effective measures against adversarial perturbations. While this is particularly true for numerical input data like images \cite{madry2018}, it is not yet established as an effective method to defend predictions of Graph Neural Networks (GNNs) against graph structure perturbations.

\begin{wrapfigure}[10]{r}{0.455\textwidth}
    \vspace{-24pt}
    \centering
    \includegraphics[width=0.97\linewidth]{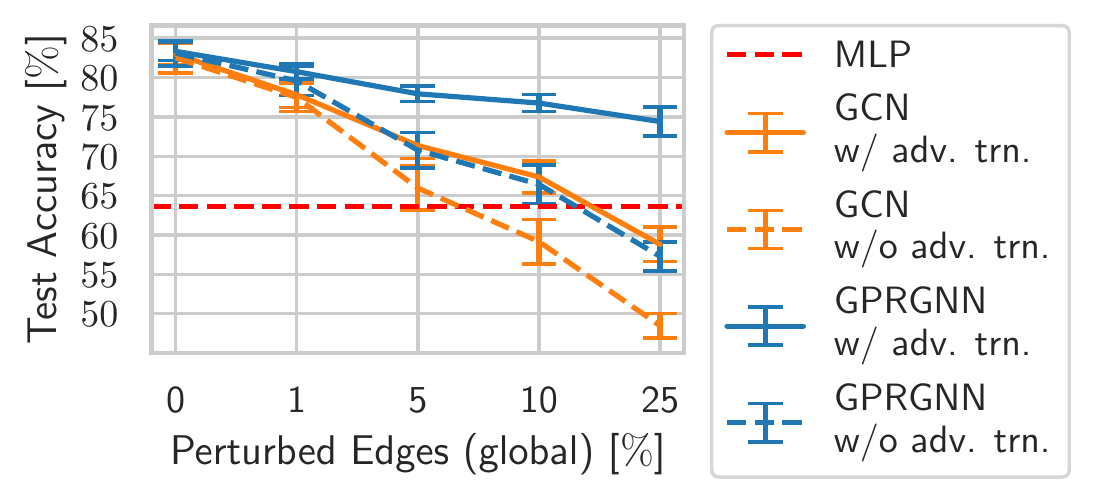}
    \caption{%
    Robust diffusion (GPRGNN) vs.\ GCN on (inductive) Cora-ML. We report adversarial training %
    and standard training.}
    \label{fig:starplot}
\end{wrapfigure}

Although previous work reported some robustness gains by using adversarial training in GNNs \citep{PGD, xu_general_AT_framework}, closer inspection highlights two main shortcomings:
(i) their learning setting leads to a biased evaluation and (ii) the studied architectures seem to struggle to learn robust representations.

\emph{(i) Learning setting.} In the previously studied transductive learning settings (see \Cref{tab:related_work}), clean validation and test nodes are known during training. Thus, \emph{perfect robustness} can be achieved by memorizing the training graph. This can lead to a false impression of robustness. Indeed, we find that the gains of the adversarial training approach from \citet{PGD} largely stem from exploiting this flaw. Motivated by this finding, we revisit adversarial training for node classification under structure perturbations in a 
fully 
inductive setting (i.e., validation/test nodes are excluded during training).
Thus, our results do not suffer from the same evaluation pitfall and pertain to a more challenging and realistic scenario. 

\emph{(ii) GNN architectures.} The studied GNNs like GCN \citep{gcn} or APPNP \citep{appnp} all seem to share the same fate as they have a limited ability to adjust their message passing to counteract adversarial perturbations. Instead, we propose to use flexible message-passing schemes based on \emph{learnable diffusion}. The main motivation behind this choice is the ability to approximate any spectral graph filter. Thus, adversarial training may choose the most robust filter that achieves a competitive training loss. Thereby, we significantly improve robustness compared to previous work, while yielding an interpretable message-passing scheme, and making the evaluation bias in the transductive setting (i) apparent.

\emph{More realistic perturbations.} Inspecting robust diffusion indicates that previously studied perturbation sets \citep{PGD, xu_general_AT_framework} are overly permissive and unrealistic. Prior work only constrained the \emph{global} (graph-level) number of inserted and removed edges, despite studying node classification, where predictions are inherently \emph{local} (node-level). As a result, commonly, an adversary has been allowed to add or remove a number of edges that exceeds the average node degree by 100 times or more, providing ample leeway for a complete change of many node neighborhoods through rewiring. E.g., on Cora-ML 5\% edge-changes correspond to $798$ changes, but the average degree is 5.68.
To prevent such degenerate perturbations \citep{overrobustness}, we propose Locally constrained Randomized Block Coordinate Descent (LR-BCD), the first attack that, while targeting multiple nodes at once, is able to constrain \emph{both} local perturbations per node and global ones. Even though local constraints are well-studied for attacks targeting a single node \citep{nettack}, surprisingly, there was no attack incorporating these for jointly attacking a set of nodes at once. Thus, LR-BCD fills an important gap in the literature, while being efficient and effective.

Addressing the aforementioned points, we substantially boost empirical and certifiable adversarial robustness up to the level of state-of-the-art defenses for GNNs. For example, in \Cref{fig:starplot}, we show a 4-fold increased robustness over standard training (measured in decline in accuracy after an attack).

\textbf{Contributions.} \textbf{(1)} We show theoretically and empirically that in the transductive learning setting previously studied for adversarial training, one can trivially achieve perfect robustness. We show that the full inductive setting does not have this limitation and, consequently, revisit adversarial training in this setting (see \Cref{sec:learning_setting}). \textbf{(2)} By leveraging more flexible GNN architectures based on learnable diffusion, we significantly improve upon the robustness under adversarial training 
(see \Cref{sec:robust_diffusion}).
\textbf{(3)} We implement more realistic adversaries for structure perturbations with a novel attack that constrains perturbations globally and locally for each node in the graph (see \Cref{sec:adv_train_local}).

\section{Learning Settings: Transductive vs. Inductive Adversarial Training}
\label{sec:learning_setting}

We first give background on adversarial training and self-training. Then, in \Cref{sec:transductive_setting}, we discuss the transductive learning setting and its shortcomings in contrast to inductive learning in the context of robust generalization. Following previous work, we focus on node classification and assume we are given an undirected training graph $\graph=(\features, \adj)$ consisting of $\nnodes$ nodes of which $m$ are labeled. By $\adj \in \{0, 1\}^{\nnodes \times \nnodes}$ we denote the adjacency matrix, by $\features \in \real^{\nnodes \times \nfeatures}$ the feature matrix, and by $y_i\in \{1, ..., \nclass\}$ the label of a node $i$, summarized for all nodes in the vector $\vy$. %

\textbf{Adversarial Training.} \label{sec:adv_training}
Adversarial training optimizes the parameters of a GNN $f_\theta$ on an adversarially perturbed input graph with the goal to increase robustness against test-time (evasion) attacks (i.e., attacks against the test graph after training). The objective during training is
\begin{equation}\label{eq:advtrain}
     \argmin_\theta \max_{\tilde{\gG} \in \mathcal{B}(\gG)} \sum_{i = 1}^{m} \ell(f_\theta(\tilde{\gG})_i, y_i)
\end{equation}
where $f_\theta(\tilde{\gG})_i$ is the GNN-prediction for node $i$ based on the perturbed graph $\tilde{\gG}$, $\ell$ is a chosen loss function, and $\gB(\gG)$ is the set of allowed perturbed graphs given the clean graph $\gG$. As in previous work, we assume that an adversary maliciously changes the graph structure by inserting or deleting edges. Then, $\gB(\gG)$ can be defined by restricting the total number of malicious edge changes in the graph to $\Delta \in \mathbb{N}_{\ge 0}$ (called a global constraint) and/or restricting the total number of edge changes in each neighborhood of a node $i$ to $\Delta_i^{(l)}\in \mathbb{N}_{\ge 0}$ (called local constraints, see \Cref{sec:adv_train_local}). Solving \Cref{eq:advtrain} is difficult in practice. Thus, we approximate it with alternating first-order optimization, i.e., we 
approximately solve \Cref{eq:advtrain} 
by training \(f_\theta\) not on the clean graph \(\gG\), but on a perturbed one \(\tilde{\gG}\) that is newly crafted in each iteration through attacking the current model (see \Cref{alg:adv-training}).

\textbf{Self-training} is an established semi-supervised learning strategy \citep{selftraining} that leverages unlabeled nodes via pseudo-labeling and is applied by previous works on adversarial training for GNNs \citep{PGD, xu_general_AT_framework}. For this, first a model \(f_{\theta'}\) is trained regularly using the \(m\) labeled nodes, minimizing \(\sum_{i = 1}^{m} \ell(f_\theta (\mathcal{G})_i, y_i)\). 
Thereafter, a new (final) model $f_\theta$ is randomly initialized and trained on all nodes, using the known labels for the training nodes and pseudo-labels generated by $f_{\theta'}$ for the unlabeled nodes (see \Cref{alg:self-training}). Thus, if including self-training, \Cref{eq:advtrain} changes to %
\begin{equation}\label{eq:adv_self_training}
\argmin_\theta \max_{\tilde{\gG} \in \mathcal{B}(\gG)} \biggl\{
\sum_{i = 1}^{m} \ell(f_\theta(\tilde{\gG})_i, y_i) +  \sum_{i = m + 1}^{n} \ell(f_\theta(\tilde{\gG})_i, f_{\theta'}(\gG)_i) \biggl\}
\end{equation}

\subsection{Transductive Setting} \label{sec:transductive_setting}

Transductive node classification is a common and well-studied graph learning problem 
\citep{book_semi-supervised, gcn}. It aims to complete the node labeling of the given and partially labeled graph $\gG$. 
More formally, the goal at test time is to accurately label the already known $n-m$ unlabeled nodes, i.e., to achieve minimal %
\begin{equation}\label{eq:transductive_loss}
    \mathcal{L}_{0/1}(f_\theta) = \sum_{i = m+1}^{n} \ell_{0/1}(f_\theta(\mathcal{G})_i, y_i)
\end{equation}
This is in contrast to an inductive setting, where at test time a new (extended) graph $\gG'$ (with labels $\vy'$) is sampled conditioned on the training graph \(\mathcal{G}\). Then, the goal is to optimally classify the newly sampled nodes $\gI$ \emph{in expectation} over possible $(\gG', \vy')$-pairs, i.e., to minimize $\mathbb{E} \bigl[ \sum_{i \in \gI} \ell_{0/1}(f_\theta(\mathcal{G}')_i, y_i') \bigr]$.
That is,
in transductive learning the $n\!-\!m$ unlabeled nodes known during training are considered test nodes, but in an inductive setting, new unseen test nodes are sampled. 
For additional details on the inductive setting, see Appendix \ref{app:inductive_setting_formal}.

Many common graph benchmark datasets such as Cora, CiteSeer, or Pubmed \citep{datasets1, ogb2020}, are designed for transductive learning. Naturally, this setting has been adopted as a starting point for many works on robust graph learning \citep{nettack, PGD, GNNBook-ch8-gunnemann}, including all of the adversarial training literature (see \Cref{sec:related_work}).
Here, the latter is concerned with defending against test-time (evasion) attacks in transductive node classification, i.e., it is assumed that after training an adversary can select a maliciously changed graph $\mathcal{\tilde{G}}$ out of the set of permissible perturbations
$\mathcal{B}(\mathcal{G})$, with the goal to maximize misclassification:
\begin{equation}\label{eq:transductive_robust_loss}
    \mathcal{L}_{adv}(f_\theta) = \max_{\tilde{\mathcal{G}} \in B(\mathcal{G})} \sum_{i = m+1}^{n} \ell_{0/1}(f_\theta(\mathcal{\tilde{G}})_i, y_i)
\end{equation}
Since the adversary changes the graph at test time (i.e., the changes are not known during training), this, strictly speaking, corresponds to an inductive setting \citep{book_semi-supervised}, where the only change considered is adversarial. %
Now, the goal of \emph{adversarial training} is to find parameters $\theta$ minimizing $\mathcal{L}_{adv}(f_\theta)$, corresponding to the optimal (robust) classifier under attack \citep{madry2018}.

\subsubsection{Theoretical Limitations} \label{sec:transductive_theoretic_limits}

Defending against test-time (evasion) attacks in a transductive setting comes with conceptual limitations. In the case of graph learning, the test nodes are already known at training time and the only change is adversarial. Hence, we can design a defense algorithm $\mathcal{A}$ achieving \emph{perfect robustness} without trading off accuracy through \emph{memorizing} the (clean) training graph.%

Formally, $\mathcal{A}$ takes a GNN $f_\theta$ as input and returns a slightly modified model $\tilde{f}_\theta$ corresponding to $f_\theta$ composed with a preprocessing routine (memory) that, during inference, replaces the perturbed graph $\mathcal{\tilde{G}}=(\mathbf{\tilde{X}}, \mathbf{\tilde{A}}) \in \mathcal{B}(\mathcal{G})$ with the clean graph $\mathcal{G}=(\mathbf{X}, \mathbf{A})$ known from training, resulting in \(\tilde{f}_\theta(\mathcal{\tilde{G}}) = f_\theta(\mathcal{G})\). In other words, $\tilde{f}_\theta$ ignores every change to the graph including those of the adversary. Trivially, this results in the same (clean) misclassification rate $\mathcal{L}_{0/1}(f_\theta) = \mathcal{L}_{0/1}(\tilde{f}_\theta)$ because $\tilde{f}_\theta$ and $f_\theta$ have the same predictions on the clean graph, but also perfect robustness, in the sense of $\mathcal{L}_{0/1}(\tilde{f}_\theta) = \mathcal{L}_{adv}(\tilde{f}_\theta)$, as the predictions are not influenced by the adversary. Thus, we can state:

\begin{proposition}\label{prop:mem}
    For transductive node classification, $\tilde{f}_\theta = \mathcal{A}(f_\theta)$ is a perfectly robust version of an arbitrary GNN $f_\theta$, in the sense of \(\mathcal{L}_{0/1}(f_\theta) = \mathcal{L}_{0/1}(\tilde{f}_\theta) = \mathcal{L}_{adv}(\tilde{f}_\theta)\).
\end{proposition}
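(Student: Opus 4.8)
The plan is to exhibit the algorithm $\mathcal{A}$ explicitly and then verify the three claimed equalities by unwinding the definitions. Concretely, I would define $\mathcal{A}$ so that $\tilde{f}_\theta := \mathcal{A}(f_\theta)$ is the composition $\tilde{f}_\theta := f_\theta \circ \pi_{\gG}$, where $\pi_{\gG}$ is the constant map sending \emph{any} graph (in particular any $\tilde{\gG} \in \gB(\gG)$) to the fixed clean training graph $\gG$. Since $\gG$ is available at training time in the transductive setting, this preprocessing routine is well-defined and implementable. By construction, $\tilde{f}_\theta(\gH) = f_\theta(\gG)$ for every graph $\gH$; this single identity is the engine behind all three equalities.

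The first step is to establish $\mathcal{L}_{0/1}(f_\theta) = \mathcal{L}_{0/1}(\tilde{f}_\theta)$. Here I would simply note that $\mathcal{L}_{0/1}$ (as defined in \Cref{eq:transductive_loss}) evaluates the model on the clean graph $\gG$, and $\tilde{f}_\theta(\gG) = f_\theta(\gG)$ by the construction above, so the per-node predictions, hence the summed $0/1$ losses, coincide term by term. The second step is to establish $\mathcal{L}_{0/1}(\tilde{f}_\theta) = \mathcal{L}_{adv}(\tilde{f}_\theta)$. Starting from the definition of $\mathcal{L}_{adv}$ in \Cref{eq:transductive_robust_loss}, for every $\tilde{\gG} \in \gB(\gG)$ we have $\tilde{f}_\theta(\tilde{\gG})_i = f_\theta(\gG)_i = \tilde{f}_\theta(\gG)_i$ for each test node $i$, so the inner sum is independent of the choice of $\tilde{\gG}$ and equals $\sum_{i=m+1}^{n} \ell_{0/1}(\tilde{f}_\theta(\gG)_i, y_i) = \mathcal{L}_{0/1}(\tilde{f}_\theta)$; the maximum over the (nonempty, since $\gG \in \gB(\gG)$) perturbation set is then attained at this common value. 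Chaining the two steps yields the full chain $\mathcal{L}_{0/1}(f_\theta) = \mathcal{L}_{0/1}(\tilde{f}_\theta) = \mathcal{L}_{adv}(\tilde{f}_\theta)$.

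There is essentially no analytic obstacle here — the content is definitional, and the only things to be careful about are (a) confirming that $\gG$ genuinely belongs to $\gB(\gG)$ so the $\max$ in $\mathcal{L}_{adv}$ ranges over a nonempty set (this holds because the empty perturbation, with $\Delta = 0$ edge changes, is always admissible), and (b) being explicit that $\mathcal{A}$ only uses information available at training time, so that $\tilde{f}_\theta$ is a legitimate defended model rather than one that cheats by accessing $\vy$ on test nodes. The ``hard part,'' if any, is purely expository: making clear that this is a genuine \emph{conceptual} pathology of the transductive evaluation protocol — perfect robustness is decoupled from any real generalization — rather than a loophole specific to one architecture or training scheme, which then motivates the inductive reformulation in the remainder of \Cref{sec:learning_setting}.
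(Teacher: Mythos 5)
Your proposal is correct and follows the same route as the paper: the paper defines $\mathcal{A}$ exactly as you do (compose $f_\theta$ with a preprocessing step that replaces any input graph with the memorized clean $\gG$, so $\tilde{f}_\theta(\tilde{\gG}) = f_\theta(\gG)$) and derives both equalities by the same definitional unwinding, with the first following from identical predictions on $\gG$ and the second from the inner sum in $\mathcal{L}_{adv}$ being constant over $\gB(\gG)$. Your added remark that the $\max$ ranges over a nonempty set is a harmless extra care point (nonemptiness alone suffices here; $\gG \in \gB(\gG)$ is only needed for \Cref{prop:adv_train_mem}).
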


However, what is probably most interesting about $\mathcal{A}$ is that we can use it to construct an \emph{optimal solution} to the otherwise difficult saddle-point problem $\min_\theta \mathcal{L}_{adv}(f_\theta)= \min_\theta \max_{\tilde{\mathcal{G}} \in B(\mathcal{G})} \mathcal{L}_{0/1}(f_\theta)$ arising in adversarial training. %
Formally, we state (proof see \Cref{sec:transductive_proof}):

\begin{proposition}\label{prop:adv_train_mem}
    Assuming we solve the standard learning problem \(\theta^* = \argmin_\theta \mathcal{L}_{0/1}(f_\theta)\) and that $\gG \in \gB(\gG)$. Then, $\tilde{f}_{\theta^*} = \mathcal{A}(f_{\theta^*})$ is an optimal solution to the saddle-point problem arising in (transductive) adversarial training, in the sense of $\mathcal{L}_{adv}(\tilde{f}_{\theta^*}) = \min_\theta \mathcal{L}_{adv}(\tilde{f}_\theta) \le \min_\theta \mathcal{L}_{adv}(f_\theta)$.
\end{proposition}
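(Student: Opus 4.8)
The plan is to reduce the statement to \Cref{prop:mem}, which already gives, for \emph{every} parameter vector $\theta$, the chain $\mathcal{L}_{0/1}(f_\theta) = \mathcal{L}_{0/1}(\tilde f_\theta) = \mathcal{L}_{adv}(\tilde f_\theta)$. The single fact I will use repeatedly is the identity $\mathcal{L}_{adv}(\tilde f_\theta) = \mathcal{L}_{0/1}(f_\theta)$, which turns the hard saddle-point objective, when evaluated on the memory-augmented family $\{\tilde f_\theta\}_\theta$, into the ordinary clean $0/1$ risk of the underlying family $\{f_\theta\}_\theta$.

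First I would establish the equality $\mathcal{L}_{adv}(\tilde f_{\theta^*}) = \min_\theta \mathcal{L}_{adv}(\tilde f_\theta)$. Minimizing over $\theta$ on both sides of the identity above gives $\min_\theta \mathcal{L}_{adv}(\tilde f_\theta) = \min_\theta \mathcal{L}_{0/1}(f_\theta)$; by the definition of $\theta^*$ the right-hand side equals $\mathcal{L}_{0/1}(f_{\theta^*})$, and applying \Cref{prop:mem} at $\theta = \theta^*$ rewrites this as $\mathcal{L}_{adv}(\tilde f_{\theta^*})$. Chaining the three equalities proves the claim, and incidentally shows that the minimum over the memory-augmented family is attained (at $\theta^*$) as soon as the clean problem's minimizer exists.

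Next I would prove the inequality $\min_\theta \mathcal{L}_{adv}(\tilde f_\theta) \le \min_\theta \mathcal{L}_{adv}(f_\theta)$. This is exactly where the hypothesis $\gG \in \gB(\gG)$ enters: for any $\theta$ the inner maximizing adversary may in particular leave the graph unperturbed, hence $\mathcal{L}_{adv}(f_\theta) = \max_{\tilde{\gG} \in \gB(\gG)} \sum_{i=m+1}^{n} \ell_{0/1}(f_\theta(\tilde{\gG})_i, y_i) \ge \sum_{i=m+1}^{n} \ell_{0/1}(f_\theta(\gG)_i, y_i) = \mathcal{L}_{0/1}(f_\theta)$. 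Taking the minimum over $\theta$ and substituting $\min_\theta \mathcal{L}_{0/1}(f_\theta) = \min_\theta \mathcal{L}_{adv}(\tilde f_\theta)$ from the previous step completes the argument.

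I do not expect a genuine obstacle: the proposition is a bookkeeping consequence of \Cref{prop:mem} together with feasibility of the clean graph. The only point deserving care is keeping track of which family each minimum ranges over — $\{\tilde f_\theta\}_\theta$ versus $\{f_\theta\}_\theta$ — and invoking \Cref{prop:mem} at the correct $\theta$; once the identity $\mathcal{L}_{adv}(\tilde f_\theta) = \mathcal{L}_{0/1}(f_\theta)$ is available, both the equality and the inequality follow purely from monotonicity of the minimum.
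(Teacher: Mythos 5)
Your proposal is correct and follows essentially the same route as the paper's proof: both derive the equality by chaining the identity $\mathcal{L}_{adv}(\tilde f_\theta) = \mathcal{L}_{0/1}(f_\theta)$ from \Cref{prop:mem} through the definition of $\theta^*$, and both obtain the final inequality from the lower bound $\mathcal{L}_{0/1}(f_\theta) \le \mathcal{L}_{adv}(f_\theta)$ implied by $\gG \in \gB(\gG)$. The only difference is the order in which the two halves are presented, which is immaterial.
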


In other words, at train time, from a GNN $f_{\theta^*}$ minimizing the clean error, we can construct a perfectly robust classifier $\tilde{f}_{\theta^*}$ minimizing the maximal misclassification rate under attack solely from memorizing the data available during training. 

Note that in a fully inductive setting due to the expectation in the losses, neither \Cref{prop:mem} nor \Cref{prop:adv_train_mem} hold, i.e., an optimal solution cannot be found through memorization (see Appendix \ref{app:inductive_setting_formal}). Thus, inductive node classification does not suffer from the same theoretical limitations.

\subsubsection{Empirical Limitations} \label{sec:transductive_results}

Even though prior work did not achieve perfect robustness, we show below that the reported gains from adversarial training by \citet{PGD} actually mostly stem from self-training, i.e., the "leakage" of information on the clean test nodes through pseudo-labeling. Then, we close the gap between theory and practice and show that perfect robustness, while preserving clean accuracy, can be achieved empirically through adversarial training -- effectively solving the learning setting in prior work.

\textbf{Self-training is the (main) cause for robustness in transductive learning.} In Figure \ref{fig:self_training_causes_rob}, we compare 
\begin{wrapfigure}[17]{l}{0.6\textwidth}
\vspace{-6pt}
\centering
\label{fig:test1}
\begin{minipage}[t]{0.475\linewidth}
  \centering
  \includegraphics[width=0.96\linewidth]{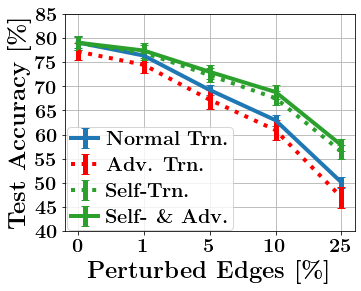}
  \caption{Robust test-accuracy of a GCN under different training schemes on Cora. Adversarial training uses a PGD-attack (10\% pert. edges). Most robustness gains are due to self-training.}
  \label{fig:self_training_causes_rob}
\end{minipage}%
\hfill
\begin{minipage}[t]{.475\linewidth}
  \centering
  \includegraphics[width=0.96\linewidth]{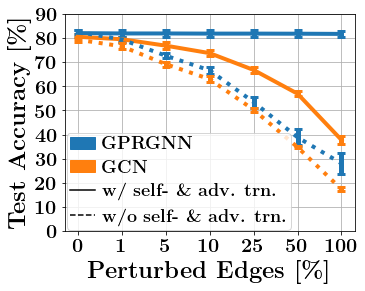}
  \caption{Adv. trained GPRGNN following \citet{PGD} 
  achieves perfect robustness on Cora. GCN baseline uses the same adv. training setup (training budget: 100\% of edges).}
  \label{fig:adv_trn_causes_overfitting}
\end{minipage}
\vspace{-0.3cm}
\end{wrapfigure}
the (robust) test accuracies of a GCN using different training schemes: $(i)$ normal training, $(ii)$ adversarial training, $(iii)$ self-training, and $(iv)$ pairing self- and adversarial training. Indeed, Figure \ref{fig:self_training_causes_rob} shows that the main robustness gain actually stems from self-training and \emph{not} adversarial training. Surprisingly, adversarial training without self-training can even \emph{hurt} robustness, while adversarial training paired with self-training only slightly increases robust test accuracy compared to using self-training alone. These results are independent of the used architecture, dataset, or attack strength, as we show in \Cref{app:self_trn_main_cause_robustness}. %

\textbf{Adversarial training causes overfitting.} For transductive node classification, it is also empirically possible to achieve perfect robustness while maintaining clean test accuracy by combining self- with adversarial training and using a more flexible, learnable diffusion model (GPRGNN) as introduced in \Cref{sec:robust_diffusion}.
In \Cref{fig:adv_trn_causes_overfitting}, we show the test accuracy under severe perturbations for an adversarially trained GCN~\citep{PGD} and GPRGNN, both trained using a very strong adversary and pseudo-labels derived from self-training. The performance of the adversarially trained GCN reduces rapidly, while GPRGNN achieves (almost constant) perfect robustness. Since the pseudo-labels are derived from the clean graph, the clean graph is leaked in the training process. This allows GPRGNN to mimic the behavior of an MLP and overfit to the pseudo-labels, i.e., memorize the (node-feature, pseudo-label)-pairs it has seen during training. In other words, it finds a trivial solution to this learning setting, making its limitations evident similar to the theoretic solution discussed in Section \ref{sec:transductive_theoretic_limits}. In \Cref{app:adv_trn_causes_overfitting} we show that we achieve perfect robustness not only if using the PGD attack as \citet{PGD}, but also for many different attacks. 

These findings question how to interpret the reported robustness gains of previous work, which all evaluate transductively and usually use self-training. Conceptually, being robust against test-time (evasion) attacks is most relevant if there can be a natural change in the existing data. Only then it is realistic to assume an adversary causing some of the change to be malicious and against which we want to be robust without losing our ability to generalize to new data. Therefore, in \Cref{sec:results} we revisit adversarial training in an inductive setting, avoiding the same evaluation pitfalls. Indeed, we find that using robust diffusion (see \Cref{sec:robust_diffusion}), we are not only capable of solving 
transductive robustness, %
but learn to robustly generalize to unseen nodes far better than previously studied models.

\section{Robust Diffusion: Combining Graph Diffusion with Adversarial Training
}\label{sec:robust_diffusion}

We propose to use learnable graph diffusion models able to approximate any graph filter in conjunction with adversarial training to obtain a \emph{robust diffusion}. The key motivation is to use more flexible GNN architectures for adversarial training than previous studies.  
We not only show that a robust diffusion significantly outperforms other models used in previous work (see \Cref{sec:results}), but it also allows for increased \emph{interpetability} as we can gain insights into the characteristics of such robustness-enhancing models from different perspectives.

\textbf{In fixed message passing} schemes of popular GNNs like GCN or APPNP, each layer can be interpreted as the \emph{graph convolution} \(g(\boldsymbol{\Lambda}) \circledast \mH  = \mV g(\boldsymbol{\Lambda}) \mV^\top \mH\) between a \emph{fixed} graph filter \(g(\boldsymbol{\Lambda})\) and transformed node attributes \(\mH = f_\theta(\mX)\) using an MLP \(f_\theta\). This convolution is defined w.r.t. the (diagonalized) eigenvalues \(\boldsymbol{\Lambda} \in \R^{n \times n}\) and eigenvectors \(\mV \in \R^{n \times n}\) of the Laplacian \(\mL' = \mI - \mD^{\nicefrac{-1}{2}} \adj \mD^{\nicefrac{-1}{2}}\) with diagonal degree matrix \(\mD\). Following the common reparametrization, instead of \(\mL'\), we use the ``normalized adjacency'' \(\mL = \mD^{\nicefrac{-1}{2}} \adj \mD^{\nicefrac{-1}{2}}\) or, depending on the specific model, \(\mathring{\mL} = \mathring{\mD}^{\nicefrac{-1}{2}} \mathring{\adj} \mathring{\mD}^{\nicefrac{-1}{2}}\)  where \(\mathring{\mA} = \mA + \mI\) with node degrees \(\mathring{\mD}\). Then, many spatial GNNs relate to the \(K\)-order polynomial approximation \(\mV g(\boldsymbol{\Lambda}) \mV^\top \mH \approx \sum_{k=0}^K \gamma_k \mL^k \mH\) with the \(K + 1\) diffusion coefficients \(\boldsymbol{\gamma} \in \R^{K + 1}\). Many GNNs stack multiple convolutions and add point-wise non-linearities.

Crucially, GCN's or APPNP's graph filter \(g(\boldsymbol{\Lambda})\) is fixed (up to scaling). Specifically, we obtain an MLP with $\boldsymbol{\gamma} = [1, 0, \dots, 0]$. If using \(\mathring{\mL}\), a GCN corresponds to $\boldsymbol{\gamma} = [0, -1, 0, \dots, 0]$, and in APPNP $\boldsymbol{\gamma}$ are the Personalized PageRank coefficients.

\begin{wrapfigure}[26]{r}{0.43\textwidth}
    \vspace{-26pt}
    \centering
    \includegraphics[width=0.8\linewidth]{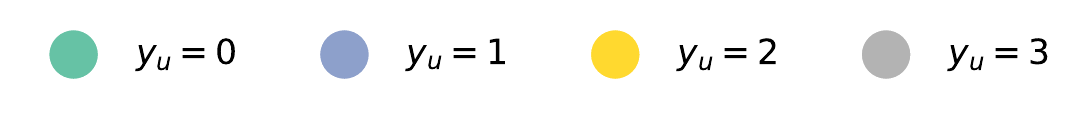}\\
    \vspace{-7pt}
    \resizebox{0.8\linewidth}{!}{
    \begin{tabular}{rl}
        \(T_{u,v}\) & \raisebox{-.35\height}{\includegraphics[width=1\linewidth]{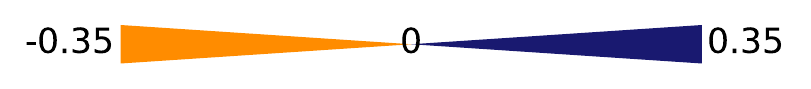}}
    \end{tabular}
    }
    
    \subfloat[Norm. adj. \(-\mathring{\mL}\)\label{fig:karate:a}]{\includegraphics[width=0.45\linewidth]{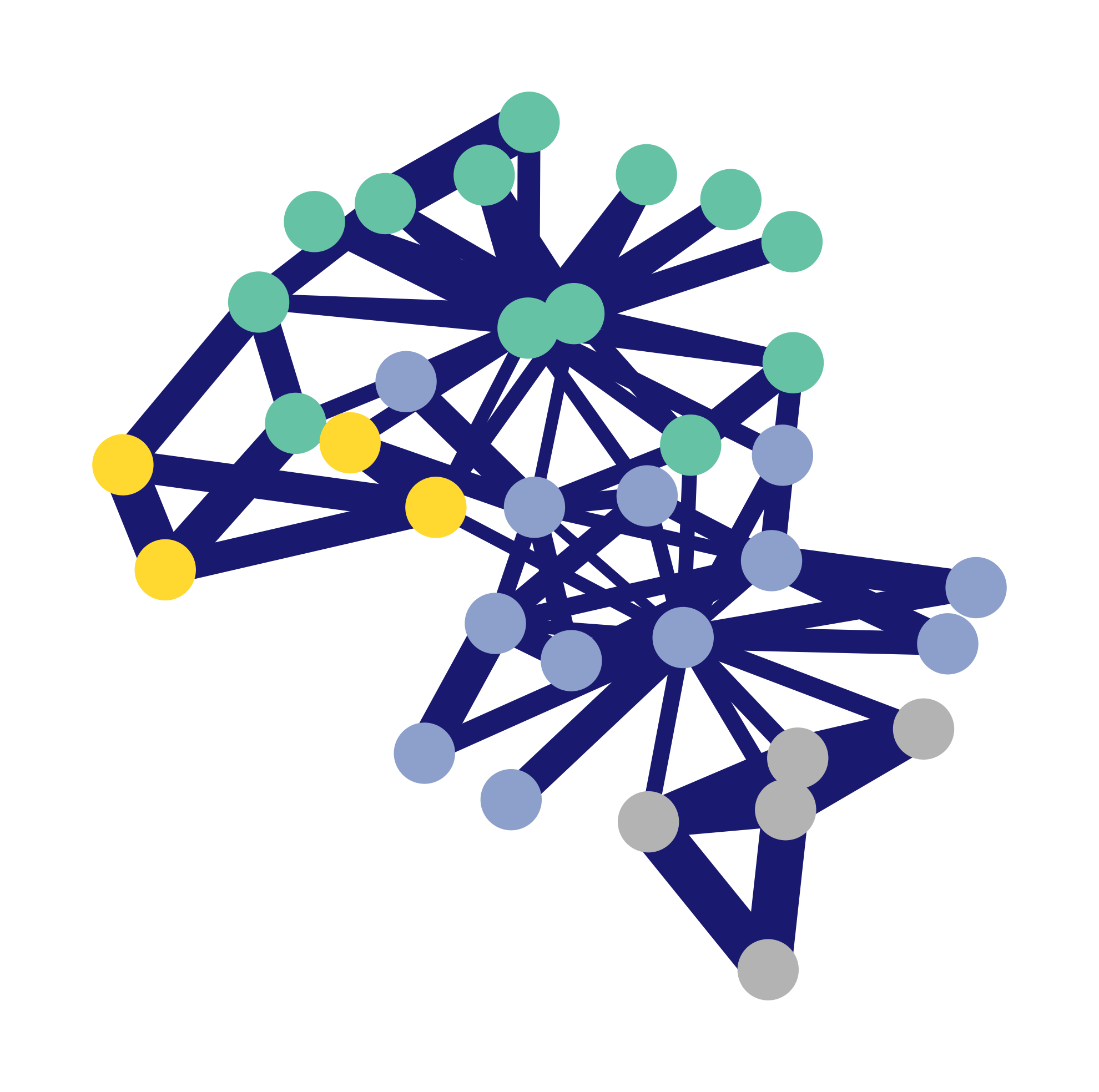}}
    \hfill
    \subfloat[\(\mT\) (no adv. trn.)\label{fig:karate:b}]{\includegraphics[width=0.45\linewidth]{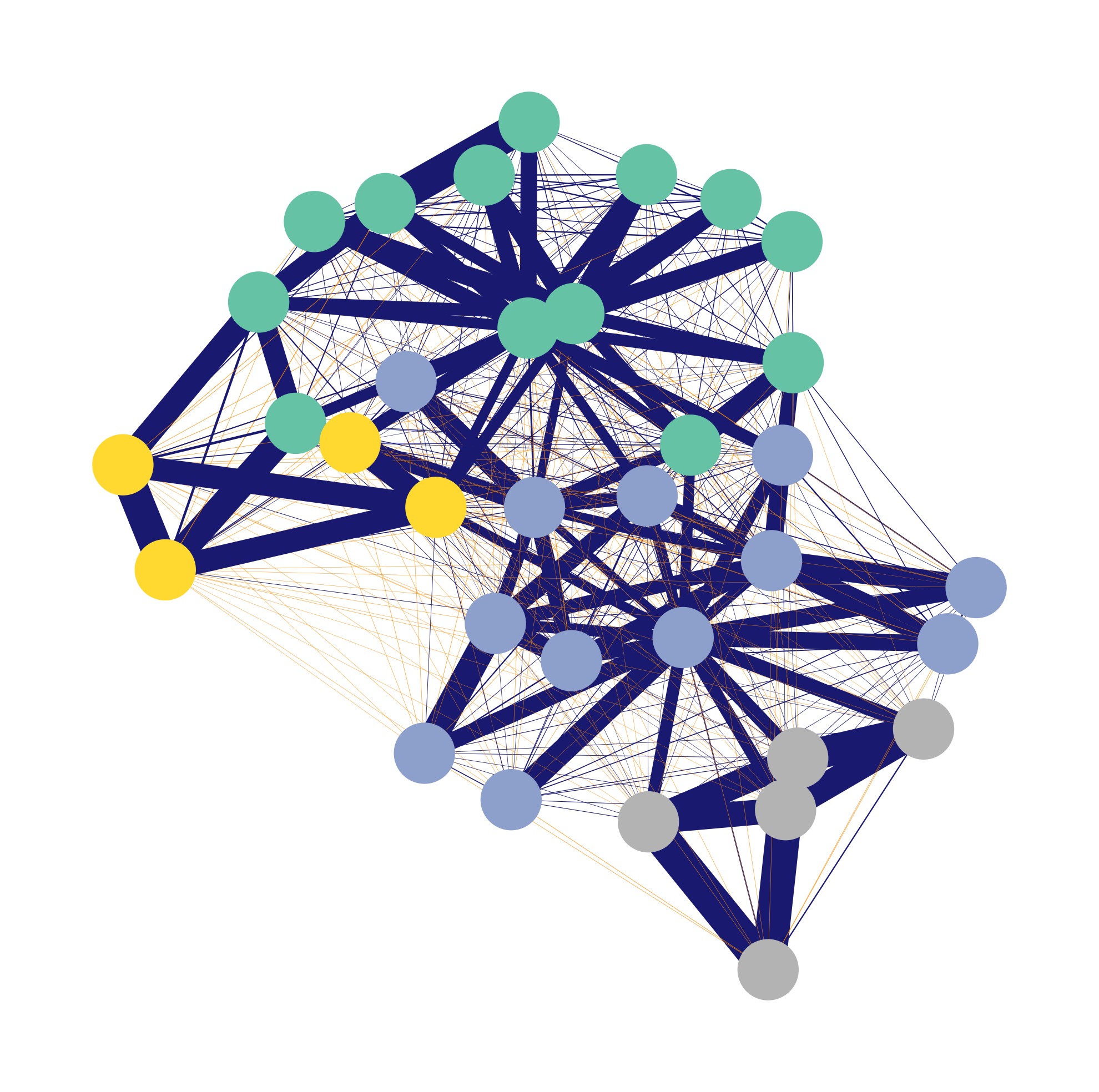}} 
    
    \subfloat[\(\mT\) (w/o local con.)\label{fig:karate:c}]{\includegraphics[width=0.45\linewidth]{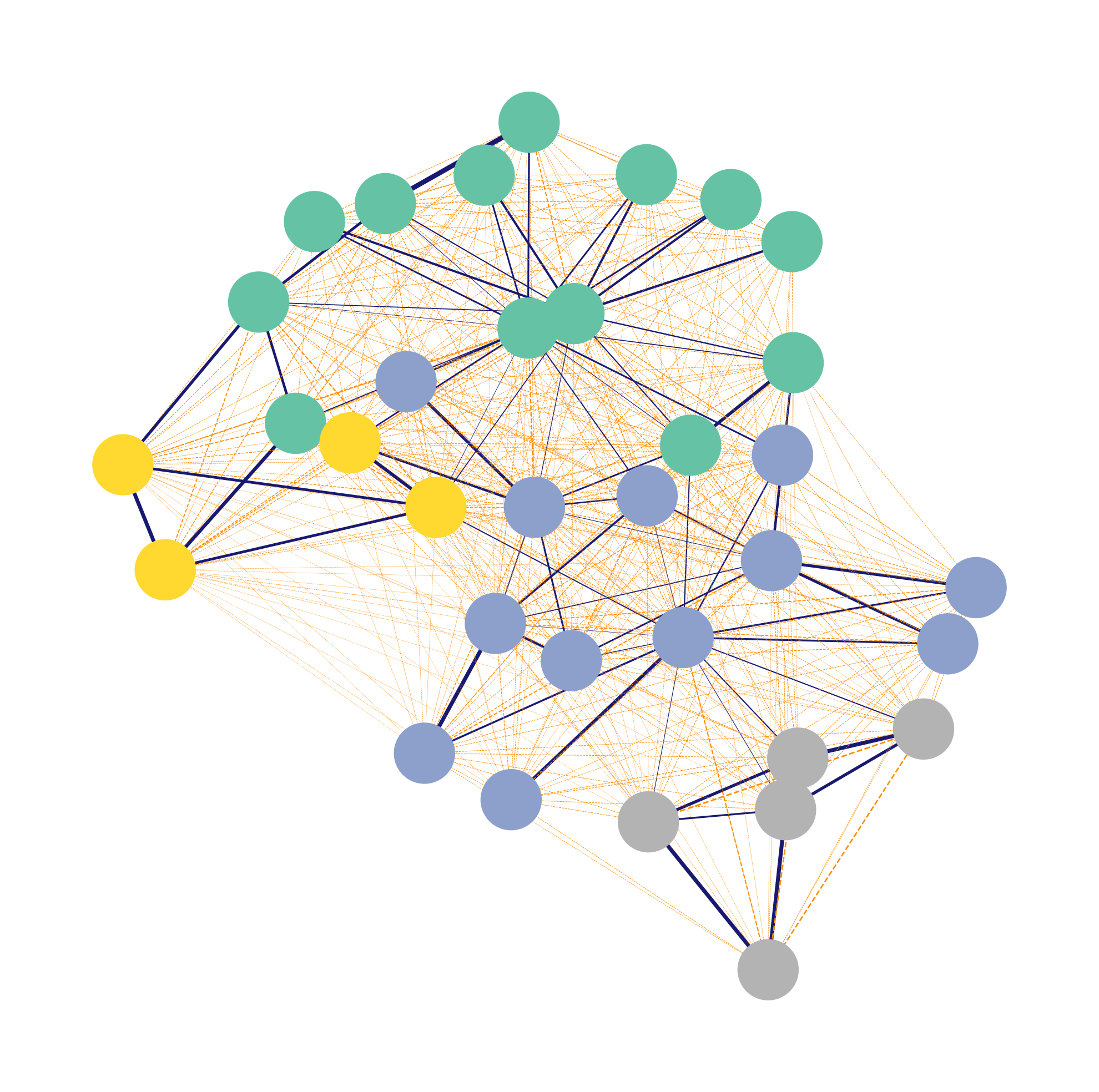}}
    \hfill
    \subfloat[\(\mT\) (w/ local con.)\label{fig:karate:d}]{\includegraphics[width=0.45\linewidth]{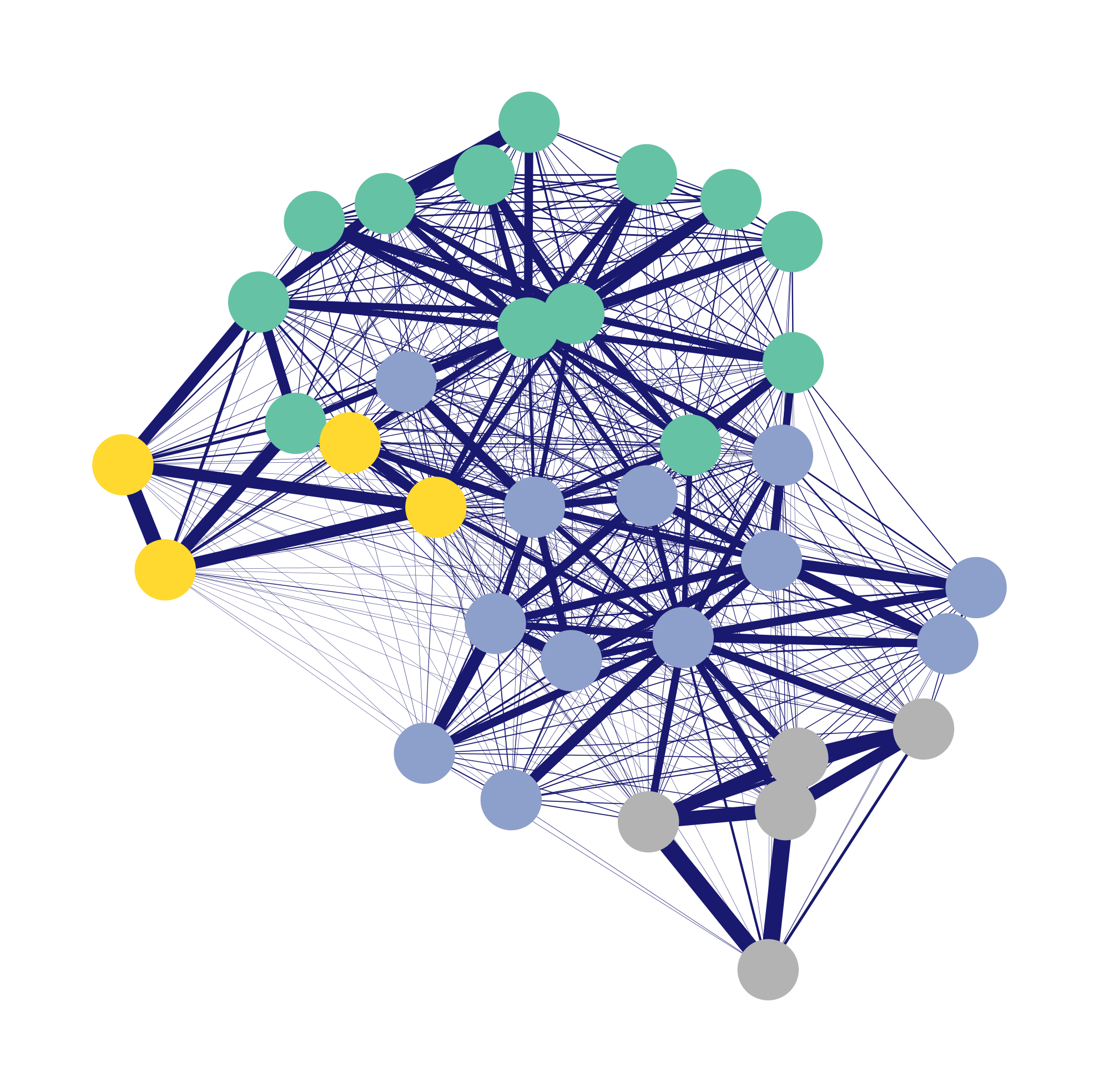}}
    \caption{Robust diffusion (GPRGNN) on Karate Club where the edge $(u, v)$ width encodes the diffusion coefficient $T_{u, v}$ learned during training. In (a) we show the normalized adjacency matrix. The other plots show the robust diffusion transition matrix: (b) w/o adversarial training, (c)  w/ adversarial training but w/o local constraints, and (d) w/ adversarial training and w/ local constraints.}
    \label{fig:karate}
\end{wrapfigure}
\textbf{Robust diffusion.} In contrast to prior work on robust graph learning, we do not solely use a static parametrization of %
\(g(\boldsymbol{\Lambda})\). Instead, we learn the graph filter, which corresponds to training diffusion coefficients \(\boldsymbol{\gamma}\). This allows the model to adapt the graph diffusion to the adversarial perturbations seen during training, i.e., we learn a \emph{robust diffusion}.

For this, the used architectures consist of two steps: \textbf{(1)} using an MLP to preprocess the node features, i.e., $\mH = f_\theta(\mX)$ where $\theta$ are learnable parameters; and then, \textbf{(2)} using a learnable diffusion computing the logits. For the learnable diffusion, we employ GPRGNN~\citep{gprgnn} that uses the previously introduced monomial basis: \(\softmax \big(\sum_{k=0}^K \gamma_k \mathring{\mL}^k \mH\big)\). Additionally, we study Chebyshev polynomials (see ChebNetII~\citep{chebynetii}) that are of interest due to their beneficial theoretical properties. ChebNetII can be expressed as \(\softmax \big(\sum_{k=0}^K \gamma_k \evw_k T_k(\mL) \mH\big)\) with extra weighting factor \(\evw_k = \nicefrac{2}{K - 1} \sum_{j=0}^K T_k(x_j)\). The Chebyshev basis is given via \(T_0(\mL) = \mI\), \(T_1(\mL) = \mL\), and \(T_k(\mL) = 2 \mL T_{k - 1}(\mL) - T_{k - 2}(\mL)\). The Chebyshev nodes \(x_j = \cos \big(\frac{j+1 / 2}{K+1} \pi \big), j=0, \ldots, K\) for \(\evw_k\) reduce the so-called Runge phenomenon~\citep{chebynetii}. Note, the resulting Chebyshev polynomial can be expressed in monomial basis (up to \(\mL\) vs. \(\mathring{\mL}\)) via expanding the \(T_k(\mL)\) terms and collecting the powers \(\mL^k\).

\textbf{Interpretability.} While chaining multiple layers of a ``fixed'' convolution scheme (e.g. GCN) might allow for similar flexibility, with our choice of robust diffusion, we can gain insights about the learned robust representation from the \emph{(i)} polynomial, \emph{(ii)} spectral, and \emph{(iii)} spatial perspective.

\emph{(i) Polynomial perspective.} The coefficients \(\gamma_k\) determine the importance of the respective \(k\)-hop neighborhood for the learned representations. To visualize  \(\gamma_k\), we can always consider \(\gamma_0\) to be positive, which is equivalent to flipping the sign of the processed features $\mH$. Additionally, we normalize the coefficients s.t. \(\sum |\gamma| = 1\) since $\mH$ also influences the scale. In \Cref{fig:inductive_gammas}, we give an example for the polynomial perspective (details are discussed in \Cref{sec:results}).

\emph{(ii) Spectral perspective.} We solve for \(g_\theta(\boldsymbol{\Lambda})\) in the polynomial approximation \(\mV g_\theta(\boldsymbol{\Lambda}) \mV^\top \mH \approx \sum_{k=0}^K \gamma_k \mathring{\mL}^k \mH\) to obtain a possible graph filter
\(g_\theta(\boldsymbol{\Lambda}) = \mV^\top (\sum_{k=0}^K \gamma_k \mathring{\mL}^k) \mV\). 
Following \citet{balcilar2021_spectral}, we study the spectral characteristics w.r.t. \(\mI - \mD^{\nicefrac{-1}{2}} \adj \mD^{\nicefrac{-1}{2}}\). Then, the diagonal entries of \(g_\theta(\boldsymbol{\Lambda})\) correspond to the (relative) magnitude of how signals of frequency \(\lambda\) are present in the filtered signal. Vice versa, a low value corresponds to suppression of this frequency. Recall, low frequencies correspond to the small eigenvalues and high to the large eigenvalues. In \Cref{fig:inductive_spectrum}, we show how adversarial training and the permissible perturbations affect the spectra of the learned graph filters.

\emph{(iii) Spatial perspective.} Robust diffusion (monomial basis) can be summarized as
\(
   \softmax \left( \mT \mH \right)
\)
where $\mT = \sum_{k=0}^K \gamma_k \mathring{\mL}^k$ is the total diffusion matrix. %
The coefficient $\mT_{uv}$ indicates the diffusion strength between node $u$ and node $v$. For example, we visualize the total diffusion matrix $\mT$ in \Cref{fig:karate} on Karate Club \citep{zachary1977karate} with different training strategies. Depending on the learning signal we give, GPRGNN is able to adjust its diffusion.

\section{LR-BCD: Adversarial Attack with Local Constraints}\label{sec:adv_train_local}

\textbf{Motivation for local constraints.} The just-discussed interpretability of robust diffusion provides empirical evidence for the importance of local constraints. %
Specifically, from \Cref{fig:karate:c}, we see that GPRGNN adversarially trained \emph{without} local constraints learns a diffusion that almost ignores the graph structure. While this model is certainly very robust w.r.t.\ structure perturbations, it is not a very useful GNN since it cannot leverage the structure information anymore.
In contrast, we show in \Cref{fig:karate:d} that GPRGNN trained adversarially \emph{with} local constraints results in a robust model that can still incorporate the structure information. %

\begin{wrapfigure}[15]{r}{0.45\textwidth}
\vspace{-24pt}
\centering
\includegraphics[width=0.85\linewidth]{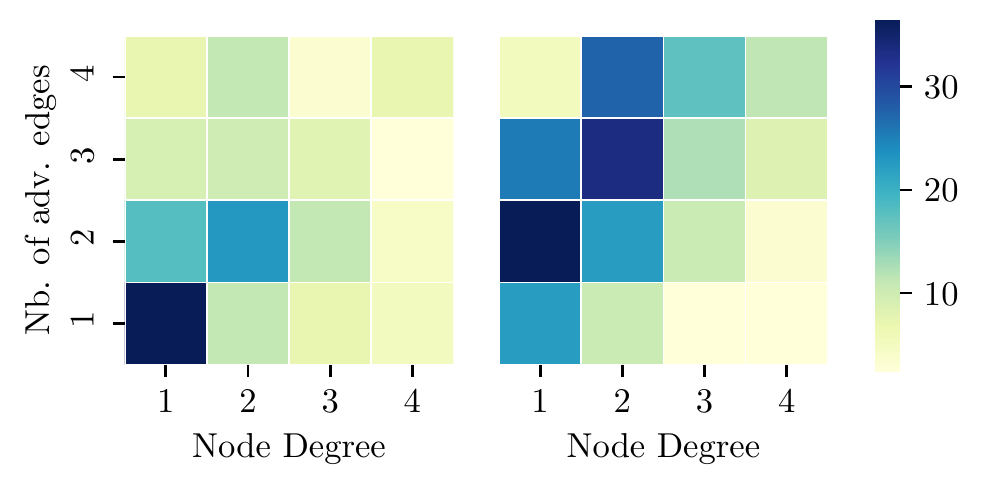}
\caption{Number of successfully attacked nodes by node degree and number of connected adversarial edges. We attack self-trained GCN with PR-BCD and global budgets $10\%$ (left) and $25\%$ (right) on Cora-ML and aggregate the results over three different data splits. We observe that a notable amount of nodes is perturbed beyond their degree.
\label{fig:pert_per_node}}
\end{wrapfigure}
The local predictions in node classification yield an alternative motivation. In the absence of local constraints, an adversary typically has the power to rewire the entire neighborhood of many nodes. When attacking GNNs, we empirically observe that a majority of successfully attacked nodes are perturbed beyond their degree even for moderate global attack budgets (see~\Cref{fig:pert_per_node}). 
That such perturbations are not reasonable is evident in the fact that studies on local attacks \citep{nettack, prbcd}, where the adversary targets a single node's prediction, do not consider perturbations (far) beyond the node degree.
However, a $5\%$ attack budget on Cora-ML allows changing $798$ edges while the majority of nodes have a degree less than three.

Traditionally, adversarial changes are judged by noticeability~\citep{def_adversarial_example} since unnoticeable changes do not alter the semantics. However, for graphs, manual inspection of its content is often not feasible, and the concept of (un-)noticeability is unclear. However, using generative graph models \citet{overrobustness} revealed that perturbations beyond the node degree most often do alter the semantics.
Perturbations that alter semantics can pose a problem for adversarial training.
Similar observations have been done in graph contrastive learning where perturbations that do not preserve graph semantic preservation have a direct effect on the achievable error bounds \citep{contrastive_error_bound}.

\textbf{Locally constrained Randomized Block Coordinate Descent (LR-BCD).} We next introduce our LR-BCD attack; the first attack targeting multiple nodes at once while maintaining a local constraint for each node next to a global one. For this, we extend the so-called PR-BCD attack framework.

\emph{Projected Randomized Block Coordinate Descent (PR-BCD)}~\citep{prbcd} %
is a gradient-based attack framework %
applicable to a wide range of models. Its goals is to generate a perturbation matrix $\mP \in \{0, 1\}^{n \times n}$ that is applied to the original adjacency matrix $\adj$ to construct a perturbed matrix $\tilde{\adj} = \adj + (\mathbb{I} - 2 \adj ) \,\odot\, \mP$, where $\mathbb{I}$ 
is an all-one matrix and \(\odot\) the element-wise product. %
For undirected graphs, only the upper-triangular parts of all matrices are considered.
To construct \(\mP\) in an efficient manner, PR-BCD relaxes \(\mP\) during the attack from $\{0, 1\}^{n \times n}$ to $[0, 1]^{n \times n}$ and employs an iterative process for $T$ iterations. It consists of three repeating steps. \textbf{(1)} A random block of size $b$ is sampled. That is, only $b$ (non-contiguous) elements in the perturbation matrix $\mP_{t-1}$ are considered and all other elements set to zero. Thus, $\mP_{t-1}$ is sparse. %
\textbf{(2)} A gradient update w.r.t. the loss to compute relaxed perturbations $\mS_{t}$ is performed, i.e., $\mS_{t} \leftarrow \mP_{t-1} + \alpha_{t-1} \nabla_{\mP_{t-1}} \ell(\mP_{t-1})$, where $\mP_{t-1}$ are the previous perturbations, $\alpha_{t-1}$ is the learning rate, and $\nabla_{\mP_{t-1}} \ell(\mP_{t-1})$ are the perturbation gradients through the GNN. Finally, and most crucially \textbf{(3)} a projection $\Pi_{\gB(\gG)}$ ensures that the perturbations are \emph{permissible} given the set of allowed perturbations \(\gB(\gG)\), i.e. $\mP_{t} = \Pi_{\gB(\gG)}(\mS_{t})$. Now, the process starts again with $\textbf{(1)}$, but all zero-elements in the block $b$ (or at least $\nicefrac{1}{2}$ of the lowest-value block elements) are resampled. After $T$ iterations, the perturbations $\mP_T$ are discretized from $[0, 1]^{n \times n}$ to $\{0, 1\}^{n \times n}$ to obtain the discrete $\tilde{\mA}$. Specifically, $\mP_T$ is discretized via sampling, where the elements in $\mP_T$ are used to define Bernoulli distributions. 

\emph{Global projection.} As mentioned above, the projection \(\Pi_{\gB(\gG)}\) of PR-BCD is the crucial step that accounts for the attack budget. \citet{prbcd} develop an efficient projection for an adversary implementing a global perturbation constraint, i.e., for \(\gB(\gG) = \{\tilde{\mA} \in \{0, 1\}^{n \times n} \,|\, \|\tilde{\mA} - \mA\|_0 \le \Delta\}\) with $\Delta \in \mathbb{N}_{\ge 0}$. This is achieved by formulating the projection as the following optimization problem
\begin{align}
    \Pi_{\Delta}(\mS) = \argmin\nolimits_{\mP} \|\mP - \mS\|_F^2  \qquad \text{subject to } \; &
    \sum\nolimits_{i,j} \emP_{i,j} \le \Delta \\
    &\;\mP \in [0, 1]^{n \times n}  \nonumber
\end{align}
with Frobenius norm \(\|\cdot\|_F^2\) and the sum aggregating all elements of the matrix. $\Pi_{\Delta}(\mS)$ can be solved with the bisection method \citep{prbcd}.
However, the projection $\Pi_{\Delta}$ does not support limiting the number of perturbations per node, i.e., $\sum_{j=1}^n \emP_{i,j} \le \Delta_i^{(l)}$ where $\boldsymbol{\Delta}^{(l)} \in \sN_{\ge 0}^n$ is the vector of local budgets for each node. Indeed, extending the optimization problem above to include local constraints leads to the notoriously hard problem of Euclidean projection to polyhedra~\citep{wright_optimization_2022}.

\emph{A locally constrained global projection (LR-BCD).} 
With the goal of an efficient attack including local constraints, we have to develop an alternative and scaleable projection strategy for \(\gB(\gG) = \{\tilde{\mA} \in \{0, 1\}^{n \times n} \,|\, \|\tilde{\mA} - \mA\|_0 < \Delta \land \|\sum_j(\tilde{\mA}_{ij} - \mA_{ij})\|_0 \le \Delta_i^{(l)}\ \forall i\in[n]\}\). Our novel projection \(\mP = \Pi_{\Delta}^{(l)}(\mS) = \Pi_{[0,1]}(\mS)\odot \mC^* \) chooses the largest entries from the perturbation matrix \(\mS \in \R^{n \times n}\) using \(\mC^* \in [0, 1]^{n \times n}\) and clipping operator \(\Pi_{[0,1]}(\vs) \in [0,1]\), s.t. we obey global \emph{and} local constraints. The optimal choices $\mC^*$ can be calculated by solving a relaxed multi-dimensional knapsack problem: %
\begin{align}\label{eq:opt_problem_crbcd}
    \mC^* = \argmax_{\mC} \sum\nolimits_{i,j} \mS \odot \mC \qquad 
    \text{subject to } \; &
     \sum\nolimits_{i,j} \emP_{i,j}' \le \Delta \\ 
     &\sum\nolimits_{j} \emP_{i,j}' \le \Delta^{(l)}_i \qquad \forall i\in[n] \nonumber \\ %
     &\;\mC \in [0, 1]^{n \times n} \nonumber
\end{align}
where \(\mP' = \Pi_{[0,1]}(\mS) \odot \mC\), i.e., $\Pi_{[0,1]}(\mS)$ represents the weights of the individual choices, while $\mS$ captures their value. \(\sum\nolimits_{i,j} \emP_{i,j}'\) aggregate all elements in the matrix. The local constraints \(\sum\nolimits_{j} \emP_{i,j}' \le \Delta^{(l)}_i\), constrain the perturbations in each neighborhood, with node-specific budgets \(\Delta^{(l)}_i\). Even though this optimization problem has no closed-form solution, it can be readily approximated with greedy approaches  \citep{knapsack_problem}. The key idea is to iterate the non-zero entries in \(\mS\) in descending order and construct the resulting \(\mC^*\) (or directly \(\mP\)) as follows: For each perturbation \((u, v)\) related to \(\mS_{uv}\), we check if the global \(\Delta\) or local budgets \(\Delta^{(l)}_u\) and \(\Delta^{(l)}_v\) are not yet exhausted. Then, $\mC^*_{uv}=1$ (i.e., \(\mP_{u,v} = \Pi_{[0,1]}(\mS_{uv})\)). Otherwise, $\mC^*_{uv}=\min\{\Delta, \Delta^{(l)}_u, \Delta^{(l)}_v\}$. The final discretization is given by $\mC^*$ corresponding to the solution of \Cref{eq:opt_problem_crbcd} for $\mS_T$, but changing the weight of each choice to 1 (i.e., \(\mP' = \mC\)), guaranteeing a binary solution due to the budgets being integer. We give the pseudo-code and more details in \Cref{app:crbcd_projection}.

Our projection yields sparse solutions as it greedily selects the largest entries in \(\mS\) until the budget is exhausted. Moreover, it is efficient. Its time complexity is \(\mathcal{O}(b \log(b))\) due to the sorting of the up to \(b\) non-zero entries in the sparse matrix \(\mS\). Its space complexity is \(\mathcal{O}(b)\), assuming we choose the block size \(b \ge \Delta\) as well as \(b \ge \nnodes\). Thus, LR-BCD has the same asymptotic complexities as the purely global projection by \citet{prbcd} (which we will from now on denote as PR-BCD).

\textbf{Summary.} We will now summarize our LR-BCD along the four dimensions proposed by \citet{biggio2018wild}. \emph{(1) Attacker's goal:} Although our focus is to increase the misclassification for the target nodes in node classification, we can apply LR-BCD to different node- and graph-level tasks depending on the loss choice. \emph{(2) Attacker's knowledge:} We assume perfect knowledge about the model and dataset, which is reasonable for adversarial training. \emph{(3) Attacker's capability:} We propose to use a threat model that constrains the number of edge perturbations globally as well as locally for each node. \emph{(4) Attacker's strategy:} LR-BCD extends PR-BCD by a projection incorporating local constraints. %
Most notably, we relax the unweighted graph during the attack to continuous values and leverage randomization for an efficient attack while optimizing over all possible \(\nnodes^2\) edges.

\section{Empirical Results} \label{sec:results}

\begin{table}[t]
\centering
\caption{Comparison on Citeseer of regularly trained models, the state-of-the-art SoftMedian GDC defense, and adversarially trained models (train \(\epsilon=20\%\)). The first line shows the robust accuracy [\%] of a standard GCN and all other numbers represent the difference in robust accuracy achieved by various models in percentage points from this standard GCN. The best model is highlighted in bold and grey background marks our \emph{robust diffusion}.}
\label{tab:results}
\vspace{7pt}
\resizebox{\linewidth}{!}{
\begin{tabular}{cccrrrrrrrr}
\toprule
   \multirow{2}{*}{\textbf{Model}}              &      \textbf{Adv.}         & \textbf{A. eval. $\rightarrow$} &                        &                    \multicolumn{1}{c}{\textbf{LR-BCD}} &                       \multicolumn{1}{c}{\textbf{PR-BCD}} &                       \multicolumn{1}{c}{\textbf{LR-BCD}} &                       \multicolumn{1}{c}{\textbf{PR-BCD}} & \multicolumn{3}{c}{\textbf{Certifiable accuracy / sparse smoothing}} \\
                   &     \textbf{trn.}       & \textbf{A. trn.  $\downarrow$}  & \multicolumn{1}{c}{\textbf{Clean}}  & \multicolumn{2}{c}{$\boldsymbol{\epsilon}$\textbf{=0.1}} & \multicolumn{2}{c}{$\boldsymbol{\epsilon}$\textbf{=0.25}}  &                       \multicolumn{1}{c}{\textbf{Clean}} &                       \multicolumn{1}{c}{\textbf{3 add.}} &                       \multicolumn{1}{c}{\textbf{5 del.}} \\
\midrule
\textbf{GCN} & \xmark & - &                        72.0 $\pm$ 2.5 &                         54.7 $\pm$ 2.8 &                         51.7 $\pm$ 2.8 &                         45.3 $\pm$ 3.4 &                         38.0 $\pm$ 3.8 &                         38.3 $\pm$ 11.5 &                          1.7 $\pm$ 0.7 &                          4.8 $\pm$ 1.5 \\
\midrule
\textbf{GAT} & \xmark & - &                        -3.6 $\pm$ 2.7 &                         -3.9 $\pm$ 3.4 &                         +0.5 $\pm$ 3.5 &                        -15.9 $\pm$ 5.3 &                         -2.3 $\pm$ 7.3 &                        -14.0 $\pm$ 12.0 &                         -1.7 $\pm$ 0.7 &                         -4.8 $\pm$ 1.5 \\
\textbf{APPNP} & \xmark & - &                        +0.2 $\pm$ 1.1 &                         +1.7 $\pm$ 0.7 &                         +1.9 $\pm$ 1.4 &                         +3.0 $\pm$ 1.2 &                         +2.2 $\pm$ 2.5 &                          +8.9 $\pm$ 9.1 &                        +31.2 $\pm$ 6.4 &                        +31.6 $\pm$ 6.4 \\
\textbf{GPRGNN} & \xmark & - &                        +2.2 $\pm$ 4.3 &                         +4.2 $\pm$ 2.7 &                         +3.6 $\pm$ 4.9 &                         +5.5 $\pm$ 3.9 &                         +7.9 $\pm$ 4.6 &                         +17.9 $\pm$ 6.9 &                        +42.4 $\pm$ 4.4 &                        +41.3 $\pm$ 3.7 \\
\textbf{ChebNetII} & \xmark & - &                        +1.1 $\pm$ 2.2 &                         +5.8 $\pm$ 2.5 &                         +5.0 $\pm$ 2.4 &                        +10.4 $\pm$ 2.6 &                         +7.6 $\pm$ 3.4 &                         +24.6 $\pm$ 9.9 &                        +55.6 $\pm$ 1.2 &                        +54.0 $\pm$ 0.9 \\
\textbf{SoftMedian} & \xmark & - &                        +0.9 $\pm$ 1.7 &                         +9.5 $\pm$ 2.2 &                         +9.3 $\pm$ 1.9 &                        +16.2 $\pm$ 2.4 &                        +14.6 $\pm$ 2.9 &                        +25.2 $\pm$ 10.5 &                        +60.3 $\pm$ 1.4 &                        +57.5 $\pm$ 0.8 \\
\midrule
\multirow{2}{*}{\textbf{GCN}} & \multirow{2}{*}{$\checkmark$} & \textbf{LR-BCD} &                        -0.2 $\pm$ 1.2 &                         +7.8 $\pm$ 1.6 &                         +5.9 $\pm$ 1.5 &                        +10.9 $\pm$ 2.1 &                         +8.1 $\pm$ 2.3 &                          -3.0 $\pm$ 9.4 &                        +10.7 $\pm$ 4.6 &                        +13.1 $\pm$ 5.1 \\
                   &              & \textbf{PR-BCD} &                        +0.0 $\pm$ 1.9 &                         +6.9 $\pm$ 0.9 &                         +5.3 $\pm$ 1.6 &                         +8.6 $\pm$ 2.3 &                         +5.8 $\pm$ 2.4 &                         +4.2 $\pm$ 14.6 &                        +10.1 $\pm$ 5.5 &                        +11.4 $\pm$ 4.5 \\

\multirow{2}{*}{\textbf{GAT}} & \multirow{2}{*}{$\checkmark$} & \textbf{LR-BCD} &                        +0.8 $\pm$ 1.6 &                         +5.9 $\pm$ 3.7 &                         +9.0 $\pm$ 3.0 &                         +8.4 $\pm$ 5.1 &                        +13.1 $\pm$ 3.5 &                         -2.0 $\pm$ 23.0 &                         +1.4 $\pm$ 1.7 &                         +4.0 $\pm$ 2.1 \\
                   &              & \textbf{PR-BCD} &                        +1.1 $\pm$ 2.2 &                         +8.9 $\pm$ 2.8 &                        +13.2 $\pm$ 3.7 &                        +10.3 $\pm$ 2.3 &                        +21.8 $\pm$ 4.8 &                        -10.1 $\pm$ 13.7 &                         +1.2 $\pm$ 2.0 &                         +2.8 $\pm$ 1.2 \\

\multirow{2}{*}{\textbf{APPNP}} & \multirow{2}{*}{$\checkmark$} & \textbf{LR-BCD} &                        -0.8 $\pm$ 1.3 &                         +7.6 $\pm$ 1.6 &                         +5.6 $\pm$ 1.9 &                        +11.1 $\pm$ 1.9 &                         +8.3 $\pm$ 3.6 &                         +21.7 $\pm$ 9.5 &                        +41.3 $\pm$ 2.9 &                        +44.1 $\pm$ 1.2 \\
                   &              & \textbf{PR-BCD} &                        -0.2 $\pm$ 2.2 &                         +6.2 $\pm$ 2.3 &                         +5.5 $\pm$ 3.1 &                         +9.0 $\pm$ 2.9 &                         +6.5 $\pm$ 3.8 &                         +19.3 $\pm$ 7.2 &                        +41.0 $\pm$ 3.8 &                        +41.9 $\pm$ 3.0 \\
\midrule
\rowcolor{Gainsboro!60}  &  & \textbf{LR-BCD} &                        +1.7 $\pm$ 3.0 &  \textbf{+15.7 $\boldsymbol{\pm}$ 3.6} &                        +13.6 $\pm$ 3.5 &  \textbf{+24.8 $\boldsymbol{\pm}$ 4.2} &                        +22.9 $\pm$ 4.3 &  \textbf{+34.0 $\boldsymbol{\pm}$ 11.5} &                        +67.4 $\pm$ 1.5 &                        +65.0 $\pm$ 2.5 \\
\rowcolor{Gainsboro!60} \multirow{-2}{*}{\textbf{GPRGNN}}                   &         \multirow{-2}{*}{$\checkmark$}     & \textbf{PR-BCD} &                        +0.6 $\pm$ 3.6 &                        +15.0 $\pm$ 3.6 &  \textbf{+15.9 $\boldsymbol{\pm}$ 4.2} &                        +23.2 $\pm$ 4.3 &  \textbf{+26.3 $\boldsymbol{\pm}$ 5.4} &                        +32.9 $\pm$ 11.8 &  \textbf{+69.0 $\boldsymbol{\pm}$ 1.5} &  \textbf{+65.9 $\boldsymbol{\pm}$ 2.3} \\

\rowcolor{Gainsboro!60}  &  & \textbf{LR-BCD} &  \textbf{+3.7 $\boldsymbol{\pm}$ 1.4} &                        +11.5 $\pm$ 1.6 &                        +11.5 $\pm$ 1.1 &                        +16.4 $\pm$ 1.9 &                        +16.0 $\pm$ 2.9 &                        +31.5 $\pm$ 12.7 &                        +62.3 $\pm$ 1.9 &                        +59.8 $\pm$ 2.5 \\
\rowcolor{Gainsboro!60} \multirow{-2}{*}{\textbf{ChebNetII}}                   &       \multirow{-2}{*}{$\checkmark$}       & \textbf{PR-BCD} &                        +3.4 $\pm$ 1.7 &                        +13.2 $\pm$ 2.2 &                        +14.3 $\pm$ 1.2 &                        +19.5 $\pm$ 1.6 &                        +19.8 $\pm$ 2.3 &                        +30.7 $\pm$ 13.2 &                        +65.4 $\pm$ 2.6 &                        +62.9 $\pm$ 3.7 \\
\bottomrule
\end{tabular}
}
\vspace{-10pt}
\end{table}

\textbf{Inductive learning.} All results in this section follow a fully inductive semi-supervised setting. That is, the training graph \(\gG\) does not include validation and test nodes. For evaluation, we then successively add the respective evaluation set. This way, we avoid the evaluation flaw of prior work since the model cannot achieve robustness via ``memorizing'' the predictions on the clean graph. We obtain inductive splits randomly, except for OGB arXiv~\citep{ogb2020}, which comes with a split. In addition to the commonly sampled 20 nodes per class for both (labeled) training and validation, we sample a stratified test set consisting of $10\%$ of all nodes. The remaining nodes are used as (unlabeled) training nodes.

\textbf{Setup.} We study the citation networks Cora, Cora-ML, CiteSeer \cite{bojchevski2018deep}, Pubmed \cite{cora}, and OGB arXiv~\citep{ogb2020}. Furthermore, WikiCS \cite{mernyei2020wiki, dwivedi2022benchmarking}, which has significantly more heterogeneous neighborhoods as well the heterophilic dataset Squirrel \cite{platonov2023a} and (contextual) Stochastic Block Models (SBMs) \cite{csbm} with a heterophilic parametrization (see Appendix \ref{sec:datasets}). As models we choose GPRGNN, ChebNetII, GCN, APPNP, and GAT \cite{gat}. Further, we compare to the state-of-the-art evasion defenses Soft Median GDC~\citep{prbcd} in the main part and GRAND~\citep{GRAND} in \Cref{tab:app_result_grand_and_attacks}. We apply adversarial training (see \Cref{sec:learning_setting} and \Cref{app:adversarial_training}) using both PR-BCD that only constraints perturbations globally and our \emph{LR-BCD} that also allows for local constraints. %
Moreover, we use adversarial training in conjunction with self-training. Due to the inductive split, this does not bias results. We use the tanh margin attack loss of \citep{prbcd} and do not generate adversarial examples for the first 10 epochs (warm-up). We evaluate robust generalization on the test set using L/PR-BCD, which corresponds to \emph{adaptive attacks}. 
Adaptive attacks are the gold standard in evaluating empirical robustness because they craft model-specific perturbations~\citep{mujkanovic2022_are_defenses_for_gnns_robust}. We use $\epsilon$ to parametrize the global budget \(\Delta = \lfloor \epsilon \cdot \sum_{u \in \mathcal{A}} d_u / 2 \rceil\) relative to the degree \(d_u\) for the set of targeted nodes \(\mathcal{A}\). %
We find that \(\Delta^{(l)}_u = \floor{d_u/2}\) is a reasonable local budget for all datasets but arXiv where we use \(\Delta^{(l)}_u = \floor{d_u/4}\). %
We report averaged results with the standard error of the mean over three random splits. We use GTX 1080Ti (11~Gb) GPUs for all experiments but arXiv, for which we use a V100 (40~GB). For details see \Cref{app:exp_details}. We discuss limitations in \Cref{app:limitations} and provide code at \url{https://www.cs.cit.tum.de/daml/adversarial-training/}.

\textbf{Certifiable robustness.} We use the model-agnostic randomized/sparse smoothing certificate of \citet{bojchevski2020sparsesmoothing} to also quantify certifiable robustness. Sparse smoothing creates a randomized ensemble given a base model \(f_\theta\) s.t.\ the majority prediction of the ensemble comes with guarantees. For the randomization, we follow \citet{bojchevski2020sparsesmoothing} and uniformly drop edges with \(p_-=0.6\) as well as add edges with \(p_+=0.01\). Sparse smoothing then determines if a node-level prediction is certified (does not change) for the desired deletion radius \(r_-\) or addition radius \(r_+\). The guarantee holds in a probabilistic sense with significance level \(\alpha\). We report the ``certified accuracy'' \(\gamma(r_-, r_+)\) that is the average of correct and certifiable predictions over all nodes. We choose \(\alpha = 5\%\) and obtain 100,000 random samples. For simplicity, we report in the main part the certified accuracies \(\gamma(r_-=0, r_+=0)\), \(\gamma(5, 0)\), and \(\gamma(0, 3)\). See \Cref{sec:app_certified} for more details and results.

\begin{wraptable}[29]{r}{0.52\textwidth}
\vspace{-7pt}
\centering
\caption{Robust accuracy [\%] on further datasets. Attack types match for training (\(\epsilon=0.2\)) and evaluation. Grey shading highlights our \emph{robust diffusion}.}
\label{tab:result_hetero}
\resizebox{\linewidth}{!}{

\begin{tabular}{ccccrrr}
\toprule
 & \multirow{2}{*}{\textbf{Model}} & \textbf{Adv.} & \textbf{Attack} &    \multicolumn{1}{c}{\multirow{2}{*}{\textbf{Clean}}} &  \multicolumn{1}{c}{\multirow{2}{*}{$\boldsymbol{\epsilon}$\textbf{=0.1}}} & \multicolumn{1}{c}{\multirow{2}{*}{$\boldsymbol{\epsilon}$\textbf{=0.25}}} \\
 & & \textbf{trn.} & \textbf{eval. \& trn.} &                 &                 &                 \\
\midrule
\multirow[c]{6}{*}{\rotatebox{90}{\textbf{Cora ML}}} & \multirow[c]{2}{*}{\textbf{GCN}} & \multirow[c]{2}{*}{\xmark} & \textbf{LR-BCD} & 82.5 $\pm$ 1.9 & 59.2 $\pm$ 2.8 & 48.5 $\pm$ 1.6 \\
 &  &  & \textbf{PR-BCD} & 82.5 $\pm$ 1.9 & 57.4 $\pm$ 2.3 & 38.0 $\pm$ 2.3 \\
\cline{2-7}
 & \multirow[c]{4}{*}{\textbf{GPRGNN}} & \multirow[c]{2}{*}{\xmark} & \textbf{LR-BCD} & 83.5 $\pm$ 2.6 & 64.8 $\pm$ 2.3 & 57.0 $\pm$ 1.5 \\
 &  &  & \textbf{PR-BCD} & 83.5 $\pm$ 2.6 & 61.9 $\pm$ 1.8 & 46.6 $\pm$ 1.4 \\
\cline{3-7}
 & & \cellcolor{Gainsboro!60} & \cellcolor{Gainsboro!60} \textbf{LR-BCD} &\cellcolor{Gainsboro!60} 83.3 $\pm$ 1.2 & \cellcolor{Gainsboro!60}76.8 $\pm$ 1.1 &\cellcolor{Gainsboro!60} 74.4 $\pm$ 1.8 \\
 &  & \cellcolor{Gainsboro!60} \multirow[c]{-2}{*}{$\checkmark$} & \cellcolor{Gainsboro!60}\textbf{PR-BCD} & \cellcolor{Gainsboro!60}82.5 $\pm$ 0.8 &\cellcolor{Gainsboro!60} 73.5 $\pm$ 1.5 &\cellcolor{Gainsboro!60} 69.0 $\pm$ 2.6 \\
\cline{1-7}
\multirow[c]{6}{*}{\rotatebox{90}{\textbf{Cora}}} & \multirow[c]{2}{*}{\textbf{GCN}} & \multirow[c]{2}{*}{\xmark} & \textbf{LR-BCD} & 79.9 $\pm$ 0.9 & 60.8 $\pm$ 0.2 & 47.9 $\pm$ 1.8 \\
 &  &  & \textbf{PR-BCD} & 79.9 $\pm$ 0.9 & 59.1 $\pm$ 0.4 & 44.1 $\pm$ 0.7 \\
\cline{2-7}
 & \multirow[c]{4}{*}{\textbf{GPRGNN}} & \multirow[c]{2}{*}{\xmark} & \textbf{LR-BCD} & 81.8 $\pm$ 1.0 & 69.4 $\pm$ 0.2 & 61.7 $\pm$ 0.6 \\
 &  &  & \textbf{PR-BCD} & 81.8 $\pm$ 1.0 & 64.3 $\pm$ 0.6 & 52.0 $\pm$ 0.6 \\
\cline{3-7}
 &  & \cellcolor{Gainsboro!60} &\cellcolor{Gainsboro!60} \textbf{LR-BCD} &\cellcolor{Gainsboro!60} 82.4 $\pm$ 0.9 &\cellcolor{Gainsboro!60} 72.6 $\pm$ 1.1 &\cellcolor{Gainsboro!60} 70.0 $\pm$ 1.5 \\
 &  & \cellcolor{Gainsboro!60} \multirow[c]{-2}{*}{$\checkmark$} & \cellcolor{Gainsboro!60}\textbf{PR-BCD} & \cellcolor{Gainsboro!60}79.9 $\pm$ 1.3 &\cellcolor{Gainsboro!60} 71.9 $\pm$ 1.0 &\cellcolor{Gainsboro!60} 66.3 $\pm$ 0.9 \\
\cline{1-7}
\multirow[c]{6}{*}{\rotatebox{90}{\textbf{Pubmed}}} & \multirow[c]{2}{*}{\textbf{GCN}} & \multirow[c]{2}{*}{\xmark} & \textbf{LR-BCD} & 77.4 $\pm$ 0.4 & 60.4 $\pm$ 0.7 & 52.8 $\pm$ 1.4 \\
 &  &  & \textbf{PR-BCD} & 77.4 $\pm$ 0.3 & 54.0 $\pm$ 0.5 & 39.3 $\pm$ 1.2 \\
\cline{2-7}
 & \multirow[c]{4}{*}{\textbf{GPRGNN}} & \multirow[c]{2}{*}{\xmark} & \textbf{LR-BCD} & 78.8 $\pm$ 1.0 & 61.1 $\pm$ 1.3 & 54.0 $\pm$ 0.9 \\
 &  &  & \textbf{PR-BCD} & 78.8 $\pm$ 1.0 & 57.0 $\pm$ 1.2 & 42.7 $\pm$ 1.4 \\
\cline{3-7}
 &  & \cellcolor{Gainsboro!60} & \cellcolor{Gainsboro!60}\textbf{LR-BCD} & \cellcolor{Gainsboro!60}80.5 $\pm$ 0.8 &\cellcolor{Gainsboro!60} 75.8 $\pm$ 1.0 & \cellcolor{Gainsboro!60}74.7 $\pm$ 1.1 \\
 &  & \cellcolor{Gainsboro!60}\multirow[c]{-2}{*}{$\checkmark$} & \cellcolor{Gainsboro!60}\textbf{PR-BCD} & \cellcolor{Gainsboro!60}80.4 $\pm$ 0.5 &\cellcolor{Gainsboro!60} 73.0 $\pm$ 1.1 & \cellcolor{Gainsboro!60}68.8 $\pm$ 1.6 \\
\cline{1-7}
\multirow[c]{6}{*}{\rotatebox{90}{\textbf{SBM (hetero.)}}} & \multirow[c]{2}{*}{\textbf{GCN}} & \multirow[c]{2}{*}{\xmark} & \textbf{LR-BCD} & 62.0 $\pm$ 4.5 & 51.5 $\pm$ 4.3 & 45.1 $\pm$ 4.5 \\
 &  &  & \textbf{PR-BCD} & 62.0 $\pm$ 4.5 & 49.2 $\pm$ 3.4 & 47.1 $\pm$ 4.8 \\
\cline{2-7}
 & \multirow[c]{4}{*}{\textbf{GPRGNN}} & \multirow[c]{2}{*}{\xmark} & \textbf{LR-BCD} & 86.5 $\pm$ 2.1 & 67.3 $\pm$ 3.3 & 59.6 $\pm$ 3.3 \\
 &  &  & \textbf{PR-BCD} & 85.9 $\pm$ 4.9 & 68.4 $\pm$ 2.9 & 55.6 $\pm$ 4.4 \\
\cline{3-7}
 &  & \cellcolor{Gainsboro!60}&\cellcolor{Gainsboro!60} \textbf{LR-BCD} &\cellcolor{Gainsboro!60} 85.5 $\pm$ 0.5 &\cellcolor{Gainsboro!60} 72.1 $\pm$ 1.7 & \cellcolor{Gainsboro!60}67.0 $\pm$ 3.3 \\
 &  & \cellcolor{Gainsboro!60} \multirow[c]{-2}{*}{$\checkmark$} & \cellcolor{Gainsboro!60} \textbf{PR-BCD} & \cellcolor{Gainsboro!60}85.5 $\pm$ 2.9 & \cellcolor{Gainsboro!60}69.0 $\pm$ 0.5 & \cellcolor{Gainsboro!60}60.9 $\pm$ 2.9 \\
\cline{1-7}
\multirow[c]{6}{*}{\rotatebox{90}{\textbf{Squirrel}}} & \multirow[c]{2}{*}{\textbf{GCN}} & \multirow[c]{2}{*}{\xmark} & \textbf{LR-BCD} & 42.0 $\pm$ 0.3 & 20.1 $\pm$ 1.3 & 15.8 $\pm$ 1.4 \\
 &  &  & \textbf{PR-BCD} & 41.9 $\pm$ 0.3 & 12.1 $\pm$ 0.3 & 6.9 $\pm$ 0.6 \\
\cline{2-7}
 & \multirow[c]{4}{*}{\textbf{GPRGNN}} & \multirow[c]{2}{*}{\xmark} & \textbf{LR-BCD} & 40.0 $\pm$ 0.6 & 29.8 $\pm$ 4.8 & 28.4 $\pm$ 5.1 \\
 &  &  & \textbf{PR-BCD} & 40.4 $\pm$ 0.9 & 20.6 $\pm$ 0.6 & 15.3 $\pm$ 1.4 \\
\cline{3-7}
 &  & \cellcolor{Gainsboro!60}& \cellcolor{Gainsboro!60}\textbf{LR-BCD} & \cellcolor{Gainsboro!60}37.8 $\pm$ 1.4 & \cellcolor{Gainsboro!60}34.1 $\pm$ 3.4 & \cellcolor{Gainsboro!60}33.8 $\pm$ 3.7 \\
 &  & \cellcolor{Gainsboro!60} \multirow[c]{-2}{*}{$\checkmark$} & \cellcolor{Gainsboro!60}\textbf{PR-BCD} & \cellcolor{Gainsboro!60}38.4 $\pm$ 1.6 &\cellcolor{Gainsboro!60} 32.3 $\pm$ 4.7 & \cellcolor{Gainsboro!60}28.9 $\pm$ 9.1 \\
\cline{1-7}
\multirow[c]{6}{*}{\rotatebox{90}{\textbf{WikiCS}}} & \multirow[c]{2}{*}{\textbf{GCN}} & \multirow[c]{2}{*}{\xmark} & \textbf{LR-BCD} & 74.6 $\pm$ 2.8 & 42.5 $\pm$ 2.3 & 35.4 $\pm$ 2.6 \\
 &  &  & \textbf{PR-BCD} & 75.1 $\pm$ 1.5 & 38.6 $\pm$ 2.9 & 30.2 $\pm$ 2.5 \\
\cline{2-7}
 & \multirow[c]{4}{*}{\textbf{GPRGNN}} & \multirow[c]{2}{*}{\xmark} & \textbf{LR-BCD} & 72.8 $\pm$ 1.1 & 52.8 $\pm$ 3.2 & 49.7 $\pm$ 4.9 \\
 &  &  & \textbf{PR-BCD} & 72.8 $\pm$ 0.4 & 52.7 $\pm$ 1.0 & 50.0 $\pm$ 1.2 \\
\cline{3-7}
 &  & \cellcolor{Gainsboro!60}& \cellcolor{Gainsboro!60}\textbf{LR-BCD} & \cellcolor{Gainsboro!60}73.3 $\pm$ 2.7 & \cellcolor{Gainsboro!60}64.4 $\pm$ 2.2 & \cellcolor{Gainsboro!60}62.8 $\pm$ 2.1 \\
 &  & \cellcolor{Gainsboro!60}\multirow[c]{-2}{*}{$\checkmark$} & \cellcolor{Gainsboro!60}\textbf{PR-BCD} & \cellcolor{Gainsboro!60}73.2 $\pm$ 0.2 & \cellcolor{Gainsboro!60}60.0 $\pm$ 1.9 & \cellcolor{Gainsboro!60}54.1 $\pm$ 0.8 \\
\bottomrule
\end{tabular}
}
\end{wraptable}
\textbf{Finding I: Adversarial training is an effective defense against structure perturbations.} This is apparent from the results in \Cref{tab:results}, where we compare the empirical and certifiable robustness between the aforementioned models on Citeseer. Our adversarially trained robust diffusion models GPRGNN and ChebNetII outperform the other baselines both in empirical and certifiable robustness. This includes the state-of-the-art defense Soft Median, which we outperform with a comfortable margin. \emph{Thus, we close the gap in terms of the efficacy of adversarial training in the image domain vs.\ structure perturbations}. Notably, the increased robustness does not imply a lower clean accuracy. For example, our LR-BCD adversarially trained GPRGNN achieves a 1.7 percentage points higher clean accuracy than a GCN, while with an adaptive LR-BCD attack and \(\epsilon=25\%\) perturbed edges, we outperform a GCN by 24.8 percentage points. This amounts to a clean accuracy of 73.7\% and a perturbed accuracy of 70.1\%. We show the empirical robustness gains of adversarial training on the other datasets in \Cref{tab:result_hetero}, compared to a GCN and regularly trained GPRGNN. We show that \emph{robust diffusion} consistently and substantially improves the robustness -- not only on homophilic datasets but also under heterophily (SBM, Squirrel). We present more results and an ablation study in \Cref{app:inductive_add}.

\textbf{Finding II: Choose the set of permissible perturbations wisely!} As argued already in \Cref{sec:adv_train_local}, the right set of admissible permutations can guide the learned robust diffusion. Importantly, the set of admissible perturbations is application specific. That is, depending on the choice, the resulting model might have different robustness characteristics w.r.t.\ different threat models. For example, in \Cref{tab:results}, an LR-BCD adversarially trained GPRGNN is more robust to LR-BCD attacks than to a model trained with PR-BCD, and vice versa. Importantly, the learned message passing is very different as the spectral analysis (\Cref{fig:inductive_spectrum}) reveals a striking pattern. Adversarial training with PR-BCD flipped the behavior from low-pass to high-pass if compared to a regularly trained model. However, adversarial training with LR-BCD seems to preserve the general characteristic while using a larger fraction of the spectrum to enhance robustness. Note that such a filter (less integral Lipschitz) opposes the theoretical stability result of \citet{GamaBR20}. Moreover, the polynomial coefficients (\Cref{fig:inductive_gammas}) resulting from adversarial training without local constraints (\Cref{fig:inductive_gammas:c}) are very similar to coefficients that \citet{gprgnn} associated with a non-informative graph structure. See \Cref{app:inductive_gammas} for other datasets.

\begin{figure}
    \centering
    \begin{minipage}{0.34\linewidth}
        \includegraphics[width=\linewidth]{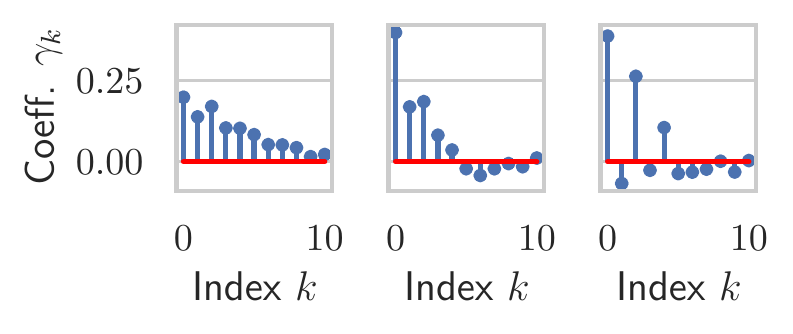} \\
        \hbox{\hspace{.1\linewidth}}\subfloat[Regul.\label{fig:inductive_gammas:a}]{\hspace{.3\linewidth}}
        \subfloat[w/ loc.\label{fig:inductive_gammas:b}]{\hspace{.3\linewidth}}
        \subfloat[w/o l.\label{fig:inductive_gammas:c}]{\hspace{.3\linewidth}}
        \caption{Learned coefficients \(\boldsymbol{\gamma}\) for GPRGNN on Citeseer. We use \(\epsilon=20\%\) in (b) and (c) for adv. trn. %
        }
        \label{fig:inductive_gammas}
    \end{minipage}
    \hfill
    \begin{minipage}{0.32\linewidth}
        \includegraphics[width=\linewidth]{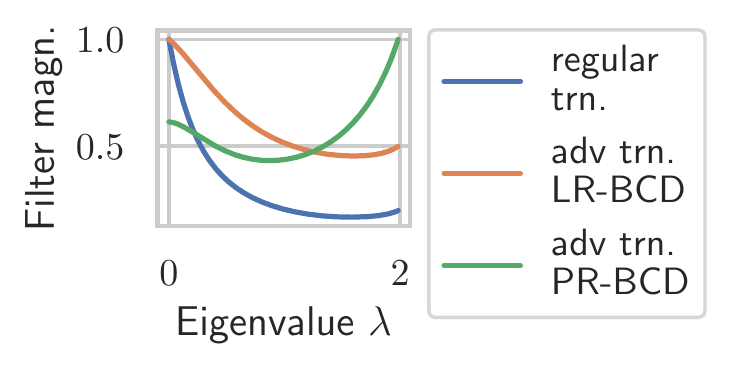}
        \caption{Respective spectral filters of \Cref{fig:inductive_gammas}. Low filter magnitude denotes suppression of the respective frequencies associated with eigenvalues \(\lambda\). \label{fig:inductive_spectrum}}
    \end{minipage}
    \hfill
    \begin{minipage}{0.30\linewidth}
        \includegraphics[width=\linewidth]{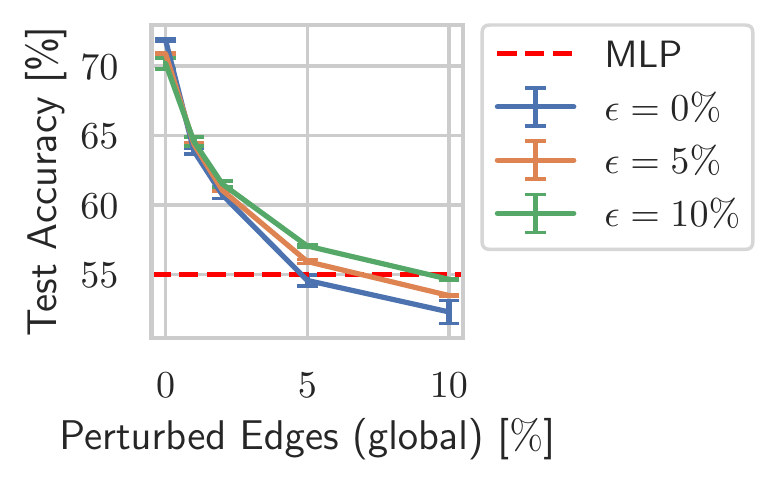}
        \caption{Acc. on arXiv under adv. attack for %
        GPRGNN with different training pert. strengths \(\epsilon\) using LR-BCD.}\label{fig:arxiv}  
    \end{minipage}
\end{figure}

\textbf{Finding III: Adversarial training with local constraints is efficient and scalable.} We show in \Cref{fig:arxiv} the results of an adversarially trained GPRGNN using LR-BCD with different relative budgets \(\epsilon\). Adversarial training on arXiv requires less than 20~GB of RAM and an epoch takes approx.~10 seconds. In contrast, the globally constrained adversarial training of \citet{PGD} would require multiple terabytes of RAM due to \(n^2\) possible edge perturbations~\citep{prbcd}. We not only show that robust diffusion with LR-BCD is scalable, but also that it improves the robustness on arXiv. That is, twice as many edge perturbations are required to push the accuracy below that of an MLP, if comparing \(\epsilon=10\%\) to standard training.

\section{Related Work} \label{sec:related_work}

\begin{wraptable}[7]{r}{0.58\textwidth}
\vspace{-36pt}
\caption{Works on adversarial training for GNNs.}
\vspace{3pt}
\resizebox{.99\linewidth}{!}{
\begin{tabular}{ ccc } 
  Publication & Learning setting & Type of attack \\ 
  \hline
 \citet{BVAT} & Transductive & evasion (attribute)  \\ 
 \citet{DynReg} & Transductive & evasion (attribute)  \\
 \citet{LAT} & Transductive & evasion (structure + attribute)  \\ 
 \citet{PGD} & Transductive & evasion (structure) \\ 
   \citet{xu_general_AT_framework} & Transducitve & evasion (structure) \\
   \citet{SmAT} & Transductive & evasion (structure)  \\ 
   \citet{SAT} & Transductive & evasion (structure + attribute) \\   
  \citet{AdvContrast} & Transductive & evasion + poisoning (structure)  \\ 
\end{tabular}
}
\label{tab:related_work}
\end{wraptable}
We next relate to prior work and refer to \Cref{app:related_work} for more detail: \emph{(1) Learning setting.} Conceptual problems by remembering training data are not unique to adversarial training and have been mentioned in \citet{graph_robustness_benchmark} for graph injection attacks and by \citet{scholten2022} for certification.
\emph{(2) Adversarial training} for GNNs under structure perturbations have been studied in \citep{PGD, xu_general_AT_framework, LAT, SmAT, SAT, AdvContrast}. However, all prior works study a transductive setting (see \Cref{tab:related_work}) having serious shortcomings for evasion attacks (see \Cref{sec:learning_setting}). 
Note that only \citet{PGD, xu_general_AT_framework} study adversarial training for (global) structure perturbations in its literal sense, i.e., directly work on \Cref{eq:advtrain}. Further, we are the first to study adversarial training in the inductive setting.
\emph{(3) Local constraints} have been studied for local attacks~\citep{nettack, prbcd, attack_dice, GNNBook-ch8-gunnemann}, i.e., if attacking a single node's prediction. They are also well-established for robustness certificates~\citep{bojchevski2019_certifiable, zugner_2020_certifiable_structure, schuchardt2021_collective}. However, surprisingly, there is no global attack, i.e., attack targeting a large set of node predictions at once, that incorporates local constraints. This even led to prior work trying to naively and expensively apply local attacks for each node sequentially to generate locally constrained global perturbations \citep{SAT}. With LR-BCD, we finally close this gap.  
\emph{(4) GNN Architectures.} The robust GNN literature strongly focuses on GCN and variants~\citep{nettack, prbcd, geisler2020soft, zuegner2018adversarial, zhu2019robust, dy2018adversarial,SAT,entezari2020defending,selftraining}. GAT is perhaps the most flexible studied model~\citep{graph_robustness_benchmark}. While adversarial training improves the robustness of GAT, in our experiments, an adversarial trained GAT is not more robust than a GCN~\citep{gcn}. A novel aspect of our work is that spectral filters, in the form of polynomials~\citep{chebynetii, gprgnn}, can learn significantly more robust representations, beating a state-of-the-art graph defense. They also reveal insights about the learned robust representation.

\section{Broader impact} \label{sec:broader_impact}

We are convinced that the benefits outweigh the risks. Having the right tools at hand can further the reliability of graph machine learning. Due to the more fine-grained perturbation models, researchers and practitioners and improve upon defending adversarial attacks. We firmly believe that transparent research into the vulnerabilities of models allows researchers and practitioners to understand potential problems and build strong defenses -- as also showcased in our paper. Moreover, due to the studied whitebox setup, our approaches are not directly applicable for real-world malicious actors.

\section{Discussion and Conclusion}

We show that the transductive learning setting in prior work on adversarial training for GNNs has fundamental limitations leading to a biased evaluation %
and a trivial solution through memorization.
Thus, we revisit adversarial training in a fully inductive setting. Furthermore, we argue that future research on evasion attacks in the graph domain, in general, should focus on the inductive setting to avoid the conceptual limitations inherent to combining transductive learning with test-time attacks. Moreover, we employ more flexible GNNs than before that achieve substantial robustness gains through adversarial training and are interpretable. For more realistic perturbations, we develop LR-BCD - the first global attack able to maintain local constraints. %

\begin{ack}
This research was supported by the Helmholtz Association under the joint research school “Munich School for Data Science - MUDS“. Furthermore, this paper has been supported by the DAAD programme Konrad Zuse Schools of Excellence in Artificial Intelligence, sponsored by the German Federal Ministry of Education and Research, and the German Research Foundation, grant GU 1409/4-1.
\end{ack}

\bibliographystyle{unsrtnat}
\bibliography{main}

\clearpage

\counterwithin{figure}{section}
\counterwithin{table}{section}
\counterwithin{equation}{section}
\counterwithin{algorithm}{section}

\appendix

\section{Inductive Setting} \label{app:inductive_setting_formal}

Here, we formally introduce the inductive node-classification problem and the arising saddle-point problem relevant for adversarial training in the graph domain. The formal presentation follows the setting of a growing graph, i.e., given a training graph $\mathcal{G}$ of size $n$ with labels $y \in \mathcal{Y}^m$, $m \le n$, we sample a new graph $\mathcal{G}'$ with $k$ additional nodes from the underlying data-generating distribution $\mathcal{D}$ conditioned on $\mathcal{G}$, which we denote $(\mathcal{G}', y') \sim \mathcal{D}(\mathcal{G}, y)$, where $y' \in \mathcal{Y}^{n+k}$ includes the (unknown) labels of the newly added nodes. The newly added nodes are collected in the set $\mathcal{I}$. However, as will be clear later, the given formulation is general and comprises different inductive learning settings described at the end of this section.

For inductive node classification, we are now interested in the expected $0/1$-loss on the newly added nodes, given as:

\begin{equation}
\underset{(\mathcal{G}', y')\sim \mathcal{D}(\mathcal{G}, y)}{\mathbb{E}} \sum_{i \in \mathcal{I}} \ell_{0/1}(f_{\theta}(\mathcal{G}'_i, y'_i))
\end{equation}

Now, assuming an adversary performing a test-time (evasion) attack, one can write the robust classification error as

\begin{equation}
\mathcal{L}_{adv}(f_\theta) = \underset{(\mathcal{G}', y')\sim \mathcal{D}(\mathcal{G}, y)}{\mathbb{E}} \max_{\mathcal{\tilde{G}}' \in \mathcal{B}(\mathcal{G'})} \sum_{i \in \mathcal{I}} \ell_{0/1}(f_{\theta}(\mathcal{G}'_i, y'_i))
\end{equation}

with $\mathcal{B}(\mathcal{G'})$ representing the set of possible perturbed graphs the adversary can choose from.

In adversarial training, the goal now is to solve the following saddle-point problem:

\begin{equation}\label{eq:app_inductive_saddle_point}
\min_\theta \mathcal{L}_{adv}(f_\theta)
\end{equation}

Because of the expectations in the losses (compared to the transductive setting defined by \Cref{eq:transductive_loss} and  \Cref{eq:transductive_robust_loss}) neither an optimal solution to the saddle point problem defined by \Cref{eq:app_inductive_saddle_point} nor a perfectly robust model achieving clean test accuracy as in \Cref{prop:mem} can be achieved through memorizing information from the training graph. Thus, inductive node classification represents a learning setting without the pitfalls presented in Section \ref{sec:transductive_setting}. 

The formulation comprises the setting of an evolving graph, which does not necessarily grow but changes the already existing nodes and graph structure over time, by including the indices of the $n$ nodes present at training in the set $\mathcal{K}$. Another inductive learning setting samples $\mathcal{G'}$ as a completely new graph independent of $\mathcal{G}$, then the conditional sampling process reduces to $(\mathcal{G}', y') \sim \mathcal{D}$. 
\section{Experimental Details}\label{app:exp_details}
This section summarizes the experimental setup, including datasets, models, attacks, and adversarial training. The code can be found at \url{https://www.cs.cit.tum.de/daml/adversarial-training/}.

\subsection{Datsets} \label{sec:datasets}
We perform experiments on the commonly used citation networks Cora, Cora-ML, Citeseer, Pubmed, and OGB arXiv; as well as WikiCS, Squirrel (with removed duplicates) and (C)SBMs. We always extract the largest connected component for all datasets.

For all datasets, except OGB arXiv and the heterophilic datasets, the data is split as follows. In the transductive setting, we sample 20 nodes per class for both training and validation set. The remaining nodes constitute the test set. Similarly, in the inductive setting, we sample 20 nodes per class for both (labeled) training and validation set. Additionally, we sample a stratified test set consisting of 10\% of all nodes. The remaining nodes are used as additional (unlabeled) training nodes. For OGB arXiv, we use the provided temporal split. For Squirrel, we use the version without duplicated nodes and the split both provided by \citet{platonov2023a}. 

We parametrize the (C)SBMs following \citet{overrobustness}. In a (C)SBM, the labels $y_i$ are sampled uniformly from $\{0, 1\}$. Then, the respective node features are sampled from a $d$-dimensional Gaußian distribution $\mathcal{N}((2 y_i - 1) \mu, \sigma I)$ with $\mu = \tfrac{K \sigma}{2 \sqrt{d}} \in \mathbb{R}^d$. We set $\sigma = 1$ and $K = 1.5$, resulting in a regime, where graph structure is important for node classification \citep{overrobustness}. Furthermore, the dimensionality $d$ is set to 21. The connection probabilities between same-class nodes $p$ and different-class nodes $q$ are set to $p=0.15$\% and $q = 0.63$\%. (This can be understood as an "inverted" Cora fit, as the maximum likelihood fit to Cora results in $p=0.63$\% and $q=0.15$\%.) We use an $80$\%/$20$\% train/validation split on the nodes.

\subsection{Models}\label{sec:models}
In the following, we present the hyperparameters and architectural details for the models used in this work. The experimental configuration files including all hyperparameters will be made public upon acceptance.

\begin{itemize}[leftmargin=0.5cm]
\item{GPRGNN}: We fix the predictive part as a two-layer MLP with 64 hidden units (256 for OGB arXiv). The transition matrix is the symmetric normalized adjacency with self-loops $\mathring{\mL} = \mathring{\mD}^{\nicefrac{-1}{2}} \mathring{\adj} \mathring{\mD}^{\nicefrac{-1}{2}}$, while the diffusion coefficients are randomly initialized. A total of \(K = 10\) diffusion steps is considered. During training, the MLP dropout is fixed at $0.2$ and no dropout is applied to the adjacency. In contrast to \citet{gprgnn}, the diffusion coefficients are always learned with weight decay. This acts as regularization and prevents the coefficients from growing indefinitely.
\item{ChebNetII}: Similarly, we use a two-layer MLP for the features with 64 hidden units and  \(K = 10\). Following the authors \citet{chebynetii}, we use the shifted graph Laplacian without self-loops \(\mL = - \mD^{\nicefrac{-1}{2}} \adj \mD^{\nicefrac{-1}{2}}\). \(\mL\) then has eigenvalues in the range \([-1, 1]\) instead of \([0, 2]\). This shift is required for the parametrization of the Chebyshev polynomial, however, does not affect eigenvectors. During training, we apply dropout of \(0.8\) to the features and  \(0.5\) to \(\mL\) to the propagation. 
\item{GAT}: We use two layers with 64 hidden units and a single attention head. During training we apply a dropout of 0.5 to the hidden units. No neighborhood dropout is applied. 
\item{GCN}: We use a two-layer GCN with 64 hidden units. For OGB arXiv we increase to a three layer architecture with 256 hidden units. During training, a dropout of 0.5 is applied.
\item{APPNP}: The structure of APPNP mirrors that of GPRGNN. A two-layer MLP with 64 hidden units (256 for OGB arXiv) encodes the node attributes. This is followed by generalized graph diffusion with transition matrix $\mathring{\mL} = \mathring{\mD}^{\nicefrac{-1}{2}} \mathring{\adj} \mathring{\mD}^{\nicefrac{-1}{2}}$ and coefficient $\gamma_K = (1-\alpha)K$ and $\gamma_l = \alpha (1-\alpha)l$ for $l < K$. As for GPRGNN, a total of $K = 10$ steps is done and the MLP dropout is fixed at 0.2, while no dropout is applied to the adjacency. The $\alpha$ is set to $0.1$. 
\item{SoftMedian GDC:} We follow the default configuration of \citet{prbcd} and use a temperature of \(T=0.2\) for the SoftMedian aggregation with 64 hidden dimensions as well as dropout of \(0.5\). The Personalized PageRank (PPR) diffusion coefficient is fixed to \(\alpha=0.15\) and then a top \(k=64\) sparsification is applied. In the attacks, the model is fully differentiable but the sparsification of the propagation matrix.
\item{MLP:} The MLP follows GPRGNN's prediction module. It has two layers with 64 hidden units (256 for OGB arXiv) and is trained using a dropout of 0.2 on the hidden layer.
\end{itemize}

\textbf{Training parameters.} We perform adversarial training as described in Algorithm~\ref{alg:adv-training}. We train for a maximum of 3000 epochs and optimize the model parameters using ADAM with learning rate 1e-2 and weight decay 1e-3.  To avoid overfitting, we apply early stopping with a patience of 200 epochs. For early stopping, we attack all validation nodes with the training attack and consecutively evaluate the loss on all validation nodes w.r.t the resulting perturbed graph. 
Following insights from \citet{prbcd}, the tanh-margin loss is chosen as both attack and training loss. To increase stability, the first ten epochs are performed without including adversarial examples.

\textbf{Training attack parameters.}
During adversarial training, we run a total of 20 attack epochs without early stopping. For PGD and PR-BCD, we stick with the same learning rates as for evaluation. For LR-BCD, we multiply the learning rate by 20 to account for the lower number of epochs. This helps to obtain a sparser solution where local constraints are accounted for.

\subsection{Attacks}

For evaluating the robustness of GNNs, we use the following attacks and hyperparameters. Following insights from \citet{prbcd}, the tanh-margin loss is chosen as attack objective.

\begin{itemize}[leftmargin=0.5cm]
    \item{PGD \citep{xu_general_AT_framework}:} We optimize for 400 epochs. In epoch t, we use a learning rate of $0.1\cdot\Delta / \sqrt{t}$, where $\Delta$ is the global budget (see Section \ref{sec:results}).
    \item{PR-BCD \citep{prbcd}:} We closely follow the setup from \citet{prbcd}. We use a block size of 500.000 (3.000.000 for OGB arXiv) and perform 400 epochs. Afterwards, the best epoch's state is recovered and 100 additional epochs with decaying learning rate and without block resampling are performed. Additionally, we scale the learning rate w.r.t. $\Delta$ and the block size, as suggested by \citet{prbcd}. 
    \item{LR-BCD:} We use a block size of 500.000 (3.000.000 for OGB arXiv) and perform 400 epochs. We scale the learning rate w.r.t. $\Delta$ and the block size, identical to PR-BCD. The local budget is always chosen as 
    \item{FGSM \citep{prbcd}:} The FGSM attack greedily perturbs a single edge each epoch. No hyperparameters are needed.
\end{itemize}

\subsection{Projection of LR-BCD}\label{app:crbcd_projection}

We next give the pseudo-code for our LR-BCD in \Cref{alg:projection}. To improve performance, we only iterate edges that violate a local budget. Edges that only impact the global budget are handled separately. %
Additionally, we stop Algorithm~\ref{alg:projection} early if the global budget is exhausted. 

Note that the algorithm represents the implementation for undirected graphs. 

\begin{algorithm}[H]
   \caption{LR-BCD - Projection with local and global constraints}
   \label{alg:projection}
    \begin{algorithmic}
   \STATE {\bfseries Input:} Upper triangular perturbations $\mS \in \mathbb{R}^{\nnodes \times \nnodes}$, budgets $\Delta\in\mathbb{N}$, $\boldsymbol{\Delta}^{(l)}\in\mathbb{N}^n$%
   \STATE {\bfseries Output:} projected gradients \(\mP = \Pi_{\Delta}^{(l)}(\mS)\)
   \STATE
   \STATE $\mP \leftarrow \mathbf{0}$   \hspace{60pt} // sparse matrix of shape \(\nnodes \times \nnodes\)
   \STATE $\hat{\mS} \leftarrow \Pi_{[0,1]}(\mS)$\;
   \STATE idx \(\leftarrow\) \verb|argsort|(\verb|triu|(\(\mS\)), \verb|`desc'|, \verb|`indices'|)
   \FOR{$(u, v)$ in idx}
    \STATE \algorithmicif\ \(\hat{\emS}_{u, v} = 0\) or \(\Delta - \hat{\emS}_{u, v} < 0\) \algorithmicthen\ Return \algorithmicend\ \algorithmicif \;
    \STATE $\Delta_{u,v} \leftarrow \min{(\Delta, \Delta^{(l)}_u, \Delta^{(l)}_v})$\;
    \IF{$\Delta_{u,v} \geq \hat{\emS}_{u, v}$}
        \STATE ${\emP}_{u, v} \leftarrow \hat{\emS}_{u, v}$
        \STATE $\Delta \leftarrow \Delta - \hat{\emS}_{u, v}$\;
        \STATE $\Delta^{(l)}_u \leftarrow \Delta^{(l)}_u - \hat{\emS}_{u, v}$\;
        \STATE $\Delta^{(l)}_v \leftarrow \Delta^{(l)}_v - \hat{\emS}_{u, v}$\;
    \ENDIF
   \ENDFOR
\end{algorithmic}
\end{algorithm}

\subsection{Adversarial training}\label{app:adversarial_training}

In adversarial training, we solve the following objective
\begin{equation}\label{eq:app_advtrain}
     \argmin_\theta \max_{\tilde{\gG} \in B(\gG)} \sum_{i = 1}^{m} \ell(f_\theta(\tilde{\gG})_i, y_i)
\end{equation}
where $m$ is the number of labeled nodes and \(\gG\) the clean training graph and $\ell$ is the chosen (surrogate) loss. Since \Cref{eq:advtrain} is a challenging (typically) non-convex bi-level optimization problem, we approximate it with alternating first-order optimization. In other words, we train \(f_\theta\) not on the clean graph \(\gG\) but on adversarially perturbed graphs \(\tilde{\gG}\) that are crafted in each iteration. This process is represented by \Cref{alg:adv-training}.

\begin{algorithm}[H]
   \caption{Adversarial Training}
   \label{alg:adv-training}
\begin{algorithmic}
   \STATE {\bfseries Input:} Training/validation graph $\gG_{t/v}$, training/validation labels $\vy_{t/v}$, GNN $f_{\theta_0}$, adversary $\gA$, epochs $E$, warm-up epochs $W$, loss $\ell$, learning rate $\alpha$
   \STATE {\bfseries Output:} GNN $f_{\theta^*}$
   \STATE $\ell_{min} \leftarrow \infty$
   \FOR{$l=1$ {\bfseries to} $W$} %
   \STATE $\theta_l \leftarrow \theta_{l-1} + \alpha*\nabla_{\theta_{l-1}}\ell(f_{\theta_{l-1}}(\gG_t),  \vy_t)$
   \ENDFOR
   \FOR{$l=W$ {\bfseries to} $E$} %
   \STATE $\hat{\gG_t} \leftarrow \gA(f_{\theta_{l-1}}, (\gG_t), \vy_t)$ %
   \STATE $\theta_l \leftarrow \theta_{l-1} + \alpha*\nabla_{\theta_{l-1}}\ell(f_{\theta_{l-1}}(\hat{\gG_t}), \vy_t)$ %
   \STATE  $\tilde{\gG_v} \leftarrow \gA(f_{\theta_{l}}, \gG_v, \vy_v)$ %
   \IF{$\ell_{min} > \ell(f_{\theta_{l}}(\hat{\gG_v}), \vy_v)$}
   \STATE $\ell_{min} \leftarrow \ell(f_{\theta_{l}}(\hat{\gG_v}), \vy_v)$
   \STATE $\theta^* \leftarrow \theta_l$
   \ENDIF
   \ENDFOR
\end{algorithmic}
\end{algorithm}

\subsection{Self-Training} \label{app:self_training}

For self-training, first pseudo-labels are generated using \Cref{alg:self-training}. Then, a new, final GNN is trained on the expanded label set.

\begin{algorithm}[H]
   \caption{Self-Training}
   \label{alg:self-training}
    \begin{algorithmic}
   \STATE {\bfseries Input:} Graph $\gG$, labels $\vy^L$, GNN $f_{\theta}$, epochs $E$
   \STATE {\bfseries Output:} Expanded labels $\hat{\vy}$
    \STATE $f_{\theta^*} \leftarrow \verb|train|(f_{\theta}, \gG, \vy^L)$ \hspace{60pt} // train an initial GNN
    \STATE  $\vy^U \leftarrow f_{\theta^*}(\gG)$ \hspace{60pt} // predict pseudo labels for all unlabeled nodes
    \STATE $\hat{\vy} \leftarrow \vy^L \cup \vy^U$ \hspace{60pt} // return union
\end{algorithmic}
\end{algorithm}

\section{Transductive Setting}\label{app:transductive_results}

\subsection{Proof of \Cref{prop:adv_train_mem}}\label{sec:transductive_proof}

Before proving \Cref{prop:adv_train_mem}, note that the assumption $\gG \in \gB(\gG)$ in \Cref{prop:adv_train_mem} is natural and usually fulfilled in practice. This is as the idea behind choosing a perturbation model $\mathcal{B}(\mathcal{G})$ is to include all graphs, which are close under some notion to the clean graph $\mathcal{G}$. As a result, all common perturbation model choices \cite{GNNBook-ch8-gunnemann} allow the adversary to not change the graph if it would not increase misclassification, i.e., $\mathcal{G} \in \mathcal{B}(\mathcal{G})$ holds. Now, we restate \Cref{prop:adv_train_mem} for convenience:

\setcounter{theorem}{1}
\begin{proposition}    
Assuming we solve the standard learning problem \(\theta^* = \argmin_\theta \mathcal{L}_{0/1}(f_\theta)\) and that $\gG \in \gB(\gG)$. Then, $\tilde{f}_{\theta^*} = \mathcal{A}(f_{\theta^*})$ is an optimal solution to the saddle-point problem arising in (transductive) adversarial training, in the sense of $\mathcal{L}_{adv}(\tilde{f}_{\theta^*}) = \min_\theta \mathcal{L}_{adv}(\tilde{f}_\theta) \le \min_\theta \mathcal{L}_{adv}(f_\theta)$.
\end{proposition}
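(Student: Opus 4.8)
The plan is to chain together two facts: the identity from \Cref{prop:mem} that memorization collapses the robust loss onto the clean loss, and the elementary observation that $\gG \in \gB(\gG)$ forces the robust loss to dominate the clean loss for \emph{every} model, so that the memorized clean optimum is automatically a robust optimum.

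First I would record that the construction $\mathcal{A}$ and the argument behind \Cref{prop:mem} apply verbatim to an arbitrary GNN (not only to $f_{\theta^*}$): since $\tilde{f}_\theta = \mathcal{A}(f_\theta)$ ignores every structural change and always predicts as $f_\theta$ does on the clean graph $\gG$, we have for every parameter $\theta$
\[
\mathcal{L}_{adv}(\tilde{f}_\theta) = \mathcal{L}_{0/1}(\tilde{f}_\theta) = \mathcal{L}_{0/1}(f_\theta).
\]
Taking the infimum over $\theta$ on the two ends gives $\min_\theta \mathcal{L}_{adv}(\tilde{f}_\theta) = \min_\theta \mathcal{L}_{0/1}(f_\theta) = \mathcal{L}_{0/1}(f_{\theta^*})$, where the last step uses the hypothesis that $\theta^*$ solves the standard learning problem. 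Combining this with the displayed identity evaluated at $\theta^*$ yields $\mathcal{L}_{adv}(\tilde{f}_{\theta^*}) = \min_\theta \mathcal{L}_{adv}(\tilde{f}_\theta)$, which is the first equality in the claim.

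Next I would use $\gG \in \gB(\gG)$: since the clean graph is a feasible choice for the inner maximization, $\mathcal{L}_{adv}(f_\theta) = \max_{\tilde{\gG} \in \gB(\gG)} \sum_i \ell_{0/1}(f_\theta(\tilde{\gG})_i, y_i) \ge \sum_i \ell_{0/1}(f_\theta(\gG)_i, y_i) = \mathcal{L}_{0/1}(f_\theta)$ for every $\theta$. Hence
\[
\min_\theta \mathcal{L}_{adv}(f_\theta) \;\ge\; \min_\theta \mathcal{L}_{0/1}(f_\theta) \;=\; \mathcal{L}_{0/1}(f_{\theta^*}) \;=\; \mathcal{L}_{adv}(\tilde{f}_{\theta^*}),
\]
which is the remaining inequality $\mathcal{L}_{adv}(\tilde{f}_{\theta^*}) \le \min_\theta \mathcal{L}_{adv}(f_\theta)$. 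Assembling the two displays finishes the proof.

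The only points that need care are bookkeeping rather than mathematics: that the minimization in $\min_\theta \mathcal{L}_{adv}(\tilde{f}_\theta)$ ranges over the same parameter set as the standard problem (so $\theta^*$ is admissible), that $\mathcal{A}$ leaves the parameters untouched and merely prepends a data-independent preprocessing step (so $\mathcal{L}_{0/1}(\tilde{f}_\theta) = \mathcal{L}_{0/1}(f_\theta)$ holds literally), and that the clean optimum is attained, which is precisely the stated hypothesis ``assuming we solve the standard learning problem''. I expect no genuine obstacle here; the entire content sits in \Cref{prop:mem} and in the feasibility of $\gG$ for $\gB(\gG)$.
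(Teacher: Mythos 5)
Your proposal is correct and follows essentially the same route as the paper's proof: both combine the memorization identity $\mathcal{L}_{adv}(\tilde{f}_\theta) = \mathcal{L}_{0/1}(\tilde{f}_\theta) = \mathcal{L}_{0/1}(f_\theta)$ from \Cref{prop:mem} with the lower bound $\mathcal{L}_{0/1}(f_\theta) \le \mathcal{L}_{adv}(f_\theta)$ that follows from $\gG \in \gB(\gG)$, then minimize over $\theta$. The only difference is the order in which the two ingredients are presented, which is immaterial.
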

\begin{proof}
    Assuming $\mathcal{G} \in \mathcal{B}(\mathcal{G})$ leads to the following lower bound on the adversarial misclassification rate: $\mathcal{L}_{0/1}(f_\theta) \le \mathcal{L}_{adv}(f_\theta)$, valid for all $\theta$. This implies $(i)\ \min_\theta \mathcal{L}_{0/1}(f_\theta) \le \min_\theta \mathcal{L}_{adv}(f_\theta)$. %
Now, assuming \(\theta^* = \argmin_\theta \mathcal{L}_{0/1}(f_\theta)\) is given and using our modified learning algorithm $\tilde{f}_{\theta^*}$, we obtain
\begin{equation*}
    \mathcal{L}_{adv}(\tilde{f}_{\theta^*}) = \min_\theta \mathcal{L}_{adv}(\tilde{f}_\theta) \le \min_\theta \mathcal{L}_{adv}(f_\theta)
\end{equation*}
The first equality follows from $\mathcal{L}_{adv}(\tilde{f}_{\theta^*}) = \mathcal{L}_{0/1}(\tilde{f}_{\theta^*}) = \mathcal{L}_{0/1}(f_{\theta^*}) = \min_\theta \mathcal{L}_{0/1}(f_\theta)$ and then, using $\mathcal{L}_{0/1}(f_\theta) = \mathcal{L}_{0/1}(\tilde{f}_\theta) = \mathcal{L}_{adv}(\tilde{f}_\theta)$ results in $\min_\theta \mathcal{L}_{0/1}(f_\theta) = \min_\theta \mathcal{L}_{adv}(\tilde{f}_\theta)$. The last inequality follows from $(i)$. This shows the optimality of $\tilde{f}_{\theta^*}$ w.r.t.\ the saddle-point problem. 

\end{proof}

\subsection{Empirical Limitations}

\subsubsection{Adversarial Training Causes Overfitting} \label{app:adv_trn_causes_overfitting}

\Cref{tab:app_adv_causes_overfitting} shows that GPRGNN adversarially trained (robust diffusion) with $\epsilon=1$ on Cora (with self-training) is (almost) perfectly robust against a wide range of attacks, not only PGD. Similar results hold for other common transductive benchmark datasets such as Cora-ML (\Cref{tab:app_adv_causes_overfitting_coraml}), or Citeseer (\Cref{tab:app_adv_causes_overfitting_citeseer}).

\begin{table}[h!]
    \centering
    \caption{GPRGNN's test-accuracy [\%] under attack on Cora. This robust diffusion has been adversarially trained with $\epsilon=1$ (with self-training) and shows (almost) perfect robustness against a wide range of attacks.}
    \vskip 0.1in
    \resizebox{\linewidth}{!}{
    \begin{tabular}{llllllll}
\toprule
Perturbed Edges [\%] &        0 &       1 &       5 &        1 &       25 &        50 &        100 \\
\midrule
PGD        &  82.0+1.5 &  81.8+1.4 &  81.8+1.4 &  81.8+1.4 &  81.8+1.4 &  81.7+1.4 &  81.7+1.3 \\
FGSM       &  82.0+1.5 &  81.8+1.4 &  81.8+1.4 &  81.7+1.3 &  81.3+1.2 &  81.0+1.1 &  80.6+1.0 \\
GreedyRBCD &  82.0+1.5 &  81.8+1.4 &  81.8+1.4 &  81.7+1.3 &  81.4+1.2 &  81.0+1.1 &  80.7+1.1 \\
PRBCD      &  82.0+1.5 &  81.8+1.4 &  81.8+1.4 &  81.7+1.4 &  81.6+1.3 &  81.5+1.3 &  81.4+1.2 \\
\bottomrule
\end{tabular}}
    \label{tab:app_adv_causes_overfitting}
\end{table}

\begin{table}[h!]
    \centering
    \caption{GPRGNN's test-accuracy [\%] under attack on Cora-ML. This robust diffusion has been adversarially trained with $\epsilon=1$ (with self-training) and shows (almost) perfect robustness against a wide range of attacks.}
    \vskip 0.1in
    \resizebox{\linewidth}{!}{
    \begin{tabular}{llllllll}
\toprule
Perturbed Edges [\%] &        0 &       1 &       5 &        1 &       25 &        50 &        100 \\
\midrule
PGD        &  81.8+1.9 &  81.8+1.9 &  81.8+1.9 &  81.8+1.9 &  81.8+1.9 &  81.7+1.9 &  81.7+1.9 \\
DICE       &  81.8+1.9 &  81.8+1.9 &  81.8+1.9 &  81.8+1.9 &  81.8+1.9 &  81.8+1.9 &  81.8+1.9 \\
FGSM       &  81.8+1.9 &  81.8+2.0 &  81.7+1.9 &  81.6+2.0 &  81.5+2.0 &  81.2+2.0 &  80.9+2.0 \\
GreedyRBCD &  81.8+1.9 &  81.8+1.9 &  81.7+1.9 &  81.6+1.9 &  81.5+1.9 &  81.2+2.0 &  81.0+2.0 \\
PRBCD      &  81.8+1.9 &  81.8+1.9 &  81.8+1.9 &  81.8+1.9 &  81.7+1.9 &  81.6+1.9 &  81.5+1.9 \\
\bottomrule
\end{tabular}}
    \label{tab:app_adv_causes_overfitting_coraml}
\end{table}

\begin{table}[h!]
    \centering
    \caption{GPRGNN's test-accuracy [\%] under attack on Citeseer. This robust diffusion has been adversarially trained with $\epsilon=1$ (with self-training) and shows (almost) perfect robustness against a wide range of attacks.}
    \vskip 0.1in
    \resizebox{\linewidth}{!}{
    \begin{tabular}{llllllll}
\toprule
Perturbed Edges [\%] &        0 &       1 &       5 &        1 &       25 &        50 &        100 \\
\midrule
PGD        &  71.8+0.9 &  71.8+0.9 &  71.8+0.9 &  71.8+0.9 &  71.7+0.9 &  71.8+0.8 &  71.8+0.9 \\
DICE       &  71.8+0.9 &  71.8+0.9 &  71.8+0.9 &  71.8+0.9 &  71.8+0.9 &  71.8+0.9 &  71.9+0.9 \\
FGSM       &  71.8+0.9 &  71.8+0.9 &  71.7+0.9 &  71.7+0.9 &  71.5+0.9 &  71.3+1.0 &  71.0+1.0 \\
GreedyRBCD &  71.8+0.9 &  71.8+0.9 &  71.7+0.9 &  71.7+0.9 &  71.5+0.9 &  71.3+0.9 &  71.0+1.0 \\
PRBCD      &  71.8+0.9 &  71.8+0.9 &  71.8+0.9 &  71.7+0.9 &  71.7+0.9 &  71.7+0.9 &  71.4+0.9 \\
\bottomrule
\end{tabular}}
    \label{tab:app_adv_causes_overfitting_citeseer}
\end{table}

\FloatBarrier

\subsubsection{Self-Training as the (Main) Cause for Robustness} \label{app:self_trn_main_cause_robustness}

Figure \ref{fig:app_selftrn_cora_eps01} shows that self-training is the major cause for robustness not only for GCN but also for other architectures. Figure \ref{fig:app_selftrn_cora_eps05} highlights that this phenomenon also occurs given different attack strengths in the adversarial training. Figure \ref{fig:app_selftrn_citeseer_eps01} shows that these results are not only constrained to the Cora dataset.

\begin{figure}[h!]
    \centering
    \subfloat[GCN]{\includegraphics[width=0.32\textwidth]{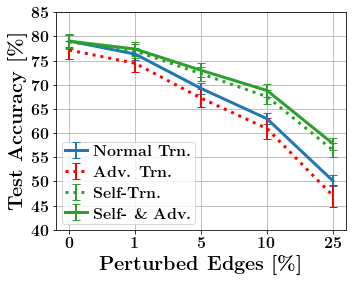}}
    \hfill
    \subfloat[APPNP]{\includegraphics[width=0.32\textwidth]{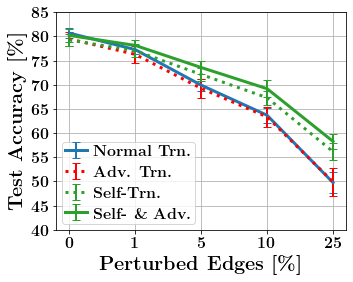}} 
    \hfill
    \subfloat[Robust Diffusion (GPRGNN)]{\includegraphics[width=0.32\textwidth]{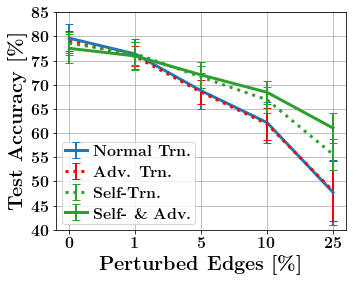}} 

    \caption{Training different architectures with different training schemes on (transductive) Cora with a global training budget of $\epsilon=0.1$ (10\% perturbed edges) and using self-training.}
    \label{fig:app_selftrn_cora_eps01}
\end{figure}

\begin{figure}[h!]
    \centering
    \subfloat[GCN]{\includegraphics[width=0.32\textwidth]{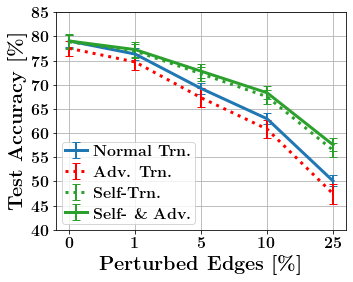}}
    \hfill
    \subfloat[APPNP]{\includegraphics[width=0.32\textwidth]{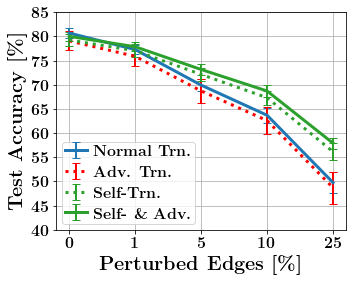}} 
    \hfill
    \subfloat[Robust Diffusion (GPRGNN)]{\includegraphics[width=0.32\textwidth]{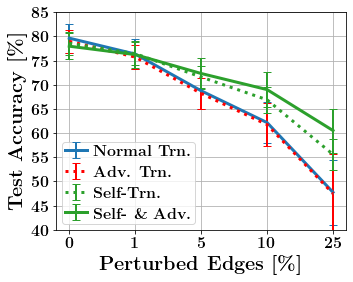}} 

    \caption{Training different architectures with different training schemes on (transductive) Cora with a global training budget of $\epsilon=0.05$ (5\% perturbed edges) and using self-training.}
    \label{fig:app_selftrn_cora_eps05}
\end{figure}

\begin{figure}[h!]
    \centering
    \subfloat[GCN]{\includegraphics[width=0.32\textwidth]{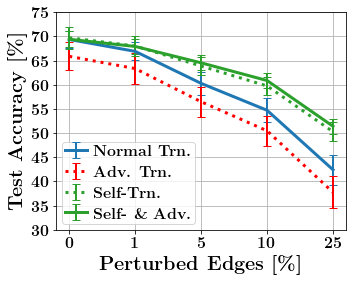}}
    \hfill
    \subfloat[APPNP]{\includegraphics[width=0.32\textwidth]{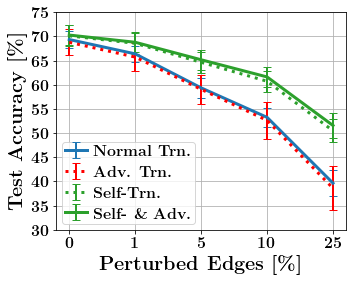}} 
    \hfill
    \subfloat[Robust Diffusion (GPRGNN)]{\includegraphics[width=0.32\textwidth]{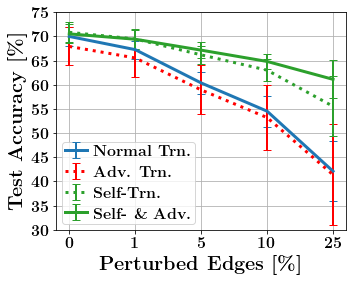}} 

    \caption{Training different architectures with different training schemes on (transductive) Citeseer with a global training budget of $\epsilon=0.1$ (10\% perturbed edges) and using self-training.}
    \label{fig:app_selftrn_citeseer_eps01}
\end{figure}

\newpage
\section{Additional Inductive Results}\label{app:inductive_add}

Here, we present additional results for adversarial training in the inductive setting. We supplement the results of the main part with results on other datasets as well as more extensive results for Citeseer.

\begin{table}[h!]
\centering
\caption{Comparison on Citeseer of regularly trained models, the state-of-the-art SoftMedian GDC defense, self-trained models and adversarially trained models (train \(\epsilon=20\%\)). The first line shows the robust accuracy [\%] of a standard GCN and all other numbers represent the difference in robust accuracy achieved by various models in percentage points from this standard GCN. The best model is highlighted in bold.}
\vspace{0.3cm}
\label{tab:app_results_citeseer}
\resizebox{\linewidth}{!}{
\begin{tabular}{ccccrrrrrrrrr}
\toprule
   \multirow{2}{*}{\textbf{Model}}              &      \textbf{Self-} &      \textbf{Adv.}         & \textbf{A. eval. $\rightarrow$} &                        &                    \multicolumn{1}{c}{\textbf{LR-BCD}} &                       \multicolumn{1}{c}{\textbf{PR-BCD}} &                       \multicolumn{1}{c}{\textbf{LR-BCD}} &                       \multicolumn{1}{c}{\textbf{PR-BCD}} &                    \multicolumn{1}{c}{\textbf{LR-BCD}} &                       \multicolumn{1}{c}{\textbf{PR-BCD}} \\
                   &     \textbf{trn.}   &     \textbf{trn.}       & \textbf{A. trn.  $\downarrow$}  & \multicolumn{1}{c}{\textbf{Clean}}  & \multicolumn{2}{c}{$\boldsymbol{\epsilon}$\textbf{=5\%}} & \multicolumn{2}{c}{$\boldsymbol{\epsilon}$\textbf{=10\%}} & \multicolumn{2}{c}{$\boldsymbol{\epsilon}$\textbf{=25\%}}  \\
\midrule
\textbf{GCN} & \xmark & \xmark & - &                        72.0 $\pm$ 2.5 &                        62.5 $\pm$ 1.9 &                         59.3 $\pm$ 2.3 &                         54.7 $\pm$ 2.8 &                         51.7 $\pm$ 2.8 &                         45.3 $\pm$ 3.4 &                         38.0 $\pm$ 3.8 \\
\hline
\textbf{GAT} & \xmark & \xmark & - &                        -3.6 $\pm$ 2.7 &                        -3.6 $\pm$ 2.4 &                         -0.6 $\pm$ 3.0 &                         -3.9 $\pm$ 3.4 &                         +0.5 $\pm$ 3.5 &                        -15.9 $\pm$ 5.3 &                         -2.3 $\pm$ 7.3 \\
\textbf{APPNP} & \xmark & \xmark & - &                        +0.2 $\pm$ 1.1 &                        -0.9 $\pm$ 0.7 &                         +1.6 $\pm$ 0.7 &                         +1.7 $\pm$ 0.7 &                         +1.9 $\pm$ 1.4 &                         +3.0 $\pm$ 1.2 &                         +2.2 $\pm$ 2.5 \\
\textbf{GPRGNN} & \xmark & \xmark & - &                        +2.2 $\pm$ 4.3 &                        +2.0 $\pm$ 3.3 &                         +3.1 $\pm$ 4.4 &                         +4.2 $\pm$ 2.7 &                         +3.6 $\pm$ 4.9 &                         +5.5 $\pm$ 3.9 &                         +7.9 $\pm$ 4.6 \\
\textbf{ChebNetII} & \xmark & \xmark & - &                        +1.1 $\pm$ 2.2 &                        +4.0 $\pm$ 1.0 &                         +3.4 $\pm$ 2.5 &                         +5.8 $\pm$ 2.5 &                         +5.0 $\pm$ 2.4 &                        +10.4 $\pm$ 2.6 &                         +7.6 $\pm$ 3.4 \\
\textbf{SoftMedian} & \xmark & \xmark & - &                        +0.9 $\pm$ 1.7 &                        +4.4 $\pm$ 1.5 &                         +5.9 $\pm$ 1.4 &                         +9.5 $\pm$ 2.2 &                         +9.3 $\pm$ 1.9 &                        +16.2 $\pm$ 2.4 &                        +14.6 $\pm$ 2.9 \\
\hline
\textbf{GCN} & $\checkmark$ & \xmark & - &                        +1.1 $\pm$ 0.9 &                        +3.0 $\pm$ 0.3 &                         +3.4 $\pm$ 0.7 &                         +6.4 $\pm$ 0.9 &                         +5.1 $\pm$ 0.8 &                         +9.2 $\pm$ 1.4 &                         +7.5 $\pm$ 1.6 \\
\textbf{GAT} & $\checkmark$ & \xmark & - &                        +0.2 $\pm$ 2.6 &                        +0.8 $\pm$ 2.5 &                         +4.8 $\pm$ 2.7 &                         +1.6 $\pm$ 3.2 &                         +5.5 $\pm$ 3.7 &                         -6.7 $\pm$ 3.6 &                        +10.0 $\pm$ 4.3 \\
\textbf{APPNP} & $\checkmark$ & \xmark & - &                        -0.5 $\pm$ 2.2 &                        +2.0 $\pm$ 0.9 &                         +3.3 $\pm$ 1.3 &                         +6.2 $\pm$ 2.2 &                         +5.3 $\pm$ 2.3 &                         +8.7 $\pm$ 2.5 &                         +5.9 $\pm$ 3.3 \\
\textbf{GPRGNN} & $\checkmark$ & \xmark & - &                        +1.7 $\pm$ 3.4 &                        +7.3 $\pm$ 2.8 &                         +7.2 $\pm$ 3.4 &                        +13.2 $\pm$ 3.5 &                        +10.9 $\pm$ 3.7 &                        +18.4 $\pm$ 4.2 &                        +19.6 $\pm$ 5.4 \\
\textbf{ChebNetII} & $\checkmark$ & \xmark & - &                        +0.2 $\pm$ 1.7 &                        +2.6 $\pm$ 0.9 &                         +3.9 $\pm$ 1.5 &                         +7.6 $\pm$ 2.2 &                         +6.9 $\pm$ 1.3 &                        +12.0 $\pm$ 1.1 &                        +10.4 $\pm$ 2.3 \\
\hline
\multirow{2}{*}{\textbf{GCN}} & \multirow{2}{*}{$\checkmark$} & \multirow{2}{*}{$\checkmark$} & \textbf{LR-BCD} &                        -0.2 $\pm$ 1.2 &                        +2.3 $\pm$ 0.3 &                         +3.0 $\pm$ 0.9 &                         +7.8 $\pm$ 1.6 &                         +5.9 $\pm$ 1.5 &                        +10.9 $\pm$ 2.1 &                         +8.1 $\pm$ 2.3 \\
                   &              &              & \textbf{PR-BCD} &                        +0.0 $\pm$ 1.9 &                        +1.4 $\pm$ 0.3 &                         +2.5 $\pm$ 0.9 &                         +6.9 $\pm$ 0.9 &                         +5.3 $\pm$ 1.6 &                         +8.6 $\pm$ 2.3 &                         +5.8 $\pm$ 2.4 \\

\multirow{2}{*}{\textbf{GAT}} & \multirow{2}{*}{$\checkmark$} & \multirow{2}{*}{$\checkmark$} & \textbf{LR-BCD} &                        +0.8 $\pm$ 1.6 &                        +1.7 $\pm$ 2.0 &                         +5.1 $\pm$ 2.7 &                         +5.9 $\pm$ 3.7 &                         +9.0 $\pm$ 3.0 &                         +8.4 $\pm$ 5.1 &                        +13.1 $\pm$ 3.5 \\
                   &              &              & \textbf{PR-BCD} &                        +1.1 $\pm$ 2.2 &                        +4.5 $\pm$ 1.8 &                         +7.8 $\pm$ 1.2 &                         +8.9 $\pm$ 2.8 &                        +13.2 $\pm$ 3.7 &                        +10.3 $\pm$ 2.3 &                        +21.8 $\pm$ 4.8 \\

\multirow{2}{*}{\textbf{APPNP}} & \multirow{2}{*}{$\checkmark$} & \multirow{2}{*}{$\checkmark$} & \textbf{LR-BCD} &                        -0.8 $\pm$ 1.3 &                        +2.2 $\pm$ 1.0 &                         +3.0 $\pm$ 1.5 &                         +7.6 $\pm$ 1.6 &                         +5.6 $\pm$ 1.9 &                        +11.1 $\pm$ 1.9 &                         +8.3 $\pm$ 3.6 \\
                   &              &              & \textbf{PR-BCD} &                        -0.2 $\pm$ 2.2 &                        +2.2 $\pm$ 1.2 &                         +3.1 $\pm$ 1.8 &                         +6.2 $\pm$ 2.3 &                         +5.5 $\pm$ 3.1 &                         +9.0 $\pm$ 2.9 &                         +6.5 $\pm$ 3.8 \\

\multirow{2}{*}{\textbf{GPRGNN}} & \multirow{2}{*}{$\checkmark$} & \multirow{2}{*}{$\checkmark$} & \textbf{LR-BCD} &                        +1.7 $\pm$ 3.0 &  \textbf{+8.1 $\boldsymbol{\pm}$ 2.7} &                         +8.4 $\pm$ 3.4 &  \textbf{+15.7 $\boldsymbol{\pm}$ 3.6} &                        +13.6 $\pm$ 3.5 &  \textbf{+24.8 $\boldsymbol{\pm}$ 4.2} &                        +22.9 $\pm$ 4.3 \\
                   &              &              & \textbf{PR-BCD} &                        +0.6 $\pm$ 3.6 &                        +8.1 $\pm$ 2.8 &  \textbf{+10.3 $\boldsymbol{\pm}$ 3.4} &                        +15.0 $\pm$ 3.6 &  \textbf{+15.9 $\boldsymbol{\pm}$ 4.2} &                        +23.2 $\pm$ 4.3 &  \textbf{+26.3 $\boldsymbol{\pm}$ 5.4} \\

\multirow{2}{*}{\textbf{ChebNetII}} & \multirow{2}{*}{$\checkmark$} & \multirow{2}{*}{$\checkmark$} & \textbf{LR-BCD} &  \textbf{+3.7 $\boldsymbol{\pm}$ 1.4} &                        +6.7 $\pm$ 0.8 &                         +8.4 $\pm$ 1.0 &                        +11.5 $\pm$ 1.6 &                        +11.5 $\pm$ 1.1 &                        +16.4 $\pm$ 1.9 &                        +16.0 $\pm$ 2.9 \\
                   &              &              & \textbf{PR-BCD} &                        +3.4 $\pm$ 1.7 &                        +7.0 $\pm$ 0.9 &                        +10.0 $\pm$ 1.3 &                        +13.2 $\pm$ 2.2 &                        +14.3 $\pm$ 1.2 &                        +19.5 $\pm$ 1.6 &                        +19.8 $\pm$ 2.3 \\
\bottomrule
\end{tabular}}
\end{table}

\subsection{Test Accuracy}\label{app:inductive_acc}
Table~\ref{tab:app_results_citeseer} supplements Table~\ref{tab:results} and additionally shows the results of the investigated models for an adversary with $\epsilon=5$\% and using only self-training without adversarial training. Thereby, we ablate the effects of self-training and adversarial training on the achieved robustness and highlight, that self-training alone already significantly increases robustness, but combining both methods achieves the best results. %

Table~\ref{tab:app_results_cora} and Table~\ref{tab:app_results_cora_ml} show analogous results for Cora and Cora-ML, respectively and Table~\ref{tab:app_results_pubmed} for Pubmed. They again show that robust diffusion achieves the highest robustness values. Table~\ref{tab:app_results_arxiv} shows the results for OGB-arXiv. The training split in OGB-arXiv is fully labeled and hence, self-training is not applicable. Adversarial training improves the robustness of the investigated models on most occasions. A learnable, robust diffusion is particularly effective for small $\epsilon$, while fixating the diffusion can be advantageous for large $\epsilon$. This is most likely explained by the different structure of this large-scale graph compared to the other benchmark graphs, indicated by adversaries with $\epsilon=1$\% or $\epsilon=2$\% being already very effective and corresponding to a large number of adversarial edges.

In \Cref{tab:app_result_grand_and_attacks}, we additionally compare the performance using the DICE and PGD attacks. Moreover, we show that our adversarial training approach is a more effective defense than GRAND~\citep{GRAND} against LR-BCD.

\begin{table}[h!]
\centering
\caption{Comparison on Cora of regularly trained models, the state-of-the-art SoftMedian GDC defense, self-trained models and adversarially trained models (train \(\epsilon=20\%\)). The first line shows the robust accuracy [\%] of a standard GCN and all other numbers represent the difference in robust accuracy achieved by various models in percentage points from this standard GCN. The best model is highlighted in bold.}
\vspace{0.3cm}
\label{tab:app_results_cora}
\resizebox{\linewidth}{!}{
\begin{tabular}{ccccrrrrrrrrr}
\toprule
   \multirow{2}{*}{\textbf{Model}}              &      \textbf{Self-} &      \textbf{Adv.}         & \textbf{A. eval. $\rightarrow$} &                        &                    \multicolumn{1}{c}{\textbf{LR-BCD}} &                       \multicolumn{1}{c}{\textbf{PR-BCD}} &                       \multicolumn{1}{c}{\textbf{LR-BCD}} &                       \multicolumn{1}{c}{\textbf{PR-BCD}} &                    \multicolumn{1}{c}{\textbf{LR-BCD}} &                       \multicolumn{1}{c}{\textbf{PR-BCD}} \\
                   &     \textbf{trn.}   &     \textbf{trn.}       & \textbf{A. trn.  $\downarrow$}  & \multicolumn{1}{c}{\textbf{Clean}}  & \multicolumn{2}{c}{$\boldsymbol{\epsilon}$\textbf{=5\%}} & \multicolumn{2}{c}{$\boldsymbol{\epsilon}$\textbf{=10\%}} & \multicolumn{2}{c}{$\boldsymbol{\epsilon}$\textbf{=25\%}}  \\
\midrule
\textbf{GCN} & \xmark & \xmark & - &                        79.9 $\pm$ 0.9 &                        67.5 $\pm$ 0.4 &                        66.7 $\pm$ 0.8 &                         60.8 $\pm$ 0.2 &                         59.1 $\pm$ 0.4 &                         47.9 $\pm$ 1.8 &                         44.1 $\pm$ 0.7 \\
\hline
\textbf{GAT} & \xmark & \xmark & - &                        -2.7 $\pm$ 1.6 &                        -2.1 $\pm$ 0.9 &                        -1.8 $\pm$ 2.3 &                         -5.3 $\pm$ 1.0 &                         -1.1 $\pm$ 3.2 &                        -11.7 $\pm$ 3.9 &                         -0.6 $\pm$ 3.1 \\
\textbf{APPNP} & \xmark & \xmark & - &                        +1.6 $\pm$ 0.1 &                        +2.4 $\pm$ 1.7 &                        +2.2 $\pm$ 0.4 &                         +2.4 $\pm$ 1.7 &                         +2.7 $\pm$ 0.9 &                         +7.2 $\pm$ 2.6 &                         +3.1 $\pm$ 0.7 \\
\textbf{GPRGNN} & \xmark & \xmark & - &                        +2.0 $\pm$ 1.2 &                        +7.4 $\pm$ 0.5 &                        +4.0 $\pm$ 0.8 &                         +8.5 $\pm$ 0.1 &                         +5.3 $\pm$ 1.0 &                        +13.8 $\pm$ 1.4 &                         +7.9 $\pm$ 1.2 \\
\textbf{ChebNetII} & \xmark & \xmark & - &                        +1.0 $\pm$ 2.3 &                        +3.3 $\pm$ 2.1 &                        +1.1 $\pm$ 2.6 &                         +4.0 $\pm$ 1.3 &                         +2.1 $\pm$ 1.4 &                         +8.2 $\pm$ 0.7 &                         +4.9 $\pm$ 1.4 \\
\textbf{SoftMedian} & \xmark & \xmark & - &                        -2.9 $\pm$ 1.0 &                        +3.8 $\pm$ 1.8 &                        +2.2 $\pm$ 1.0 &                         +6.6 $\pm$ 1.1 &                         +5.9 $\pm$ 1.4 &                        +14.8 $\pm$ 2.7 &                        +10.4 $\pm$ 1.4 \\
\hline
\textbf{GCN} & $\checkmark$ & \xmark & - &                        +0.6 $\pm$ 1.0 &                        +3.5 $\pm$ 1.0 &                        +2.0 $\pm$ 0.9 &                         +5.4 $\pm$ 1.2 &                         +3.2 $\pm$ 0.6 &                         +9.4 $\pm$ 0.5 &                         +4.6 $\pm$ 0.8 \\
\textbf{GAT} & $\checkmark$ & \xmark & - &                        -4.2 $\pm$ 2.7 &                        -0.5 $\pm$ 1.7 &                        -2.2 $\pm$ 3.2 &                         -3.7 $\pm$ 1.0 &                         -1.7 $\pm$ 3.3 &                         -8.8 $\pm$ 3.6 &                         -2.4 $\pm$ 3.4 \\
\textbf{APPNP} & $\checkmark$ & \xmark & - &                        +2.0 $\pm$ 0.9 &                        +5.3 $\pm$ 1.7 &                        +4.0 $\pm$ 0.8 &                         +8.2 $\pm$ 1.4 &                         +6.1 $\pm$ 1.0 &                        +13.4 $\pm$ 2.6 &                         +7.1 $\pm$ 0.8 \\
\textbf{GPRGNN} & $\checkmark$ & \xmark & - &  \textbf{+2.8 $\boldsymbol{\pm}$ 1.0} &  \textbf{+8.2 $\boldsymbol{\pm}$ 0.5} &                        +6.7 $\pm$ 0.2 &                        +12.0 $\pm$ 0.7 &                        +10.1 $\pm$ 1.3 &                        +19.7 $\pm$ 2.4 &                        +16.7 $\pm$ 3.0 \\
\textbf{ChebNetII} & $\checkmark$ & \xmark & - &                        +2.3 $\pm$ 0.5 &                        +5.7 $\pm$ 1.2 &                        +3.3 $\pm$ 0.8 &                         +6.8 $\pm$ 1.2 &                         +4.6 $\pm$ 0.9 &                        +11.7 $\pm$ 2.3 &                         +4.6 $\pm$ 1.1 \\
\hline
\multirow{2}{*}{\textbf{GCN}} & \multirow{2}{*}{$\checkmark$} & \multirow{2}{*}{$\checkmark$} & \textbf{LR-BCD} &                        +0.5 $\pm$ 0.7 &                        +3.3 $\pm$ 1.1 &                        +2.1 $\pm$ 0.4 &                         +5.7 $\pm$ 0.6 &                         +3.1 $\pm$ 0.3 &                        +10.4 $\pm$ 2.4 &                         +5.4 $\pm$ 1.2 \\
                   &              &              & \textbf{PR-BCD} &                        +1.1 $\pm$ 1.1 &                        +3.1 $\pm$ 0.6 &                        +2.3 $\pm$ 0.2 &                         +6.2 $\pm$ 0.8 &                         +3.8 $\pm$ 0.3 &                         +8.7 $\pm$ 1.8 &                         +5.0 $\pm$ 0.7 \\

\multirow{2}{*}{\textbf{GAT}} & \multirow{2}{*}{$\checkmark$} & \multirow{2}{*}{$\checkmark$} & \textbf{LR-BCD} &                        -1.5 $\pm$ 0.9 &                        +0.4 $\pm$ 1.3 &                        +0.6 $\pm$ 1.0 &                         +2.4 $\pm$ 1.6 &                         +2.3 $\pm$ 0.5 &                         +6.5 $\pm$ 3.0 &                         +7.3 $\pm$ 1.4 \\
                   &              &              & \textbf{PR-BCD} &                        -2.1 $\pm$ 0.9 &                        +3.8 $\pm$ 0.7 &                        +5.0 $\pm$ 1.9 &                         +5.1 $\pm$ 1.5 &                         +7.6 $\pm$ 1.9 &                         +5.9 $\pm$ 3.5 &                        +14.5 $\pm$ 3.0 \\

\multirow{2}{*}{\textbf{APPNP}} & \multirow{2}{*}{$\checkmark$} & \multirow{2}{*}{$\checkmark$} & \textbf{LR-BCD} &                        +1.8 $\pm$ 1.1 &                        +5.0 $\pm$ 1.2 &                        +3.9 $\pm$ 0.7 &                         +8.5 $\pm$ 0.4 &                         +6.8 $\pm$ 0.8 &                        +14.7 $\pm$ 2.7 &                         +8.2 $\pm$ 1.2 \\
                   &              &              & \textbf{PR-BCD} &                        +1.0 $\pm$ 0.7 &                        +3.7 $\pm$ 1.1 &                        +3.7 $\pm$ 0.6 &                         +6.6 $\pm$ 1.3 &                         +5.0 $\pm$ 1.1 &                        +12.7 $\pm$ 3.5 &                         +7.3 $\pm$ 0.2 \\

\multirow{2}{*}{\textbf{GPRGNN}} & \multirow{2}{*}{$\checkmark$} & \multirow{2}{*}{$\checkmark$} & \textbf{LR-BCD} &                        +2.6 $\pm$ 0.2 &                        +8.2 $\pm$ 0.6 &  \textbf{+8.1 $\boldsymbol{\pm}$ 0.4} &                        +11.8 $\pm$ 1.2 &                        +11.0 $\pm$ 0.0 &                        +22.1 $\pm$ 2.3 &                        +16.8 $\pm$ 1.5 \\
                   &              &              & \textbf{PR-BCD} &                        +0.0 $\pm$ 0.6 &                        +8.1 $\pm$ 1.1 &                        +8.1 $\pm$ 0.4 &  \textbf{+13.3 $\boldsymbol{\pm}$ 1.3} &  \textbf{+12.8 $\boldsymbol{\pm}$ 0.7} &  \textbf{+24.2 $\boldsymbol{\pm}$ 2.4} &  \textbf{+22.2 $\boldsymbol{\pm}$ 1.4} \\

\multirow{2}{*}{\textbf{ChebNetII}} & \multirow{2}{*}{$\checkmark$} & \multirow{2}{*}{$\checkmark$} & \textbf{LR-BCD} &                        -6.7 $\pm$ 9.6 &                        -1.2 $\pm$ 7.0 &                        -1.2 $\pm$ 6.7 &                         +1.2 $\pm$ 6.2 &                         +1.8 $\pm$ 4.8 &                        +10.0 $\pm$ 5.8 &                         +6.5 $\pm$ 1.4 \\
                   &              &              & \textbf{PR-BCD} &                        +1.5 $\pm$ 1.5 &                        +3.7 $\pm$ 1.8 &                        +4.8 $\pm$ 0.4 &                         +7.7 $\pm$ 1.5 &                         +6.7 $\pm$ 1.4 &                        +11.7 $\pm$ 3.3 &                         +7.6 $\pm$ 1.4 \\
\bottomrule
\end{tabular}}
\end{table}

\begin{table}[h!]
\centering
\caption{Comparison on Cora-ML of regularly trained models, the state-of-the-art SoftMedian GDC defense, self-trained models and adversarially trained models (train \(\epsilon=20\%\)). The first line shows the robust accuracy [\%] of a standard GCN and all other numbers represent the difference in robust accuracy achieved by various models in percentage points from this standard GCN. The best model is highlighted in bold.}
\vspace{0.3cm}
\label{tab:app_results_cora_ml}
\resizebox{\linewidth}{!}{
\begin{tabular}{ccccrrrrrrrrr}
\toprule
   \multirow{2}{*}{\textbf{Model}}              &      \textbf{Self-} &      \textbf{Adv.}         & \textbf{A. eval. $\rightarrow$} &                        &                    \multicolumn{1}{c}{\textbf{LR-BCD}} &                       \multicolumn{1}{c}{\textbf{PR-BCD}} &                       \multicolumn{1}{c}{\textbf{LR-BCD}} &                       \multicolumn{1}{c}{\textbf{PR-BCD}} &                    \multicolumn{1}{c}{\textbf{LR-BCD}} &                       \multicolumn{1}{c}{\textbf{PR-BCD}} \\
                   &     \textbf{trn.}   &     \textbf{trn.}       & \textbf{A. trn.  $\downarrow$}  & \multicolumn{1}{c}{\textbf{Clean}}  & \multicolumn{2}{c}{$\boldsymbol{\epsilon}$\textbf{=5\%}} & \multicolumn{2}{c}{$\boldsymbol{\epsilon}$\textbf{=10\%}} & \multicolumn{2}{c}{$\boldsymbol{\epsilon}$\textbf{=25\%}}  \\
\midrule
\textbf{GCN} & \xmark & \xmark & - &                        82.5 $\pm$ 1.9 &                         66.0 $\pm$ 2.8 &                         66.2 $\pm$ 2.5 &                         59.2 $\pm$ 2.8 &                         57.4 $\pm$ 2.3 &                         48.5 $\pm$ 1.6 &                         38.0 $\pm$ 2.3 \\
\hline
\textbf{GAT} & \xmark & \xmark & - &                        -1.8 $\pm$ 0.4 &                         -5.6 $\pm$ 2.1 &                         -6.3 $\pm$ 2.1 &                        -11.3 $\pm$ 2.1 &                         -6.0 $\pm$ 2.2 &                        -21.4 $\pm$ 4.3 &                         -4.9 $\pm$ 2.6 \\
\textbf{APPNP} & \xmark & \xmark & - &                        +0.6 $\pm$ 0.7 &                         +3.3 $\pm$ 1.6 &                         +2.3 $\pm$ 1.9 &                         +5.3 $\pm$ 0.2 &                         +3.1 $\pm$ 0.8 &                         +7.0 $\pm$ 1.1 &                         +4.3 $\pm$ 1.1 \\
\textbf{GPRGNN} & \xmark & \xmark & - &                        +0.9 $\pm$ 0.7 &                         +5.0 $\pm$ 0.8 &                         +2.5 $\pm$ 0.4 &                         +5.6 $\pm$ 0.6 &                         +4.5 $\pm$ 0.5 &                         +8.6 $\pm$ 0.1 &                         +8.6 $\pm$ 1.4 \\
\textbf{ChebNetII} & \xmark & \xmark & - &                        -0.2 $\pm$ 0.5 &                         +4.2 $\pm$ 1.7 &                         +1.4 $\pm$ 0.7 &                         +6.0 $\pm$ 1.3 &                         +2.6 $\pm$ 0.2 &                         +7.0 $\pm$ 1.5 &                         +6.3 $\pm$ 2.1 \\
\textbf{SoftMedian} & \xmark & \xmark & - &                        -0.8 $\pm$ 1.0 &                         +7.7 $\pm$ 1.9 &                         +4.8 $\pm$ 1.7 &                        +10.7 $\pm$ 1.4 &                         +7.7 $\pm$ 0.9 &                        +17.0 $\pm$ 0.6 &                        +17.4 $\pm$ 0.5 \\
\hline
\textbf{GCN} & $\checkmark$ & \xmark & - &                        +0.7 $\pm$ 0.7 &                         +4.5 $\pm$ 2.0 &                         +2.6 $\pm$ 1.6 &                         +6.1 $\pm$ 2.4 &                         +3.9 $\pm$ 1.3 &                         +7.3 $\pm$ 0.5 &                         +6.3 $\pm$ 0.7 \\
\textbf{GAT} & $\checkmark$ & \xmark & - &                        -3.6 $\pm$ 0.5 &                         -2.8 $\pm$ 2.7 &                         -1.9 $\pm$ 3.7 &                         -6.9 $\pm$ 2.2 &                         -1.3 $\pm$ 3.3 &                        -17.0 $\pm$ 3.8 &                         -2.0 $\pm$ 3.1 \\
\textbf{APPNP} & $\checkmark$ & \xmark & - &                        +1.4 $\pm$ 0.7 &                         +5.8 $\pm$ 2.0 &                         +4.6 $\pm$ 1.6 &                         +7.3 $\pm$ 1.7 &                         +5.2 $\pm$ 0.4 &                        +10.1 $\pm$ 0.5 &                         +7.7 $\pm$ 1.6 \\
\textbf{GPRGNN} & $\checkmark$ & \xmark & - &  \textbf{+2.1 $\boldsymbol{\pm}$ 1.3} &                        +10.7 $\pm$ 1.8 &                         +7.7 $\pm$ 1.8 &                        +13.5 $\pm$ 1.5 &                         +9.9 $\pm$ 2.1 &                        +17.0 $\pm$ 1.4 &                        +16.9 $\pm$ 2.4 \\
\textbf{ChebNetII} & $\checkmark$ & \xmark & - &                        +1.8 $\pm$ 0.5 &                         +8.0 $\pm$ 1.7 &                         +4.9 $\pm$ 2.1 &                         +9.9 $\pm$ 2.2 &                         +6.2 $\pm$ 1.8 &                        +10.8 $\pm$ 1.7 &                        +10.1 $\pm$ 2.3 \\
\hline
\multirow{2}{*}{\textbf{GCN}} & \multirow{2}{*}{$\checkmark$} & \multirow{2}{*}{$\checkmark$} & \textbf{LR-BCD} &                        +1.2 $\pm$ 0.7 &                         +6.1 $\pm$ 1.9 &                         +3.5 $\pm$ 1.4 &                         +8.6 $\pm$ 1.8 &                         +3.1 $\pm$ 0.5 &                        +10.9 $\pm$ 0.5 &                         +7.2 $\pm$ 1.6 \\
                   &              &              & \textbf{PR-BCD} &                        +0.5 $\pm$ 0.6 &                         +5.4 $\pm$ 1.3 &                         +3.1 $\pm$ 1.0 &                         +8.2 $\pm$ 1.1 &                         +3.5 $\pm$ 1.0 &                        +10.3 $\pm$ 0.7 &                         +6.7 $\pm$ 0.7 \\

\multirow{2}{*}{\textbf{GAT}} & \multirow{2}{*}{$\checkmark$} & \multirow{2}{*}{$\checkmark$} & \textbf{LR-BCD} &                        -2.7 $\pm$ 0.6 &                         +4.3 $\pm$ 1.9 &                         +1.8 $\pm$ 1.6 &                         +7.0 $\pm$ 1.9 &                         +7.2 $\pm$ 1.9 &                        +10.7 $\pm$ 0.1 &                        +16.9 $\pm$ 0.6 \\
                   &              &              & \textbf{PR-BCD} &                        -2.1 $\pm$ 1.0 &                         +2.9 $\pm$ 2.7 &                         +1.3 $\pm$ 2.2 &                         +5.0 $\pm$ 3.0 &                         +4.8 $\pm$ 2.6 &                         +5.6 $\pm$ 1.4 &                        +13.0 $\pm$ 1.6 \\

\multirow{2}{*}{\textbf{APPNP}} & \multirow{2}{*}{$\checkmark$} & \multirow{2}{*}{$\checkmark$} & \textbf{LR-BCD} &                        +0.9 $\pm$ 0.9 &                         +6.7 $\pm$ 2.1 &                         +4.7 $\pm$ 1.4 &                         +9.6 $\pm$ 1.4 &                         +5.4 $\pm$ 1.2 &                        +12.9 $\pm$ 0.5 &                         +8.2 $\pm$ 1.0 \\
                   &              &              & \textbf{PR-BCD} &                        +1.1 $\pm$ 0.4 &                         +7.3 $\pm$ 1.6 &                         +5.8 $\pm$ 1.7 &                        +10.6 $\pm$ 1.6 &                         +6.0 $\pm$ 0.9 &                        +13.4 $\pm$ 0.9 &                         +8.6 $\pm$ 0.7 \\

\multirow{2}{*}{\textbf{GPRGNN}} & \multirow{2}{*}{$\checkmark$} & \multirow{2}{*}{$\checkmark$} & \textbf{LR-BCD} &                        +0.8 $\pm$ 1.2 &                        +12.0 $\pm$ 3.2 &                         +8.9 $\pm$ 2.7 &                        +17.6 $\pm$ 3.7 &                        +13.1 $\pm$ 2.8 &                        +25.9 $\pm$ 3.4 &                        +22.8 $\pm$ 4.4 \\
                   &              &              & \textbf{PR-BCD} &                        -0.0 $\pm$ 1.6 &  \textbf{+13.4 $\boldsymbol{\pm}$ 3.5} &  \textbf{+10.7 $\boldsymbol{\pm}$ 2.9} &  \textbf{+18.2 $\boldsymbol{\pm}$ 4.4} &  \textbf{+16.1 $\boldsymbol{\pm}$ 3.9} &  \textbf{+28.2 $\boldsymbol{\pm}$ 3.6} &  \textbf{+31.0 $\boldsymbol{\pm}$ 4.9} \\

\multirow{2}{*}{\textbf{ChebNetII}} & \multirow{2}{*}{$\checkmark$} & \multirow{2}{*}{$\checkmark$} & \textbf{LR-BCD} &                        +0.9 $\pm$ 0.6 &                         +7.9 $\pm$ 1.6 &                         +6.2 $\pm$ 1.7 &                        +10.9 $\pm$ 1.9 &                         +8.7 $\pm$ 0.8 &                        +15.0 $\pm$ 0.1 &                        +14.8 $\pm$ 1.2 \\
                   &              &              & \textbf{PR-BCD} &                        +1.1 $\pm$ 0.4 &                         +8.5 $\pm$ 2.0 &                         +6.5 $\pm$ 1.2 &                        +13.1 $\pm$ 1.7 &                         +8.9 $\pm$ 0.9 &                        +15.1 $\pm$ 1.1 &                        +15.1 $\pm$ 1.1 \\
\bottomrule
\end{tabular}}
\end{table}

\begin{table}[h!]
\centering
\caption{Comparison on Pubmed of regularly trained models, self-trained models and adversarially trained models (train \(\epsilon=20\%\)). The first line shows the robust accuracy [\%] of a standard GCN and all other numbers represent the difference in robust accuracy achieved by various models in percentage points from this standard GCN. The best model is highlighted in bold.}
\vspace{0.3cm}
\label{tab:app_results_pubmed}
\resizebox{\linewidth}{!}{
\begin{tabular}{ccccrrrrrrrrr}
\toprule
   \multirow{2}{*}{\textbf{Model}}              &      \textbf{Self-} &      \textbf{Adv.}         & \textbf{A. eval. $\rightarrow$} &                        &                    \multicolumn{1}{c}{\textbf{LR-BCD}} &                       \multicolumn{1}{c}{\textbf{PR-BCD}} &                       \multicolumn{1}{c}{\textbf{LR-BCD}} &                       \multicolumn{1}{c}{\textbf{PR-BCD}} &                    \multicolumn{1}{c}{\textbf{LR-BCD}} &                       \multicolumn{1}{c}{\textbf{PR-BCD}} \\
                   &     \textbf{trn.}   &     \textbf{trn.}       & \textbf{A. trn.  $\downarrow$}  & \multicolumn{1}{c}{\textbf{Clean}}  & \multicolumn{2}{c}{$\boldsymbol{\epsilon}$\textbf{=5\%}} & \multicolumn{2}{c}{$\boldsymbol{\epsilon}$\textbf{=10\%}} & \multicolumn{2}{c}{$\boldsymbol{\epsilon}$\textbf{=25\%}}  \\
\midrule
\textbf{GCN} & \xmark & \xmark & - &                        77.4 $\pm$ 0.4 &                         64.3 $\pm$ 0.9 &                         62.2 $\pm$ 0.8 &                         60.4 $\pm$ 0.7 &                         54.0 $\pm$ 0.5 &                         52.8 $\pm$ 1.4 &                         39.3 $\pm$ 1.2 \\
\hline
\textbf{GAT} & \xmark & \xmark & - &                        -3.0 $\pm$ 1.7 &                         -5.6 $\pm$ 4.5 &                         -1.5 $\pm$ 2.8 &                        -10.3 $\pm$ 6.5 &                         -0.9 $\pm$ 3.7 &                        -14.8 $\pm$ 8.5 &                         -0.9 $\pm$ 6.1 \\
\textbf{APPNP} & \xmark & \xmark & - &                        +0.5 $\pm$ 0.5 &                         +1.9 $\pm$ 0.0 &                         +1.8 $\pm$ 0.9 &                         +1.7 $\pm$ 0.1 &                         +2.5 $\pm$ 0.4 &                         +2.4 $\pm$ 0.2 &                         +1.1 $\pm$ 0.9 \\
\textbf{GPRGNN} & \xmark & \xmark & - &                        +1.4 $\pm$ 1.3 &                         +1.3 $\pm$ 2.0 &                         +2.8 $\pm$ 1.2 &                         +0.7 $\pm$ 2.1 &                         +3.0 $\pm$ 1.7 &                         +1.2 $\pm$ 2.3 &                         +3.4 $\pm$ 1.0 \\
\hline
\textbf{GCN} & $\checkmark$ & \xmark & - &                        +1.8 $\pm$ 0.9 &                         +3.4 $\pm$ 1.3 &                         +3.1 $\pm$ 1.0 &                         +4.0 $\pm$ 1.5 &                         +4.1 $\pm$ 0.6 &                         +5.1 $\pm$ 1.7 &                         +2.9 $\pm$ 0.6 \\
\textbf{GAT} & $\checkmark$ & \xmark & - &                        -0.3 $\pm$ 1.0 &                         -0.1 $\pm$ 1.4 &                         +0.5 $\pm$ 0.3 &                         -2.3 $\pm$ 1.9 &                         +0.6 $\pm$ 1.9 &                         -3.8 $\pm$ 3.4 &                         -1.7 $\pm$ 2.5 \\
\textbf{APPNP} & $\checkmark$ & \xmark & - &                        +2.7 $\pm$ 1.1 &                         +5.5 $\pm$ 1.3 &                         +5.7 $\pm$ 1.2 &                         +6.1 $\pm$ 1.2 &                         +6.7 $\pm$ 1.1 &                         +7.3 $\pm$ 1.8 &                         +6.0 $\pm$ 1.3 \\
\textbf{GPRGNN} & $\checkmark$ & \xmark & - &                        +1.7 $\pm$ 1.1 &                         +6.0 $\pm$ 1.5 &                         +5.6 $\pm$ 0.9 &                         +6.6 $\pm$ 1.8 &                         +7.8 $\pm$ 1.4 &                         +9.0 $\pm$ 2.9 &                        +11.5 $\pm$ 2.1 \\
\hline
\multirow{2}{*}{\textbf{GCN}} & \multirow{2}{*}{$\checkmark$} & \multirow{2}{*}{$\checkmark$} & \textbf{LR-BCD} &                        +2.2 $\pm$ 0.8 &                         +5.3 $\pm$ 1.1 &                         +4.2 $\pm$ 0.9 &                         +6.1 $\pm$ 1.0 &                         +4.4 $\pm$ 0.5 &                         +7.9 $\pm$ 1.4 &                         +4.5 $\pm$ 0.9 \\
                &              &              & \textbf{PR-BCD} &                        +1.1 $\pm$ 1.6 &                         +5.8 $\pm$ 1.2 &                         +4.3 $\pm$ 1.2 &                         +6.8 $\pm$ 0.9 &                         +5.0 $\pm$ 1.2 &                         +8.8 $\pm$ 1.3 &                         +5.3 $\pm$ 0.7 \\

\multirow{2}{*}{\textbf{GAT}} & \multirow{2}{*}{$\checkmark$} & \multirow{2}{*}{$\checkmark$} & \textbf{LR-BCD} &                        -0.1 $\pm$ 1.3 &                         +4.0 $\pm$ 1.0 &                         +4.7 $\pm$ 1.2 &                         +4.9 $\pm$ 0.8 &                         +8.2 $\pm$ 0.4 &                         +8.4 $\pm$ 1.3 &                        +13.0 $\pm$ 1.8 \\
                &              &              & \textbf{PR-BCD} &                        -0.3 $\pm$ 1.0 &                         +3.2 $\pm$ 1.5 &                         +3.8 $\pm$ 1.4 &                         +3.1 $\pm$ 1.3 &                         +6.2 $\pm$ 0.7 &                         +4.1 $\pm$ 1.7 &                         +9.0 $\pm$ 2.5 \\

\multirow{2}{*}{\textbf{APPNP}} & \multirow{2}{*}{$\checkmark$} & \multirow{2}{*}{$\checkmark$} & \textbf{LR-BCD} &                        +2.6 $\pm$ 1.4 &                         +6.9 $\pm$ 1.2 &                         +6.9 $\pm$ 1.4 &                         +8.5 $\pm$ 1.2 &                         +8.8 $\pm$ 1.0 &                        +11.2 $\pm$ 1.7 &                         +8.8 $\pm$ 1.3 \\
                &              &              & \textbf{PR-BCD} &                        +2.6 $\pm$ 1.3 &                         +7.8 $\pm$ 1.8 &                         +7.5 $\pm$ 1.9 &                         +9.1 $\pm$ 1.5 &                         +9.4 $\pm$ 1.4 &                        +12.5 $\pm$ 2.0 &                         +9.1 $\pm$ 1.9 \\

\multirow{2}{*}{\textbf{GPRGNN}} & \multirow{2}{*}{$\checkmark$} & \multirow{2}{*}{$\checkmark$} & \textbf{LR-BCD} &  \textbf{+3.1 $\boldsymbol{\pm}$ 1.1} &                        +12.6 $\pm$ 1.8 &                         +9.9 $\pm$ 1.1 &                        +15.4 $\pm$ 1.7 &                        +14.1 $\pm$ 1.6 &                        +21.9 $\pm$ 2.5 &                        +20.6 $\pm$ 2.0 \\
                &              &              & \textbf{PR-BCD} &                        +3.0 $\pm$ 0.8 &  \textbf{+13.6 $\boldsymbol{\pm}$ 1.5} &  \textbf{+13.2 $\boldsymbol{\pm}$ 1.0} &  \textbf{+16.7 $\boldsymbol{\pm}$ 1.2} &  \textbf{+19.0 $\boldsymbol{\pm}$ 1.6} &  \textbf{+23.5 $\boldsymbol{\pm}$ 1.9} &  \textbf{+29.5 $\boldsymbol{\pm}$ 1.7} \\
\bottomrule
\end{tabular}}
\end{table}

\begin{table}[h!]
\centering
\caption{Comparison on OGB-arXiv of regularly trained models and adversarially trained models (train \(\epsilon=5\%\)). All numbers are in \%. The best model is highlighted in bold. In general, adversarial training is effective in increasing adversarial robustness. }
\vspace{0.3cm}
\label{tab:app_results_arxiv}
\resizebox{\linewidth}{!}{
\begin{tabular}{cccrrrrrrrrrr}
\toprule
   \multirow{2}{*}{\textbf{Model}}              &          \textbf{Adv.}         & \textbf{A. eval. $\rightarrow$} &                        &                    \multicolumn{1}{c}{\textbf{LR-BCD}} &                       \multicolumn{1}{c}{\textbf{PR-BCD}} &                       \multicolumn{1}{c}{\textbf{LR-BCD}} &                       \multicolumn{1}{c}{\textbf{PR-BCD}} &                    \multicolumn{1}{c}{\textbf{LR-BCD}} &                       \multicolumn{1}{c}{\textbf{PR-BCD}} &                    \multicolumn{1}{c}{\textbf{LR-BCD}} &                       \multicolumn{1}{c}{\textbf{PR-BCD}} \\
                   &       \textbf{trn.}       & \textbf{A. trn.  $\downarrow$}  & \multicolumn{1}{c}{\textbf{Clean}}  & \multicolumn{2}{c}{$\boldsymbol{\epsilon}$\textbf{=1\%}} & \multicolumn{2}{c}{$\boldsymbol{\epsilon}$\textbf{=2\%}} & \multicolumn{2}{c}{$\boldsymbol{\epsilon}$\textbf{=5\%}} & \multicolumn{2}{c}{$\boldsymbol{\epsilon}$\textbf{=10\%}} \\
\midrule
\textbf{APPNP} & \xmark & - &                        70.6 $\pm$ 0.2 &                        61.6 $\pm$ 0.3 &                        62.6 $\pm$ 0.2 &                        58.3 $\pm$ 0.4 &                        59.0 $\pm$ 0.3 &                        52.4 $\pm$ 0.5 &                        52.7 $\pm$ 0.6 &                        48.4 $\pm$ 0.7 &                        46.9 $\pm$ 1.2 \\
\textbf{GPRGNN} & \xmark & - &  \textbf{71.9 $\boldsymbol{\pm}$ 0.1} &                        63.9 $\pm$ 0.2 &  \textbf{64.8 $\boldsymbol{\pm}$ 0.2} &                        60.7 $\pm$ 0.3 &  \textbf{61.0 $\boldsymbol{\pm}$ 0.2} &                        54.6 $\pm$ 0.4 &                        54.0 $\pm$ 0.4 &                        52.3 $\pm$ 0.8 &                        50.3 $\pm$ 0.8 \\
\hline
\multirow{2}{*}{\textbf{APPNP}} & \multirow{2}{*}{$\checkmark$} & \textbf{LR-BCD} &                        68.2 $\pm$ 0.1 &                        63.1 $\pm$ 0.1 &                        63.2 $\pm$ 0.1 &                        60.4 $\pm$ 0.1 &                        60.3 $\pm$ 0.1 &  \textbf{56.5 $\boldsymbol{\pm}$ 0.2} &  \textbf{55.2 $\boldsymbol{\pm}$ 0.1} &  \textbf{54.0 $\boldsymbol{\pm}$ 0.1} &  \textbf{50.4 $\boldsymbol{\pm}$ 0.2} \\
                &              & \textbf{PR-BCD} &                        68.2 $\pm$ 0.4 &                        62.8 $\pm$ 0.3 &                        62.5 $\pm$ 0.3 &                        59.8 $\pm$ 0.3 &                        59.0 $\pm$ 0.3 &                        55.4 $\pm$ 0.2 &                        52.0 $\pm$ 0.2 &                        52.6 $\pm$ 0.3 &                        44.7 $\pm$ 0.3 \\

\multirow{2}{*}{\textbf{GPRGNN}} & \multirow{2}{*}{$\checkmark$} & \textbf{LR-BCD} &                        70.9 $\pm$ 0.0 &  \textbf{64.4 $\boldsymbol{\pm}$ 0.1} &                        63.8 $\pm$ 0.1 &  \textbf{61.1 $\boldsymbol{\pm}$ 0.1} &                        59.8 $\pm$ 0.1 &                        55.9 $\pm$ 0.1 &                        53.4 $\pm$ 0.1 &                        53.5 $\pm$ 0.1 &                        48.4 $\pm$ 0.1 \\
                &              & \textbf{PR-BCD} &                        70.3 $\pm$ 0.2 &                        63.8 $\pm$ 0.2 &                        63.5 $\pm$ 0.2 &                        60.6 $\pm$ 0.2 &                        59.6 $\pm$ 0.2 &                        55.5 $\pm$ 0.2 &                        52.0 $\pm$ 0.2 &                        52.7 $\pm$ 0.2 &                        44.4 $\pm$ 0.1 \\
\bottomrule
\end{tabular}}
\end{table}

\begin{table}
\centering
\caption{Further adversarial attacks on Citeseer and additional GRAND defense. The LR-BCD results also used LR-BCD during the adversarial training. For PGD and DICE, we perform adversarial training with PR-BCD to match the set of admissible perturbations.}
\vspace{0.3cm}
\label{tab:app_result_grand_and_attacks}
\resizebox{0.65\linewidth}{!}{
\begin{tabular}{ccccrrr}
\toprule
 \textbf{Attack} & \multirow{2}{*}{\textbf{Model}} & \textbf{Adv.} &    \multicolumn{1}{c}{\multirow{2}{*}{\textbf{Clean}}} &  \multicolumn{1}{c}{\multirow{2}{*}{$\boldsymbol{\epsilon}$\textbf{=0.1}}} & \multicolumn{1}{c}{\multirow{2}{*}{$\boldsymbol{\epsilon}$\textbf{=0.25}}} \\
 \textbf{evaluation} & & \textbf{trn.}  &                 &                 &                 \\
\midrule
\multirow[c]{6}{*}{\textbf{LR-BCD}} & \multirow[c]{2}{*}{\textbf{GCN}} & \xmark & 72.0 $\pm$ 1.6 & 53.2 $\pm$ 1.9 & 41.7 $\pm$ 2.8 \\
 &  & $\checkmark$ & 72.0 $\pm$ 1.5 & 59.3 $\pm$ 1.5 & 48.8 $\pm$ 2.4 \\
\cline{2-6}
 & \textbf{SoftMedian GDC} & \xmark & 72.9 $\pm$ 0.8 & 62.6 $\pm$ 1.0 & 57.1 $\pm$ 2.1 \\
\cline{2-6}
 & \textbf{GRAND} & \xmark & 74.6 $\pm$ 2.4 & 65.7 $\pm$ 3.3 & 59.7 $\pm$ 3.3 \\
\cline{2-6}
 & \multirow[c]{2}{*}{\textbf{GPRGNN}} & \xmark & 74.1 $\pm$ 1.1 & 58.0 $\pm$ 1.0 & 49.1 $\pm$ 1.1 \\
 &  & $\checkmark$ & 72.9 $\pm$ 0.6 & 67.8 $\pm$ 0.8 & 64.6 $\pm$ 1.2 \\
\cline{1-6} \cline{2-6}
\multirow[c]{3}{*}{\textbf{PGD}} & \textbf{GCN} & \xmark & 73.4 $\pm$ 1.3 & 57.9 $\pm$ 3.0 & 46.4 $\pm$ 3.1 \\
\cline{2-6}
 & \multirow[c]{2}{*}{\textbf{GPRGNN}} & \xmark & 74.1 $\pm$ 0.2 & 63.2 $\pm$ 1.3 & 57.6 $\pm$ 2.8 \\
 &  & $\checkmark$ & 73.7 $\pm$ 0.6 & 67.6 $\pm$ 1.2 & 64.8 $\pm$ 1.3 \\
\cline{1-6} \cline{2-6}
\multirow[c]{3}{*}{\textbf{DICE}} & \textbf{GCN} & \xmark & 73.4 $\pm$ 1.3 & 71.8 $\pm$ 1.5 & 68.8 $\pm$ 1.1 \\
\cline{2-6}
 & \multirow[c]{2}{*}{\textbf{GPRGNN}} & \xmark & 74.1 $\pm$ 0.2 & 73.8 $\pm$ 1.0 & 70.9 $\pm$ 0.8 \\
 &  & $\checkmark$ & 73.7 $\pm$ 0.6 & 73.7 $\pm$ 1.0 & 73.1 $\pm$ 0.8 \\
\bottomrule
\end{tabular}
}
\end{table}

\subsection{Learned Diffusion Coefficients}\label{app:inductive_gammas}

\begin{figure}[t]
    \centering
    \subfloat[Citeseer]{\includegraphics[width=0.5\textwidth]{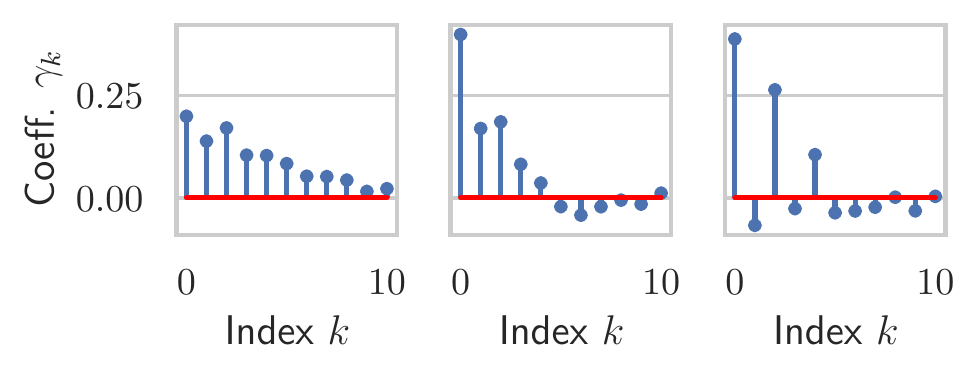}}
    \hfill
    \subfloat[Cora]{\includegraphics[width=0.5\textwidth]{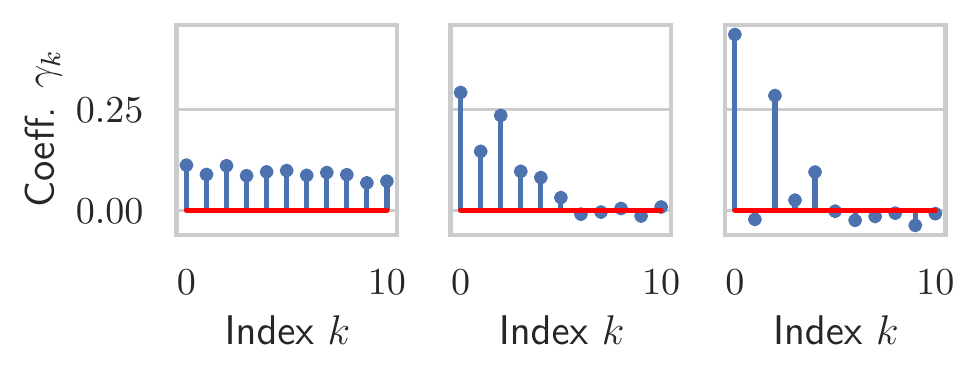}}
    \hfill
    \subfloat[Cora-ML]{\includegraphics[width=0.5\textwidth]{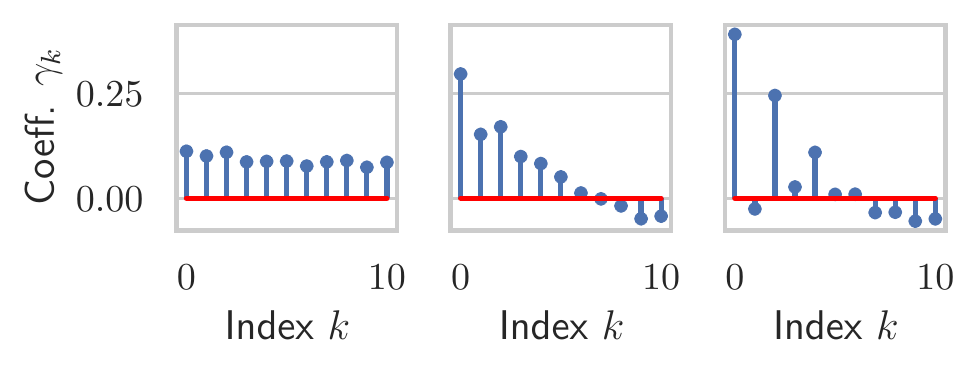}} 
    \hfill
    \subfloat[Pubmed]{\includegraphics[width=0.5\textwidth]{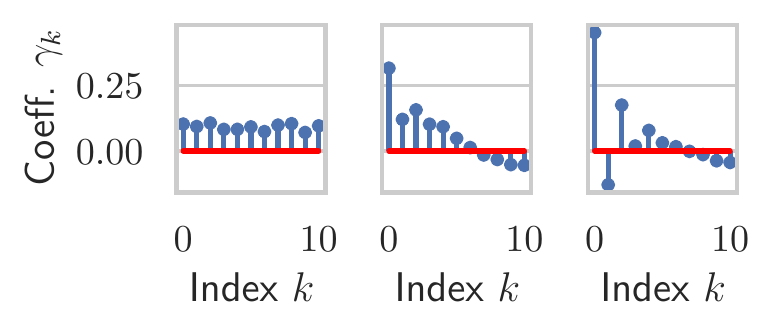}}
    \caption{Learned robust diffusion (GPRGNN) coefficients $\boldsymbol{\gamma}$ for different datasets. For each dataset, left figure represents standard training, middle figure represents LR-BCD (adv. training w/ local constraints), right figure PR-BCD (adv. training w/o local constraints). For adversarial training, $\epsilon=20$\% is used.}
    \label{fig:app_gammas}
\end{figure}

\Cref{fig:app_gammas} shows the learned coefficients $\boldsymbol{\gamma}$ for GPRGNN on Citeseer, Cora, Cora-ML and Pubmed. The results for Citeseer are the ones reported in \Cref{fig:inductive_gammas} and given again for reference. The observation that including local constraints during adversarial training leads to a robust model leveraging the general graph structure in a robust way compared to training with global constraints only is general for all datasets. Training only with global constraints, for node classification, the information in the graph other than the node's own features are suppressed, as only $\gamma_0$ and $\gamma_2$ and, to a lesser extent, $\gamma_4$ are pronounced. These mainly correspond to information about the original node itself, as the information about itself propagates to its neighbor and back in 2-hops and similar for the second-hop neighbors. However, information from $\gamma_k$ with $k$ odd is suppressed through being set close to zero. 

A spectral analysis as carried out in \Cref{fig:inductive_spectrum} can be found for Cora and Cora-ML (and Citeseer for reference) in \Cref{fig:app_gammas_spectral}. The behavior that PR-BCD training leads to high-pass behavior is consistent across datasets.

\begin{figure}[h]
    \centering
    \subfloat[Citeseer]{\includegraphics[width=0.33\textwidth]{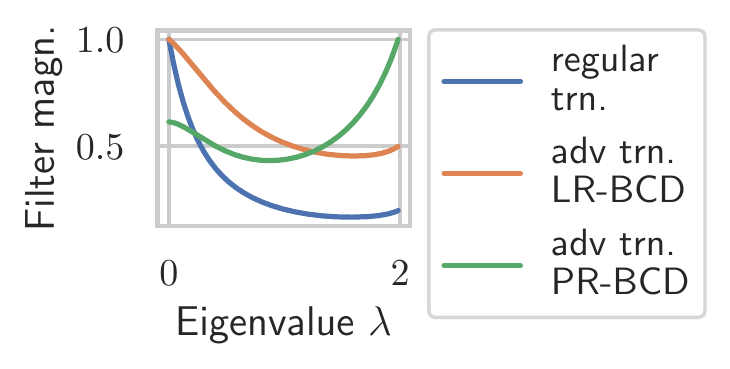}}
    \hfill
    \subfloat[Cora]{\includegraphics[width=0.33\textwidth]{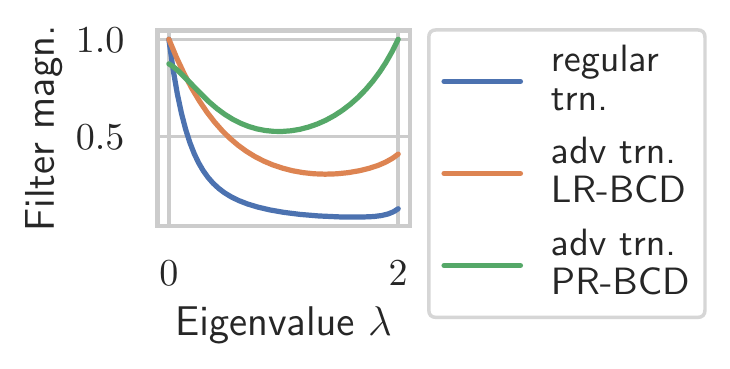}}
    \hfill
    \subfloat[Cora-ML]{\includegraphics[width=0.33\textwidth]{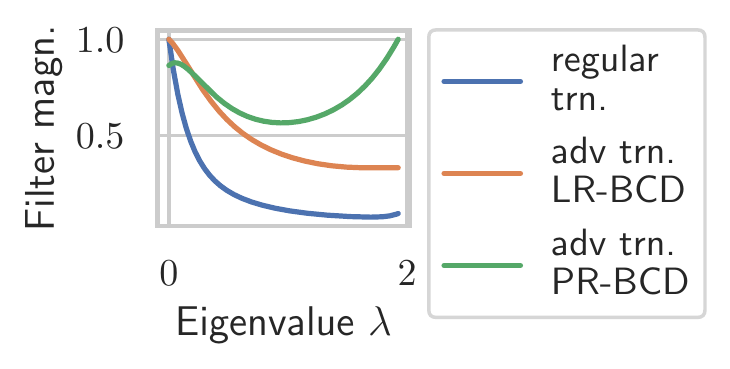}} 
    \caption{Spectral filters corresponding to \Cref{fig:app_gammas} for Citeseer, Cora and Cora-ML. Low filter magnitude denotes suppression of the respective frequencies associated with eigenvalue $\lambda$.}
    \label{fig:app_gammas_spectral}
\end{figure}

\FloatBarrier

\vspace{-1cm}

\subsection{Comparison of PR-BCD and LR-BCD}

\begin{figure}[h]
    \centering
    \includegraphics[width=0.55\linewidth]{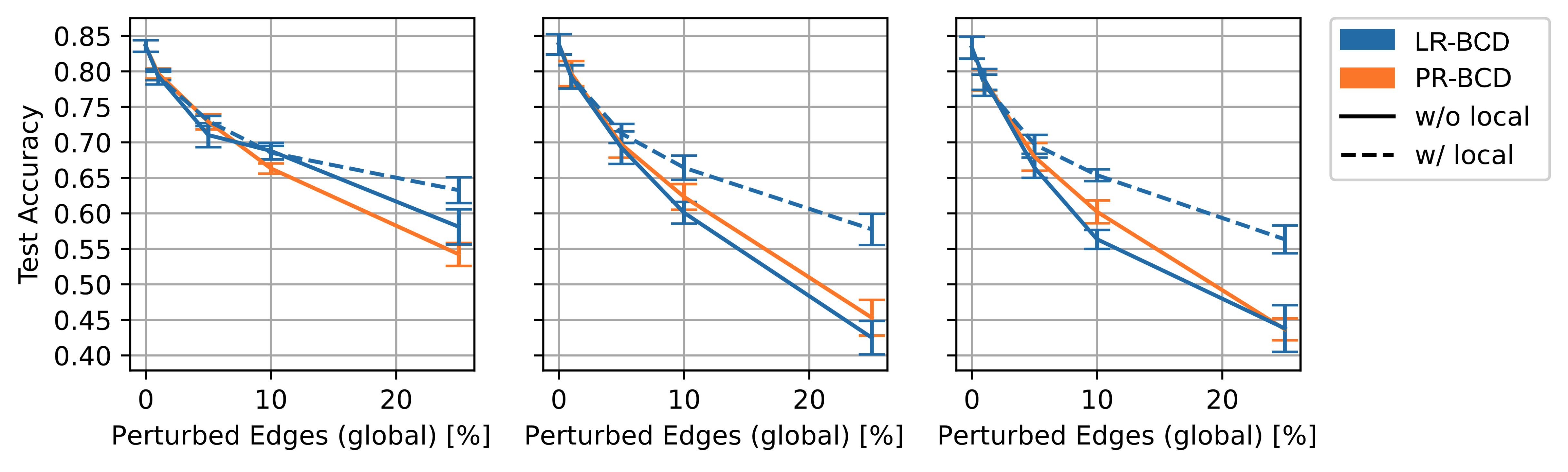}
    \subfloat[GPRGNN]{\hspace{.15\linewidth}}
    \subfloat[APPNP]{\hspace{.15\linewidth}}
    \subfloat[GCN]{\hspace{.15\linewidth}}
    \caption{Comparison of PR-BCD and LR-BCD on Cora-ML. The x-axis gives the relative global budget. The local budget (if considered) for each node $u$ of degree \(d_u\) is \(\Delta^{(l)}_u = \floor{d_u/2}\). The models are learned using self-training. Results are aggregated over three different data splits.}
    \label{fig:app_comparison_att}
\end{figure}

Figure~\ref{fig:app_comparison_att} compares the efficacy of PR-BCD to LR-BCD on Cora-ML. For a fair comparison, as the admissible perturbations $\mathcal{B}(\mathcal{G})$ are inherently different for both attacks, we also consider LR-BCD with an unlimited local budget. In this case, LR-BCD's projection defaults to greedily picking the highest-valued perturbations until the global budget is met. 
We observe that LR-BCD w/o local constraints performs similar to PR-BCD. This suggests that the chosen greedy approach is suitable.
Additionally including local constraints, the attack efficacy reduces. Still, even with local constraints, LR-BCD is able to recover adversarial examples that significantly reduce classification performance.

\FloatBarrier

\subsection{Certified Accuracy}\label{sec:app_certified}

\Cref{tab:app_smooth_citeseer} shows certified accuracies for Citeseer against an extended set of perturbations compared to \Cref{tab:results}. \Cref{tab:app_smooth_cora} and \Cref{tab:app_smooth_cora_ml} show analogous results for the benchmark graphs Cora and Cora-ML, respectively. On all datasets, highest certifiable accuracy is achieved through robust diffusion.

\begin{table}[h!]
\centering
\caption{Comparison of certifiable accuracies for different perturbations on Citeseer of regularly trained models, the state-of-the-art SoftMedian GDC defense, and adversarially trained models (train \(\epsilon=20\%\)). The first line shows the certifiable accuracy [\%] of a standard GCN and all other numbers represent the difference in certifiable accuracy achieved by various models in percentage points from this standard GCN. The best model is highlighted in bold.}
\vspace{0.3cm}
\label{tab:app_smooth_citeseer}
\resizebox{\linewidth}{!}{
\begin{tabular}{cccrrrrrrrrrr}
\toprule
   \multirow{2}{*}{\textbf{Model}}              &     \textbf{Adv.}         & \textbf{Training} &              &          \multicolumn{8}{c}{\textbf{Certifiable accuracy / sparse smoothing}} \\
                   &     \textbf{trn.}       & \textbf{attack}  & \multicolumn{1}{c}{\textbf{Clean}}  &                       \multicolumn{1}{c}{\textbf{1 add.}}   &                       \multicolumn{1}{c}{\textbf{2 add.}} &                       \multicolumn{1}{c}{\textbf{3 add.}} &                       \multicolumn{1}{c}{\textbf{4 add.}} &                       \multicolumn{1}{c}{\textbf{1 del.}} &                       \multicolumn{1}{c}{\textbf{3 del.}} &                       \multicolumn{1}{c}{\textbf{5 del.}} &                       \multicolumn{1}{c}{\textbf{7 del.}}  \\
\midrule
\textbf{GCN} & \xmark & - &                         38.3 $\pm$ 7.2 &                         15.6 $\pm$ 3.4 &                          1.9 $\pm$ 0.5 &                          1.7 $\pm$ 0.4 &                          0.0 $\pm$ 0.0 &                         25.2 $\pm$ 5.5 &                         12.0 $\pm$ 2.8 &                          4.8 $\pm$ 1.0 &                          1.9 $\pm$ 0.5 \\
\hline
\textbf{GAT} & \xmark & - &                        -14.0 $\pm$ 7.6 &                        -15.1 $\pm$ 3.2 &                         -1.9 $\pm$ 0.5 &                         -1.7 $\pm$ 0.4 &                         +0.0 $\pm$ 0.0 &                        -19.3 $\pm$ 5.2 &                        -12.0 $\pm$ 2.8 &                         -4.8 $\pm$ 1.0 &                         -1.9 $\pm$ 0.5 \\
\textbf{APPNP} & \xmark & - &                         +8.9 $\pm$ 5.8 &                        +26.2 $\pm$ 3.4 &                        +31.8 $\pm$ 4.4 &                        +31.2 $\pm$ 4.0 &                        +26.6 $\pm$ 2.2 &                        +19.0 $\pm$ 3.2 &                        +28.7 $\pm$ 3.4 &                        +31.6 $\pm$ 4.0 &                        +31.9 $\pm$ 4.5 \\
\textbf{GPRGNN} & \xmark & - &                        +17.9 $\pm$ 4.4 &                        +34.6 $\pm$ 2.0 &                        +42.5 $\pm$ 2.7 &                        +42.4 $\pm$ 2.8 &                        +37.5 $\pm$ 2.9 &                        +26.8 $\pm$ 3.0 &                        +36.6 $\pm$ 2.1 &                        +41.3 $\pm$ 2.4 &                        +42.5 $\pm$ 2.7 \\
\textbf{ChebNetII} & \xmark & - &                        +24.6 $\pm$ 9.9 &                        +45.5 $\pm$ 4.4 &                        +55.9 $\pm$ 1.3 &                        +55.6 $\pm$ 1.2 &                        +54.8 $\pm$ 2.0 &                        +36.9 $\pm$ 7.3 &                        +47.8 $\pm$ 3.2 &                        +54.0 $\pm$ 0.9 &                        +56.1 $\pm$ 1.2 \\
\textbf{SoftMedian} & \xmark & - &                       +25.2 $\pm$ 10.5 &                        +47.4 $\pm$ 4.0 &                        +60.1 $\pm$ 1.3 &                        +60.3 $\pm$ 1.4 &                        +61.5 $\pm$ 2.0 &                        +38.2 $\pm$ 7.6 &                        +50.6 $\pm$ 3.0 &                        +57.5 $\pm$ 0.8 &                        +60.3 $\pm$ 1.4 \\
\hline
\multirow{2}{*}{\textbf{GCN}} & \multirow{2}{*}{$\checkmark$} & \textbf{LR-BCD} &                         -3.0 $\pm$ 6.0 &                        +11.2 $\pm$ 3.3 &                        +11.4 $\pm$ 2.9 &                        +10.7 $\pm$ 2.9 &                         +6.1 $\pm$ 1.7 &                         +5.1 $\pm$ 5.5 &                        +10.3 $\pm$ 3.4 &                        +13.1 $\pm$ 3.2 &                        +11.5 $\pm$ 3.0 \\
                   &              & \textbf{PR-BCD} &                         +4.2 $\pm$ 9.2 &                        +10.1 $\pm$ 3.0 &                        +10.6 $\pm$ 3.3 &                        +10.1 $\pm$ 3.5 &                         +6.2 $\pm$ 2.8 &                         +6.9 $\pm$ 6.3 &                        +10.7 $\pm$ 2.0 &                        +11.4 $\pm$ 2.8 &                        +10.6 $\pm$ 3.4 \\

\multirow{2}{*}{\textbf{GAT}} & \multirow{2}{*}{$\checkmark$} & \textbf{LR-BCD} &                        -2.0 $\pm$ 14.6 &                         +5.1 $\pm$ 5.9 &                         +2.6 $\pm$ 1.1 &                         +1.4 $\pm$ 1.1 &                         +1.6 $\pm$ 0.7 &                        +3.0 $\pm$ 10.7 &                         +4.2 $\pm$ 4.4 &                         +4.0 $\pm$ 1.3 &                         +3.0 $\pm$ 1.0 \\
                   &              & \textbf{PR-BCD} &                        -10.1 $\pm$ 8.7 &                         +5.0 $\pm$ 2.5 &                         +1.9 $\pm$ 1.1 &                         +1.2 $\pm$ 1.3 &                         +2.2 $\pm$ 0.8 &                         -0.8 $\pm$ 5.5 &                         +4.5 $\pm$ 1.3 &                         +2.8 $\pm$ 0.8 &                         +2.2 $\pm$ 0.9 \\

\multirow{2}{*}{\textbf{APPNP}} & \multirow{2}{*}{$\checkmark$} & \textbf{LR-BCD} &                        +21.7 $\pm$ 6.0 &                        +38.0 $\pm$ 1.9 &                        +41.6 $\pm$ 1.9 &                        +41.3 $\pm$ 1.8 &                        +36.6 $\pm$ 1.8 &                        +31.2 $\pm$ 4.0 &                        +40.2 $\pm$ 1.5 &                        +44.1 $\pm$ 0.8 &                        +41.9 $\pm$ 2.0 \\
                   &              & \textbf{PR-BCD} &                        +19.3 $\pm$ 4.6 &                        +37.5 $\pm$ 1.2 &                        +41.4 $\pm$ 2.2 &                        +41.0 $\pm$ 2.4 &                        +37.9 $\pm$ 2.6 &                        +29.8 $\pm$ 2.4 &                        +38.9 $\pm$ 1.5 &                        +41.9 $\pm$ 1.9 &                        +41.4 $\pm$ 2.2 \\

\multirow{2}{*}{\textbf{GPRGNN}} & \multirow{2}{*}{$\checkmark$} & \textbf{LR-BCD} &  \textbf{+34.0 $\boldsymbol{\pm}$ 7.3} &  \textbf{+55.5 $\boldsymbol{\pm}$ 3.9} &                        +67.6 $\pm$ 1.1 &                        +67.4 $\pm$ 0.9 &                        +67.9 $\pm$ 0.5 &  \textbf{+46.7 $\boldsymbol{\pm}$ 5.8} &                        +58.6 $\pm$ 3.4 &                        +65.0 $\pm$ 1.6 &                        +67.6 $\pm$ 1.1 \\
                   &              & \textbf{PR-BCD} &                        +32.9 $\pm$ 7.5 &                        +55.3 $\pm$ 3.9 &  \textbf{+68.8 $\boldsymbol{\pm}$ 1.0} &  \textbf{+69.0 $\boldsymbol{\pm}$ 0.9} &  \textbf{+70.4 $\boldsymbol{\pm}$ 0.4} &                        +45.8 $\pm$ 5.9 &  \textbf{+58.9 $\boldsymbol{\pm}$ 3.3} &  \textbf{+65.9 $\boldsymbol{\pm}$ 1.5} &  \textbf{+68.8 $\boldsymbol{\pm}$ 1.0} \\

\multirow{2}{*}{\textbf{ChebNetII}} & \multirow{2}{*}{$\checkmark$} & \textbf{LR-BCD} &                       +31.5 $\pm$ 12.7 &                        +51.2 $\pm$ 6.5 &                        +62.1 $\pm$ 1.9 &                        +62.3 $\pm$ 1.9 &                        +61.8 $\pm$ 2.2 &                        +43.1 $\pm$ 9.8 &                        +53.9 $\pm$ 5.4 &                        +59.8 $\pm$ 2.5 &                        +62.1 $\pm$ 1.9 \\
                   &              & \textbf{PR-BCD} &                       +30.7 $\pm$ 13.2 &                        +52.5 $\pm$ 7.0 &                        +65.4 $\pm$ 2.7 &                        +65.4 $\pm$ 2.6 &                        +66.0 $\pm$ 2.1 &                       +43.1 $\pm$ 10.4 &                        +56.1 $\pm$ 5.9 &                        +62.9 $\pm$ 3.7 &                        +65.4 $\pm$ 2.7 \\
\bottomrule
\end{tabular}}
\end{table}

\begin{table}[h!]
\centering
\caption{Comparison of certifiable accuracies for different perturbations on Cora of regularly trained models, the state-of-the-art SoftMedian GDC defense, and adversarially trained models (train \(\epsilon=20\%\)). The first line shows the certifiable accuracy [\%] of a standard GCN and all other numbers represent the difference in certifiable accuracy achieved by various models in percentage points from this standard GCN. The best model is highlighted in bold.}
\vspace{0.3cm}
\label{tab:app_smooth_cora}
\resizebox{\linewidth}{!}{
\begin{tabular}{cccrrrrrrrrrr}
\toprule
   \multirow{2}{*}{\textbf{Model}}              &     \textbf{Adv.}         & \textbf{Training} &              &          \multicolumn{8}{c}{\textbf{Certifiable accuracy / sparse smoothing}} \\
                   &     \textbf{trn.}       & \textbf{attack}  & \multicolumn{1}{c}{\textbf{Clean}}  &                       \multicolumn{1}{c}{\textbf{1 add.}}   &                       \multicolumn{1}{c}{\textbf{2 add.}} &                       \multicolumn{1}{c}{\textbf{3 add.}} &                       \multicolumn{1}{c}{\textbf{4 add.}} &                       \multicolumn{1}{c}{\textbf{1 del.}} &                       \multicolumn{1}{c}{\textbf{3 del.}} &                       \multicolumn{1}{c}{\textbf{5 del.}} &                       \multicolumn{1}{c}{\textbf{7 del.}}  \\
\midrule
\textbf{GCN} & \xmark & - &                         30.9 $\pm$ 0.2 &                         19.9 $\pm$ 3.2 &                         12.9 $\pm$ 4.1 &                         12.3 $\pm$ 4.1 &                          5.9 $\pm$ 1.7 &                         23.7 $\pm$ 2.0 &                         18.4 $\pm$ 3.7 &                         15.5 $\pm$ 4.1 &                         13.2 $\pm$ 4.2 \\
\hline
\textbf{GAT} & \xmark & - &                         -0.5 $\pm$ 4.8 &                         -7.8 $\pm$ 8.0 &                        -12.1 $\pm$ 3.8 &                        -11.5 $\pm$ 3.8 &                         -5.5 $\pm$ 1.5 &                         -8.4 $\pm$ 6.4 &                        -11.4 $\pm$ 5.5 &                        -13.9 $\pm$ 4.0 &                        -12.3 $\pm$ 4.0 \\
\textbf{APPNP} & \xmark & - &                        +26.4 $\pm$ 1.2 &                        +30.4 $\pm$ 3.5 &                        +29.5 $\pm$ 5.2 &                        +28.9 $\pm$ 5.3 &                        +30.8 $\pm$ 2.5 &                        +29.4 $\pm$ 2.2 &                        +30.5 $\pm$ 4.1 &                        +28.8 $\pm$ 5.2 &                        +29.5 $\pm$ 5.3 \\
\textbf{GPRGNN} & \xmark & - &                        +20.6 $\pm$ 2.4 &                        +25.6 $\pm$ 6.1 &                        +25.3 $\pm$ 6.3 &                        +25.4 $\pm$ 6.1 &                        +27.0 $\pm$ 4.0 &                        +24.2 $\pm$ 4.9 &                        +26.3 $\pm$ 6.2 &                        +25.4 $\pm$ 6.2 &                        +25.2 $\pm$ 6.4 \\
\textbf{ChebNetII} & \xmark & - &                        +32.8 $\pm$ 1.8 &                        +40.7 $\pm$ 6.9 &                        +46.2 $\pm$ 7.8 &                        +46.8 $\pm$ 7.8 &                        +52.4 $\pm$ 4.2 &                        +38.1 $\pm$ 5.2 &                        +42.1 $\pm$ 7.6 &                        +44.7 $\pm$ 8.2 &                        +46.2 $\pm$ 8.1 \\
\textbf{SoftMedian} & \xmark & - &                        +36.0 $\pm$ 1.2 &                        +46.2 $\pm$ 4.6 &                        +51.5 $\pm$ 5.9 &                        +51.9 $\pm$ 5.8 &                        +57.4 $\pm$ 3.0 &                        +42.5 $\pm$ 2.9 &                        +47.0 $\pm$ 5.3 &                        +49.2 $\pm$ 5.7 &                        +51.4 $\pm$ 6.2 \\
\hline
\multirow{2}{*}{\textbf{GCN}} & \multirow{2}{*}{$\checkmark$} & \textbf{LR-BCD} &                         +4.5 $\pm$ 2.5 &                         +5.6 $\pm$ 3.6 &                         -1.3 $\pm$ 6.6 &                         -1.7 $\pm$ 6.7 &                         -2.3 $\pm$ 3.3 &                         +6.3 $\pm$ 2.0 &                         +3.8 $\pm$ 4.9 &                         +0.7 $\pm$ 6.1 &                         -1.2 $\pm$ 6.9 \\
                   &              & \textbf{PR-BCD} &                        +13.7 $\pm$ 6.2 &                         +7.9 $\pm$ 4.5 &                         -2.0 $\pm$ 1.0 &                         -3.3 $\pm$ 1.5 &                         -2.7 $\pm$ 0.7 &                        +10.6 $\pm$ 5.1 &                         +4.8 $\pm$ 3.0 &                         +0.0 $\pm$ 0.6 &                         -2.0 $\pm$ 1.2 \\

\multirow{2}{*}{\textbf{GAT}} & \multirow{2}{*}{$\checkmark$} & \textbf{LR-BCD} &                         +0.6 $\pm$ 0.5 &                         +9.4 $\pm$ 3.1 &                        +12.2 $\pm$ 3.5 &                        +12.1 $\pm$ 3.8 &                         +3.4 $\pm$ 3.1 &                         +6.2 $\pm$ 2.0 &                        +10.7 $\pm$ 3.5 &                        +12.1 $\pm$ 3.7 &                        +12.0 $\pm$ 3.7 \\
                   &              & \textbf{PR-BCD} &                         +6.4 $\pm$ 1.4 &                         +1.5 $\pm$ 7.4 &                         -4.8 $\pm$ 1.5 &                         -4.2 $\pm$ 1.6 &                         -2.4 $\pm$ 1.0 &                         +4.8 $\pm$ 4.7 &                         -2.9 $\pm$ 4.2 &                         -5.7 $\pm$ 2.0 &                         -4.6 $\pm$ 1.6 \\

\multirow{2}{*}{\textbf{APPNP}} & \multirow{2}{*}{$\checkmark$} & \textbf{LR-BCD} &                        +25.3 $\pm$ 3.5 &                        +31.1 $\pm$ 6.6 &                        +30.2 $\pm$ 6.9 &                        +30.2 $\pm$ 6.9 &                        +32.5 $\pm$ 4.0 &                        +28.7 $\pm$ 5.5 &                        +30.8 $\pm$ 7.0 &                        +30.4 $\pm$ 7.3 &                        +29.9 $\pm$ 7.1 \\
                   &              & \textbf{PR-BCD} &                        +31.5 $\pm$ 1.1 &                        +36.4 $\pm$ 3.3 &                        +36.6 $\pm$ 4.6 &                        +35.8 $\pm$ 4.5 &                        +37.4 $\pm$ 2.2 &                        +35.4 $\pm$ 1.7 &                        +37.1 $\pm$ 3.7 &                        +36.9 $\pm$ 4.1 &                        +36.6 $\pm$ 4.7 \\

\multirow{2}{*}{\textbf{GPRGNN}} & \multirow{2}{*}{$\checkmark$} & \textbf{LR-BCD} &                        +43.8 $\pm$ 0.5 &                        +51.4 $\pm$ 3.4 &                        +54.6 $\pm$ 4.9 &                        +55.2 $\pm$ 4.9 &                        +59.1 $\pm$ 3.0 &                        +49.1 $\pm$ 2.3 &                        +52.5 $\pm$ 3.8 &                        +53.8 $\pm$ 4.5 &                        +54.3 $\pm$ 5.1 \\
                   &              & \textbf{PR-BCD} &  \textbf{+45.5 $\boldsymbol{\pm}$ 0.7} &  \textbf{+55.7 $\boldsymbol{\pm}$ 2.8} &  \textbf{+61.9 $\boldsymbol{\pm}$ 3.6} &  \textbf{+62.5 $\boldsymbol{\pm}$ 3.5} &  \textbf{+68.6 $\boldsymbol{\pm}$ 1.3} &  \textbf{+52.1 $\boldsymbol{\pm}$ 1.6} &  \textbf{+56.8 $\boldsymbol{\pm}$ 3.1} &  \textbf{+59.5 $\boldsymbol{\pm}$ 3.4} &  \textbf{+61.7 $\boldsymbol{\pm}$ 3.7} \\

\multirow{2}{*}{\textbf{ChebNetII}} & \multirow{2}{*}{$\checkmark$} & \textbf{LR-BCD} &                        +24.8 $\pm$ 2.8 &                        +34.1 $\pm$ 4.5 &                        +38.5 $\pm$ 5.3 &                        +38.7 $\pm$ 5.5 &                        +41.8 $\pm$ 1.7 &                        +30.9 $\pm$ 3.1 &                        +35.2 $\pm$ 5.2 &                        +36.5 $\pm$ 5.3 &                        +38.3 $\pm$ 5.5 \\
                   &              & \textbf{PR-BCD} &                        +27.1 $\pm$ 2.0 &                        +35.8 $\pm$ 6.1 &                        +38.6 $\pm$ 7.0 &                        +39.1 $\pm$ 7.0 &                        +44.7 $\pm$ 3.3 &                        +33.9 $\pm$ 3.6 &                        +36.5 $\pm$ 6.8 &                        +38.3 $\pm$ 7.7 &                        +38.3 $\pm$ 7.2 \\
\bottomrule
\end{tabular}}
\end{table}

\begin{table}[h!]
\centering
\caption{Comparison of certifiable accuracies for different perturbations on Cora-ML of regularly trained models, the state-of-the-art SoftMedian GDC defense, and adversarially trained models (train \(\epsilon=20\%\)). The first line shows the certifiable accuracy [\%] of a standard GCN and all other numbers represent the difference in certifiable accuracy achieved by various models in percentage points from this standard GCN. The best model is highlighted in bold.}
\vspace{0.3cm}
\label{tab:app_smooth_cora_ml}
\resizebox{\linewidth}{!}{
\begin{tabular}{cccrrrrrrrrrr}
\toprule
   \multirow{2}{*}{\textbf{Model}}              &     \textbf{Adv.}         & \textbf{Training} &              &          \multicolumn{8}{c}{\textbf{Certifiable accuracy / sparse smoothing}} \\
                   &     \textbf{trn.}       & \textbf{attack}  & \multicolumn{1}{c}{\textbf{Clean}}  &                       \multicolumn{1}{c}{\textbf{1 add.}}   &                       \multicolumn{1}{c}{\textbf{2 add.}} &                       \multicolumn{1}{c}{\textbf{3 add.}} &                       \multicolumn{1}{c}{\textbf{4 add.}} &                       \multicolumn{1}{c}{\textbf{1 del.}} &                       \multicolumn{1}{c}{\textbf{3 del.}} &                       \multicolumn{1}{c}{\textbf{5 del.}} &                       \multicolumn{1}{c}{\textbf{7 del.}}  \\
\midrule
\textbf{GCN} & \xmark & - &                         37.6 $\pm$ 0.9 &                         23.6 $\pm$ 2.1 &                          9.6 $\pm$ 3.8 &                          8.8 $\pm$ 3.7 &                          3.8 $\pm$ 2.2 &                         29.9 $\pm$ 0.8 &                         19.7 $\pm$ 2.9 &                         14.3 $\pm$ 4.1 &                         10.1 $\pm$ 3.8 \\
\hline
\textbf{GAT} & \xmark & - &                        -12.4 $\pm$ 4.1 &                         -3.6 $\pm$ 4.4 &                         -2.8 $\pm$ 6.0 &                         -4.2 $\pm$ 5.3 &                         -3.3 $\pm$ 2.3 &                         -7.6 $\pm$ 3.8 &                         -0.4 $\pm$ 4.8 &                         -2.7 $\pm$ 7.5 &                         -3.2 $\pm$ 6.1 \\
\textbf{APPNP} & \xmark & - &                        +17.3 $\pm$ 1.5 &                        +26.1 $\pm$ 1.3 &                        +34.5 $\pm$ 3.5 &                        +33.8 $\pm$ 3.3 &                        +34.9 $\pm$ 2.0 &                        +21.9 $\pm$ 0.4 &                        +29.0 $\pm$ 2.1 &                        +32.3 $\pm$ 3.5 &                        +34.0 $\pm$ 3.5 \\
\textbf{GPRGNN} & \xmark & - &                        +25.4 $\pm$ 3.4 &                        +32.0 $\pm$ 1.8 &                        +38.1 $\pm$ 2.9 &                        +38.6 $\pm$ 2.8 &                        +39.3 $\pm$ 1.7 &                        +29.2 $\pm$ 2.2 &                        +34.5 $\pm$ 1.7 &                        +35.6 $\pm$ 3.0 &                        +37.8 $\pm$ 2.9 \\
\textbf{ChebNetII} & \xmark & - &                        +29.9 $\pm$ 3.2 &                        +40.8 $\pm$ 6.6 &                        +51.6 $\pm$ 9.2 &                        +52.5 $\pm$ 9.1 &                        +55.3 $\pm$ 7.1 &                        +35.9 $\pm$ 4.7 &                        +43.8 $\pm$ 7.5 &                        +48.0 $\pm$ 9.0 &                        +51.4 $\pm$ 9.2 \\
\textbf{SoftMedian} & \xmark & - &                        +35.6 $\pm$ 0.4 &                        +46.7 $\pm$ 5.1 &                        +57.2 $\pm$ 7.3 &                        +57.7 $\pm$ 7.5 &                        +60.8 $\pm$ 5.0 &                        +41.8 $\pm$ 2.8 &                        +49.8 $\pm$ 6.1 &                        +53.8 $\pm$ 7.9 &                        +56.9 $\pm$ 7.4 \\
\hline
\multirow{2}{*}{\textbf{GCN}} & \multirow{2}{*}{$\checkmark$} & \textbf{LR-BCD} &                        +11.9 $\pm$ 4.3 &                         +7.6 $\pm$ 1.9 &                         +1.5 $\pm$ 5.4 &                         +0.7 $\pm$ 5.5 &                         +0.1 $\pm$ 3.5 &                         +8.2 $\pm$ 1.6 &                         +6.6 $\pm$ 3.4 &                         +3.5 $\pm$ 6.4 &                         +1.4 $\pm$ 5.6 \\
                   &              & \textbf{PR-BCD} &                         +1.8 $\pm$ 3.8 &                         +1.2 $\pm$ 4.3 &                         +2.8 $\pm$ 7.1 &                         +2.8 $\pm$ 7.1 &                         +3.5 $\pm$ 5.1 &                         +0.0 $\pm$ 3.6 &                         +1.3 $\pm$ 5.3 &                         +2.0 $\pm$ 6.9 &                         +2.8 $\pm$ 7.3 \\

\multirow{2}{*}{\textbf{GAT}} & \multirow{2}{*}{$\checkmark$} & \textbf{LR-BCD} &                         -5.5 $\pm$ 3.1 &                         +5.6 $\pm$ 1.4 &                        +11.7 $\pm$ 7.4 &                        +11.6 $\pm$ 7.7 &                        +10.2 $\pm$ 6.7 &                         +0.6 $\pm$ 0.7 &                         +8.3 $\pm$ 2.8 &                        +10.8 $\pm$ 5.8 &                        +11.5 $\pm$ 7.4 \\
                   &              & \textbf{PR-BCD} &                        -13.0 $\pm$ 1.6 &                         -3.3 $\pm$ 6.6 &                        +6.7 $\pm$ 10.4 &                        +7.4 $\pm$ 10.4 &                        +10.0 $\pm$ 8.6 &                         -8.1 $\pm$ 4.3 &                         -0.9 $\pm$ 7.4 &                         +2.1 $\pm$ 9.6 &                        +6.2 $\pm$ 10.5 \\

\multirow{2}{*}{\textbf{APPNP}} & \multirow{2}{*}{$\checkmark$} & \textbf{LR-BCD} &                        +28.4 $\pm$ 0.5 &                        +34.3 $\pm$ 2.9 &                        +39.1 $\pm$ 4.5 &                        +39.0 $\pm$ 4.3 &                        +39.4 $\pm$ 3.0 &                        +30.9 $\pm$ 1.8 &                        +36.0 $\pm$ 3.4 &                        +37.2 $\pm$ 4.8 &                        +39.0 $\pm$ 4.5 \\
                   &              & \textbf{PR-BCD} &                        +25.6 $\pm$ 1.9 &                        +32.7 $\pm$ 3.8 &                        +38.5 $\pm$ 5.2 &                        +38.6 $\pm$ 5.1 &                        +38.8 $\pm$ 3.9 &                        +29.8 $\pm$ 3.0 &                        +34.5 $\pm$ 4.2 &                        +36.3 $\pm$ 5.0 &                        +38.0 $\pm$ 5.3 \\

\multirow{2}{*}{\textbf{GPRGNN}} & \multirow{2}{*}{$\checkmark$} & \textbf{LR-BCD} &                        +41.2 $\pm$ 1.4 &                        +52.9 $\pm$ 1.8 &                        +64.9 $\pm$ 3.2 &                        +65.3 $\pm$ 3.0 &                        +68.7 $\pm$ 1.8 &                        +47.8 $\pm$ 0.8 &                        +56.5 $\pm$ 2.6 &                        +61.0 $\pm$ 3.7 &                        +64.4 $\pm$ 3.2 \\
                   &              & \textbf{PR-BCD} &  \textbf{+42.4 $\boldsymbol{\pm}$ 1.6} &  \textbf{+56.0 $\boldsymbol{\pm}$ 1.6} &  \textbf{+69.5 $\boldsymbol{\pm}$ 3.2} &  \textbf{+70.2 $\boldsymbol{\pm}$ 3.2} &  \textbf{+74.9 $\boldsymbol{\pm}$ 1.8} &  \textbf{+49.9 $\boldsymbol{\pm}$ 0.6} &  \textbf{+59.6 $\boldsymbol{\pm}$ 2.5} &  \textbf{+65.0 $\boldsymbol{\pm}$ 3.7} &  \textbf{+69.0 $\boldsymbol{\pm}$ 3.3} \\

\multirow{2}{*}{\textbf{ChebNetII}} & \multirow{2}{*}{$\checkmark$} & \textbf{LR-BCD} &                        +39.6 $\pm$ 1.9 &                        +51.4 $\pm$ 2.7 &                        +62.1 $\pm$ 5.7 &                        +62.8 $\pm$ 5.5 &                        +65.7 $\pm$ 3.5 &                        +45.7 $\pm$ 0.8 &                        +54.6 $\pm$ 3.9 &                        +58.7 $\pm$ 6.1 &                        +61.6 $\pm$ 5.8 \\
                   &              & \textbf{PR-BCD} &                        +36.7 $\pm$ 1.2 &                        +49.8 $\pm$ 3.8 &                        +62.2 $\pm$ 6.5 &                        +63.0 $\pm$ 6.4 &                        +66.2 $\pm$ 3.9 &                        +43.9 $\pm$ 1.5 &                        +53.2 $\pm$ 5.1 &                        +58.2 $\pm$ 6.9 &                        +61.7 $\pm$ 6.6 \\
\bottomrule
\end{tabular}}
\end{table}

\FloatBarrier
\section{Further Related Work} \label{app:related_work}

\textbf{Defenses on Graphs.} Next to adversarial training, previous works proposed defenses based on smoothing \citep{bojchevski2020sparsesmoothing}, preprocessing the graph structure \citep{entezari2020defending, wu2019adversarial, zhang2020gnnguard}, or using robust model structures \citep{geisler2020soft, zhu2019robust}. %
For an extended survey, we refer to \citet{GNNBook-ch8-gunnemann}.

\textbf{Attacks on Graphs.} Generally, one distinguishes evasion attacks, perturbing the graph after training \citep{nettack, PGD, prbcd} and poisoning attacks, perturbing the graph before training \citep{nettack, zügner2018adversarial}. 
Further, one differentiates global attacks perturbing multiple %
node predictions at once \citep{prbcd} 
from local (targeted) attacks perturbing a single node's prediction \citep{nettack}. Attacks can perturb the nodes attributes \citep{nettack}, the edge structure \citep{nettack, zügner2018adversarial, dy2018adversarial, PGD, prbcd}, or insert malicious nodes \citep{gia}. An adversarial attack should not change the true classes (unnoticeability) \citep{def_adversarial_example}. 
For certain applications, like combinatorial optimization, it might be known how the true label is affected by perturbation. For example, \citet{geisler_2022_combinatorial} study such adversarial attacks (\& training) for GNNs for combinatorial optimization. However, in the context of node classification, this relationship is fuzzier. For example, \citet{nettack} define an attack that preserves the global degree distribution, or \citet{overrobustness} analyzes semantic preservation in synthetic graphs and show that changing more edges than the degree of nodes is usually never unnoticeable. In this work, 
we focus on \emph{evasion} attacks on the edge structure, aiming at \emph{globally} attacking node predictions with realistic \emph{local constraints} enforcing unnoticeability.

\section{Limitations} \label{app:limitations}

This work focuses on adversarial robustness against structure perturbations for node classification. While this is the most studied and prevalent case in previous works (see \citet{GNNBook-ch8-gunnemann} and \Cref{tab:related_work}), there are many other interesting graph learning tasks or extended settings, out of scope for this work, for which robustness and adversarial training schemes could be of interest, as e.g., link prediction, node feature perturbations or graph injection. Conceptually, the adversarial training procedure and LR-BCD attack can also be applied to the aforementioned tasks as long as the model remains differentiable.

On another note, while we finally put into the hands of practitioners a global attack method aware of local constraints, there is no clear strategy on how to determine the optimal local budget.
Indeed, these have to be set application dependent by domain experts. For example, \citet{overrobustness} leverage knowledge about the data generating distribution to determine budgets that correspond to semantic-preserving perturbations. However, in most real-world applications, we lack knowledge about the data generating distribution. %
As a result, we also use the different interpretability aspects of robust diffusion to collect additional evidence and insights into the learned message passing  (see \Cref{sec:robust_diffusion} \& \ref{sec:results}).

In \Cref{sec:adv_train_local}, we report the asymptotic complexity of our LR-BCD attack. In \Cref{sec:results}, we detail the used hardware for the adversarial training experiments and report the computational cost in terms of runtime as well as memory on the arXiv graph, consisting of 170k nodes.

\end{document}